\newtheorem{theorem}{Theorem}
\newtheorem{lemma}{Lemma}
\newtheorem{proposition}{Proposition}
\newtheorem{assumption}{Assumption}
\begin{document}
\title{{Communication-Efficient} Hybrid Federated Learning for E-health with Horizontal and Vertical Data Partitioning}
\author{Chong~Yu,~\IEEEmembership{Member,~IEEE,}
        Shuaiqi~Shen,~\IEEEmembership{Member,~IEEE,}\\
        Shiqiang~Wang,~\IEEEmembership{Senior Member,~IEEE,}
        Kuan~Zhang,~\IEEEmembership{Member,~IEEE,}
        and~Hai~Zhao
\thanks{Chong~Yu is with the Department of Computer Science, University of Cincinnati, Cincinnati, OH 45221, USA. E-mail: {yuc5@ucmail.uc.edu}}
\thanks{Shuaiqi~Shen is with the Department of Electrical Engineering, University of Wisconsin Milwaukee, Milwaukee, WI 53211, USA. E-mail: {shen8@uwm.edu}}
\thanks{Shiqiang~Wang is with the IBM T. J. Watson Research Center, Yorktown Heights, NY 10598, USA. E-mail: {wangshiq@us.ibm.com}}
\thanks{Kuan~Zhang is with the Department of Electrical and Computer Engineering, University of Nebraska-Lincoln, Lincoln, NE 68588, USA. E-mail: {kuan.zhang@unl.edu}}
\thanks{Hai~Zhao is with the Department of Computer Science and Engineering, Northeastern University, Shenyang 110819, China. E-mail: {zhaoh@mail.neu.edu.cn}}}

\maketitle
\begin{abstract}
E-health allows smart devices and medical institutions to collaboratively collect patients' data, which is trained by Artificial Intelligence (AI) technologies to help doctors make diagnosis. By allowing multiple devices to train models collaboratively, federated learning is a promising solution to address the communication and privacy issues in e-health. However, applying federated learning in e-health faces many challenges. First, medical data is both horizontally and vertically partitioned. Since single Horizontal Federated Learning (HFL) or Vertical Federated Learning (VFL) techniques cannot deal with both types of data partitioning, directly applying them may consume excessive communication cost due to transmitting a part of raw data when requiring high modeling accuracy. Second, a naive combination of HFL and VFL has limitations including low training efficiency, unsound convergence analysis, and lack of parameter tuning strategies. In this paper, we provide a thorough study on an effective integration of HFL and VFL, to achieve communication efficiency and overcome the above limitations when data is both horizontally and vertically partitioned. Specifically, we propose a hybrid federated learning framework with one intermediate result exchange and two aggregation phases. Based on this framework, we develop a \underline{H}ybrid \underline{S}tochastic \underline{G}radient \underline{D}escent (HSGD) algorithm to train models. Then, we theoretically analyze the convergence upper bound of the proposed algorithm. Using the convergence results, we design adaptive strategies to adjust the training parameters and shrink the size of transmitted data. Experimental results validate that the proposed HSGD algorithm can achieve the desired accuracy while reducing communication cost, and they also verify the effectiveness of the adaptive strategies.
\end{abstract}

\begin{IEEEkeywords}
E-health, hybrid federated learning, communication efficiency, convergence analysis, adaptive strategies.
\end{IEEEkeywords}

\section{Introduction}
\label{sec:introduction}
\IEEEPARstart{E}{lectronic} healthcare (e-health) connects smart devices and healthcare providers via Internet-of-Things (IoT) technologies to offer intelligent health services. It can solve several problems in traditional medical systems, such as insufficient professional workforce, low efficiency of diagnosis, and heavy burden of population aging. Due to the ambitious prospects, {e-health} has become increasingly popular. Statistics show that about 87 million American residents experienced e-health services monthly in 2020, and the number is projected to steadily increase in the future~\cite{Mobile_Health_Statistics}. Meanwhile, the global e-health market reached a value of USD 62.4 billion in 2021 and is expected to grow about 12 times by 2028~\cite{Telehealth_Market}.

\begin{figure}[!t]
	\centering
	\includegraphics[width=\columnwidth]{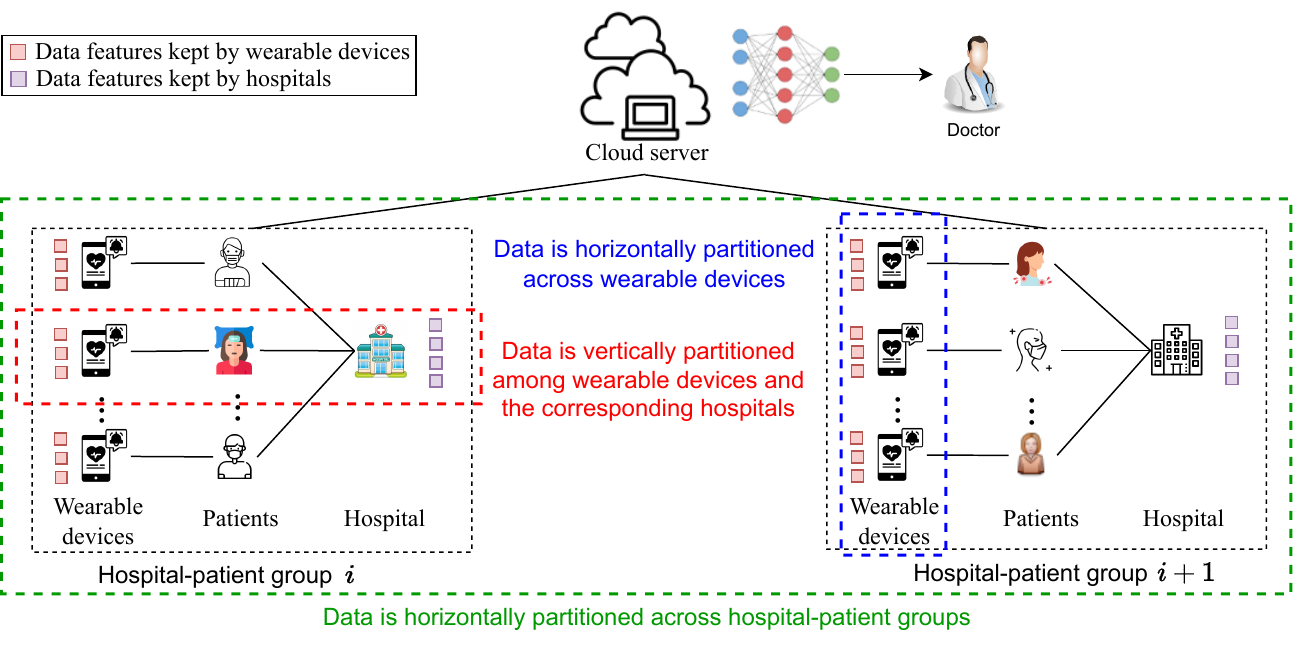}
	\captionsetup{font={scriptsize}}
    \caption{E-health architecture: e-health has a three-tier horizontal-vertical-horizontal data distribution structure.}
	\label{E_health_network_architecture}
\end{figure}

An e-health system consists of wearable devices, edge nodes, hospitals, and a cloud server, where components are interconnected via communication networks, as illustrated in Fig.~\ref{E_health_network_architecture}. Each patient is equipped with a wearable device and is only affiliated with a hospital, and each hospital caters to a large number of patients. Different from other mobile computing applications, \text{e-health} has unique data characteristics~\cite{Madhura2020e_health_FL}. First, patients can utilize wearable devices, such as smartphones and smartwatches, to collect daily health data, including body temperature, blood pressure, and heart rate~\cite{bayoumy2021smart}. Different wearable devices gather the same health indicators - body temperature, blood pressure, and heart rate for distinct patients. This indicates that the dataset over various wearable devices shares identical data features but differs in samples, i.e., data is horizontally partitioned across wearable devices (as shown in the blue dash rectangle in Fig.~1). Second, a patient not only use wearable devices to self-monitor~\cite{appelboom2014smart}, e.g., recording daily health data, but also visit hospitals for comprehensive physical examinations~\cite{oboler2002public}, e.g., collecting lab results. For each patient, its entire health record consists of daily health data and lab results, where these two kinds of data are disjoint. In other words, a wearable device and the corresponding hospital keep different data features for one patient, i.e., data is vertically partitioned among wearable devices and the corresponding hospitals (as shown in the red dashed rectangle in Fig.~1). Third, patients who visit the same hospital form a hospital-patient group. Two hospital-patient groups in different locations may have no overlap in patients, yet each patient within a group possesses their entire health data, demonstrating that data is horizontally partitioned across hospital-patient groups (as shown in the green dashed rectangle in Fig.~1)~\cite{Madhura2020e_health_FL}. Therefore, e-health has a three-tier horizontal-vertical-horizontal data distribution structure.

Although e-health has promising potential, it faces many challenges caused by its data characteristics. An entire health record of a patient consists of two parts, where simple vital signs such as blood sugar values are kept by the patient's wearable device, and the lab results such as X-Ray images are held by the corresponding hospital. Since these two parts of data are usually complementary, both of them are required to assess the patient's condition. However, transmitting the raw data stored in different locations causes two issues. On one hand, patients with chronic diseases, such as cardiovascular diseases, need to wear smart sensors for a long time to monitor their body conditions, and a large volume of vital sign data is generated continuously in real time. In this case, frequently transmitting patients’ raw data increases overhead. On the other hand, the raw data may include the patient's age, gender, and even identity. Transmission of raw data may leak patients' privacy. To reduce communication overhead and protect privacy, e-health requires \textit{federated learning}, which allows multiple devices to collaboratively train models without transmitting raw data.

According to data partitioning, existing federated learning techniques can be classified into two types. One type is  Horizontal Federated Learning (HFL), which enables each device to independently learn a local model based on its own data and allows the cloud server to generate a global model by aggregating local models~\cite{Globecom-Zhu,HFL1}. In e-health, partial data features are held by wearable devices, and the rest data features are stored in hospitals. The standalone HFL can only process the incomplete dataset on either wearable devices or hospitals. If HFL wants to train an accurate model in e-health, part of the raw data needs to be transmitted between wearable devices and hospitals to include all features. This inevitably increases the communication cost. Another type is Vertical Federated Learning (VFL), which coordinates multiple devices with complementary data to learn a model by communicating intermediate results~\cite{VFL,VFL1}. In e-health, patients who visit the same hospital form a group. Multiple sub-datasets with identical data features from different groups compose an entire dataset. VFL can only handle a single sub-dataset. If VFL wants to learn an accurate model, additional raw data transmission is required to combine all samples, resulting in a heavy communication burden and patients' information disclosure. In addition, if keeping original data locally in the above two cases, the communication cost is saved; but the model is inaccurate because of ignoring a portion of the data. A straightforward solution to guarantee modeling accuracy and reduce communication cost is combining HFL and VFL.

However, simply combining HFL and VFL is challenging in terms of convergence analysis, training efficiency, and parameter tuning strategies. First, HFL and VFL interact upon integration, making theoretical proof of convergence hard when combining the two. Convergence results can offer insights for optimizing the training process, such as determining the optimal communication frequency. Lack of such results poses a challenge in achieving optimal training performance. Second, when integrating HFL and VFL, the intermediate results used in VFL is obtained from HFL. HFL generally conducts multiple local updates between two communications, which may result in a lag in the intermediate results. Due to this lag, modeling efficiency is lower compared to global multi-class classification methods~\cite{Ahn2020CIFAR10}. Third, training efficiency is affected by several major configurations, such as the interval between two global aggregations, the frequency of local communications, and the learning rate. Adjusting these settings to optimize training performance is necessary but difficult because the complex effect of parameters on modeling performance is unknown.

In this paper, we propose a communication-efficient \textit{hybrid federated learning} approach that addresses the above challenges. The proposed hybrid federated learning considers a three-tier horizontal-vertical-horizontal structure and includes local aggregation, intermediate result exchange, and global aggregation phases. Our proposed approach can efficiently deal with horizontally and vertically partitioned medical data while reducing communication overhead. Furthermore, compared with existing joint federated learning techniques~\cite{Yu2022FL,Das2021CSFLjournal} with a two-tier vertical-horizontal structure, our proposed approach introduces an additional aggregation process to improve training efficiency. In addition, our proposed hybrid federated learning has a higher degree of freedom than two-tier joint federated learning to tune parameters for achieving high accuracy. In summary, our main contributions are as follows.

\begin{itemize}
	\item We propose a hybrid federated learning framework for \text{e-health} consisting of a horizontal-vertical-horizontal structure. To the best of our knowledge, this is the first hybrid federated learning framework with such a three-tier structure.
	Based on this framework, we develop a communication efficient \underline{H}ybrid \underline{S}tochastic \underline{G}radient \underline{D}escent~(HSGD) algorithm to train models when medical data is both horizontally and vertically partitioned. The proposed algorithm can save communication cost to achieve a desirable training requirement.
 
	\item We formulate the training process defined by the developed HSGD algorithm. Considering a minimal set of assumptions on loss functions, we theoretically analyze the global gradient upper bound. According to this upper bound, we derive the conditions that guarantee the convergence of the HSGD algorithm. The insights from these theoretical analyses shed light on how to improve the communication efficiency of hybrid federated learning.
 
	\item Analyzing the convergence results, we deduce how global and local aggregation intervals ($P$ and $Q$) and the learning rate $\eta$ affect the training results. Then we design three insightful adaptive strategies for guiding the setting of $Q$, $P$, and $\eta$ to reduce communication cost and improve accuracy. This differs our work from existing combinations of HFL and VFL as they did not provide any adaptive strategies.
 
	\item In addition to the above theoretical breakthrough, we conduct extensive experiments to validate the proposed HSGD algorithm and adaptive strategies.  Compared with four baselines, the proposed HSGD algorithm can achieve the desired accuracy, and save training time and communication overhead. In addition, the experimental results verify the effectiveness of adaptive strategies for lessening the communication burden.
\end{itemize}

The remainder of this paper is organized as follows. Section~\ref{sec:Related_Works} reviews related works. Section~\ref{sec:Hybrid_Federated_Learning} introduces the system model, hybrid federated learning framework, and problem formulation. Section~\ref{sec:Hybrid_Stochastic_Gradient_Descent_Algorithm} presents the proposed HSGD algorithm and Section~\ref{sec:Convergence_analysis} proves the convergence. Then, Section~\ref{sec:Strategies_for_Minimizing_Global_Loss_Function} presents the adaptive strategies. The experimental results are shown in Section~\ref{sec:Experimental_Evaluation} and conclusion is drawn in Section~\ref{sec:Conclusion}.

\section{Related Works}
\label{sec:Related_Works}
Federated learning, as a promising distributed computing paradigm, has been applied in various fields to achieve privacy-preserving learning. The existing federated learning techniques include HFL, VFL, and combinations of HFL and VFL, to deal with different data partitioning scenarios.

\textbf{HFL and its variations.} HFL can train models for the scenarios where data is horizontally partitioned. McMahan et al.~\cite{McMahan2017} first proposed the Federated Averaging (FedAvg) algorithm for HFL. In FedAvg, devices locally perform a fixed number of gradient descent updates, and the cloud server periodically aggregates local models. Considering resource constraints, many methods~\cite{Sun2021AFL,Nori2021FFL,Ma2021FL,Sattler2020Communication_Efficient_FL,Xu2021FL,Shen2021FL,Yang2020Communication_Efficient_FL, Jiang2023, Xu2022, Shah2023, Han2023Practical} based on FedAvg were proposed to achieve communication efficiency and fast convergence in the horizontal data partitioning scenario. For example, Jiang et al~\cite{Jiang2023} adjusted the model size during HFL to minimize overall training time. Xu et al.~\cite{Xu2022} proposed a ternary quantization algorithm to reduce communication cost in HFL. Additionally, Shah et al.~\cite{Shah2023} achieved efficient communication in HFL by applying compression techniques for both the server model and client models. Furthermore, practical and robust HFL was investigated in~\cite{Han2023Practical} to improve security and privacy. \textit{However, HFL and its variations face challenges when applied to e-health because they only consider horizontally partitioned data while cannot process vertically partitioned data. To keep modeling accuracy, such techniques need to transmit raw data for combining the data stored on devices and hospitals, but this requires transmitting a lot of raw data.}

\textbf{VFL and its variations.} In~\cite{VFL}, the concept of federated learning was extended to VFL when data is vertically partitioned. To coordinate multiple devices for learning and protecting privacy, many VFL algorithms were proposed~\cite{Zhu2021VFL,Das2021VFL,Gu2022Privacy,Zhang2022VFL,xu2021fedv,liu2022vertical,khan2022communication,pouriyeh2022secure}. In these algorithms, devices first perform forward propagation based on their sub-models and local data to obtain intermediate results. Then, devices with complementary data features for the same sample space exchange intermediate results and conduct backward propagation to update the sub-models. To further enhance the robustness of VFL, Castiglia et al.~\cite{Castiglia2023} proposed a flexible VFL to support heterogeneous and time-varying parties. Additionally, Gu et al.~\cite{Gu2022Privacy} designed an asynchronous federated stochastic gradient descent algorithm to improve communication efficiency while keeping data privacy in VFL. \textit{Although VFL and its variations show significant advantages, they are not efficient when handling horizontally and vertically partitioned medical data since these methods only focus on vertical data partitioning scenarios. The training accuracy of such techniques can be improved by transmitting raw data to integrate horizontally partitioned data, but this also increases the communication burden.}

\textbf{Combinations of HFL and VFL.} To enhance the scalability of federated learning, some existing works combine HFL and VFL for the scenarios where horizontally and vertically partitioned data co-exist. Yu et al.~\cite{Yu2022FL} proposed a Joint Federated Learning (JFL) algorithm. In JFL, a local model is composed of an edge model, a server model, and a combined model. VFL is conducted between edge nodes and the server to update local models, and HFL is performed among edge nodes to generate the global model. Nevertheless, this work did not provide any theoretical analysis results to prove the convergence of JFL. Recently, Das et al.~\cite{Das2021CSFLjournal} proposed a Tiered Decentralized Coordinate Descent (TDCD) algorithm, where devices with the same feature subsets in each silo conduct HFL and hubs of silos with the same sample spaces perform VFL. However, they did not study how to appropriately tune the learning parameters, resulting in inefficient training, e.g., 40\% to 60\% test accuracy on CIFAR-10~\cite{Das2021CSFLjournal}. Such low accuracy cannot satisfy practical requirements, especially in e-health which is extremely sensitive to accuracy. In addition, these works only considered two-tier structures. They did not consider a three-tier horizontal-vertical-horizontal structure. Although a similar framework was described in our preliminary work~\cite{Yu2022ICCFL}, it has several limitations: i) used a stronger assumption on the loss function, ii) lacked detailed convergence proofs, iii) contained a non-zero convergence bound, iv) did not give any parameter tune schemes, and v) only focused on accuracy without considering communication cost. In contrast, we address all aforementioned issues in this paper, achieving a minimal set of assumptions, solid convergence proofs, zero convergence bound, adaptive strategies on parameters, and the trade-off between training accuracy and communication cost, which are validated by extensive experiments.

In summary, existing federated learning works have the following limitations when applied to e-health. HFL or VFL alone transfers a massive volume of data to train medical data because the data is both horizontally and vertically partitioned. Although many works attempt to combine HFL and VFL to cope with the complex data partitioning while alleviating the communication burden, most of these works lack theoretical support. Furthermore, training efficiency is affected by several important parameters, such as the interval between two global aggregations, the frequency of intermediate result communications, and the learning rate. Despite some works utilizing a minimal set of assumptions to analyze convergence, they cannot achieve high training efficiency due to lacking modeling parameter tuning schemes. Therefore, we should effectively integrate HFL and VFL to balance modeling accuracy and communication cost, while providing sound convergence analysis and parameter tuning strategies.

\section{Hybrid Federated Learning}
\label{sec:Hybrid_Federated_Learning}

\subsection{E-health System Model}
\label{sec:E-health_network_system_model}
In e-health, a patient is equipped with one wearable device and belongs to only one hospital, and a hospital serves a large number of patients. A hospital and its patients constitute a hospital-patient group. Multiple hospital-patient groups and a remote server form an \text{e-health} system. We define the number of hospital-patient groups as $M$ and the number of patients in each hospital-patient group as $K_m, \forall m \in\{1, ..., M\}$. Since a wearable device merely collects information for one patient, we assume that each wearable device only has one sample.

\begin{table}[!t]
\ifCLASSOPTIONtwocolumn
\renewcommand{\arraystretch}{1}
\fi
\setlength{\extrarowheight}{2pt}
\centering
\caption{Summary of main notations.\label{Table1}}
\begin{tabular}{ l | l }
	\toprule
	\textbf{\text{Notations}}                                      &\textbf{\text{Description}}  \\
	\hline
	\hline
	\multirow{2}*{\shortstack{$\mathbf X_1^{(m,n)}$}}              & Feature subset of sample $n$ kept \\ &by the hospital in hospital-patient group $m$ \\
	\hline
	\multirow{2}*{\shortstack{$\mathbf X_2^{(m,n)}$}}              & Feature subset of sample $n$ held \\ & by wearable device $n$ in hospital-patient group $m$ \\
	\hline
	$\tilde{\bm\theta}$                                            & Global model parameter\\
	\hline
 	$\tilde{\bm\theta}_0$                                          & Global combined model parameter\\
	\hline
 	$\tilde{\bm\theta}_1$                                          & Global hospital side model parameter\\
	\hline
	$\tilde{\bm\theta}_2$                                          & Global device side model parameter\\ 
	\hline
	${\bm\theta}_{m}$                                              & Local model parameter of hospital-patient group $m$ \\ 
	\hline
	\multirow{2}*{\shortstack{${\bm\theta}_{0,m}$}}                & Local combined model parameter of \\ &hospital-patient group $m$\\
	\hline
	\multirow{2}*{\shortstack{${\bm\theta}_{1,m}$}}                & Local hospital side model parameter of \\ &hospital-patient group $m$\\
	\hline
	\multirow{2}*{\shortstack{${\bm\theta}_{2,m}$}}                & Local device side model parameter of \\ &hospital-patient group $m$\\
	\hline
	\multirow{2}*{\shortstack{${\bm\theta}_{2,m,n}$}}              & Model parameter on the $n$-th device of \\ &hospital-patient group $m$\\
	\hline
	$\mathcal{A}_m$                                                & Device subset of hospital-patient group $m$\\
	\hline
	$F(\tilde{\bm\theta})$                                         & Global loss function\\
	\hline
	$F_{m}(\tilde{\bm\theta})$                                     & Local loss function of hospital-patient group $m$\\
	\hline
	$P$                                                            & Global aggregation interval\\
	\hline
	\multirow{2}*{\shortstack{$Q$}}                                & Local aggregation interval and \\ &intermediate result exchange interval\\
	\hline
	\multirow{2}*{\shortstack{${\bm \zeta}_{1,m,n}^t$}}            & Embedding vector of sample $n$ \\ &on the hospital in hospital-patient group $m$\\
	\hline
	\multirow{2}*{\shortstack{${\bm \zeta}_{2,m,n}^t$}}            & Embedding vector of sample $n$ \\ &on the wearable device $n$ in hospital-patient group $m$\\
	\hline
    ${\xi}_m$                                                      &Mini-btach of group $m$\\
	\hline
    \multirow{2}*{\shortstack{$\bm g_{i,m}$}}                      &Local stochastic partial derivative of the loss function\\&with respect to ${\bm\theta}_{i,m}^t, \forall i=0,1$ in group $m$\\
    \hline
    \multirow{2}*{\shortstack{$\bm g_{2,m,n}$}}                    &Local stochastic partial derivative of the loss function\\&with respect to ${\bm\theta}_{2,m,n}^t$ in group $m$\\
	\hline
    $\eta$                                                         & learning rate\\
	\bottomrule
\end{tabular}
\label{table1}
\end{table}

For an \text{e-health} system, the entire dataset $D$, which comprises $K$ samples, is horizontally partitioned into $M$ sub-datasets
following the non-independent-and-identically-distributed (non-iid) data distribution~\cite{adnan2022federated}, denoted as $D=\cup_{m=1}^M D_m$. Each sub-dataset ${D_m=\{\mathbf X^{(m,n)}, y^{(m,n)}\}^{K_m}_{n=1}}$ held by hospital-patient group $m$ has $K_m$ samples, where $\mathbf X^{(m,n)}$ represents the feature vector of the $n$-th sample in hospital-patient group $m$, and $y^{(m,n)}$ denotes the corresponding target prediction. 
For each sample in sub-dataset ${D_m}$, partial data features are collected by hospitals based on patient visits and lab results, and the rest of data features are generated by wearable devices. To capture this characteristic, each feature vector $\mathbf X^{(m,n)}$ is further vertically partitioned into ${\mathbf X^{(m,n)}:= [(\mathbf X_1^{(m,n)})^\mathrm{T}, (\mathbf X_2^{(m,n)})^\mathrm{T}]^\mathrm{T}}$, where $\mathbf X_1^{(m,n)}$ denotes the feature subset saved by the hospital in hospital-patient group $m$, and~$\mathbf X_2^{(m,n)}$ represents the corresponding feature subset stored on wearable device $n$ in hospital-patient group~$m$. The hospital in group~$m$ preserves $\{\mathbf X_1^{(m,n)}\}_{n=1}^{K_m}$ because each hospital collects data for all its visiting patients, while wearable device $n$ in group $m$ only maintains $\mathbf X_2^{(m,n)}$ since each device only gathers information for one patient.  Note that both the hospital and wearable device $n$ in group $m$ can access the desired prediction~$y^{(m,n)}$. The main notations used in this paper are listed in {Table~\ref{table1}}.

\subsection{Hybrid Federated Learning Framework}
\label{sec:Hybrid_federated_learning_framework}
To apply federated learning in e-health, we propose a hybrid federated learning framework, as shown in Fig.~\ref{Hybrid_federated_learning_framework}. The framework consists of one intermediate result exchange and two aggregation phases.

\textbf{Intermediate result exchange phase.} Two reasons motivate the intermediate result exchange. First, in e-health, each local model refers to a model trained using the medical data of an individual patient. For each patient, the wearable device and its corresponding hospital possess a disjoint set of data features. Without sharing information between them, neither hospitals nor wearable devices can calculate partial derivatives for updating local models. Second, since raw data transmission may divulge patient privacy and cause high communication overhead, transmitting raw data between hospitals and wearable devices is infeasible. To calculate partial derivatives without disclosing private information of patients, we introduce an intermediate result exchange phase. During the model update process, hospitals and wearable devices communicate intermediate results which are determined by the raw data and trained models. Then each wearable device and its corresponding hospital can compute partial derivatives based on intermediate results to collaboratively update the local model. We will introduce the details of intermediate results in Section~\ref{sec:Problem_Formulation}.

\begin{figure}[!t]
	\centering
	\includegraphics[width=\columnwidth]{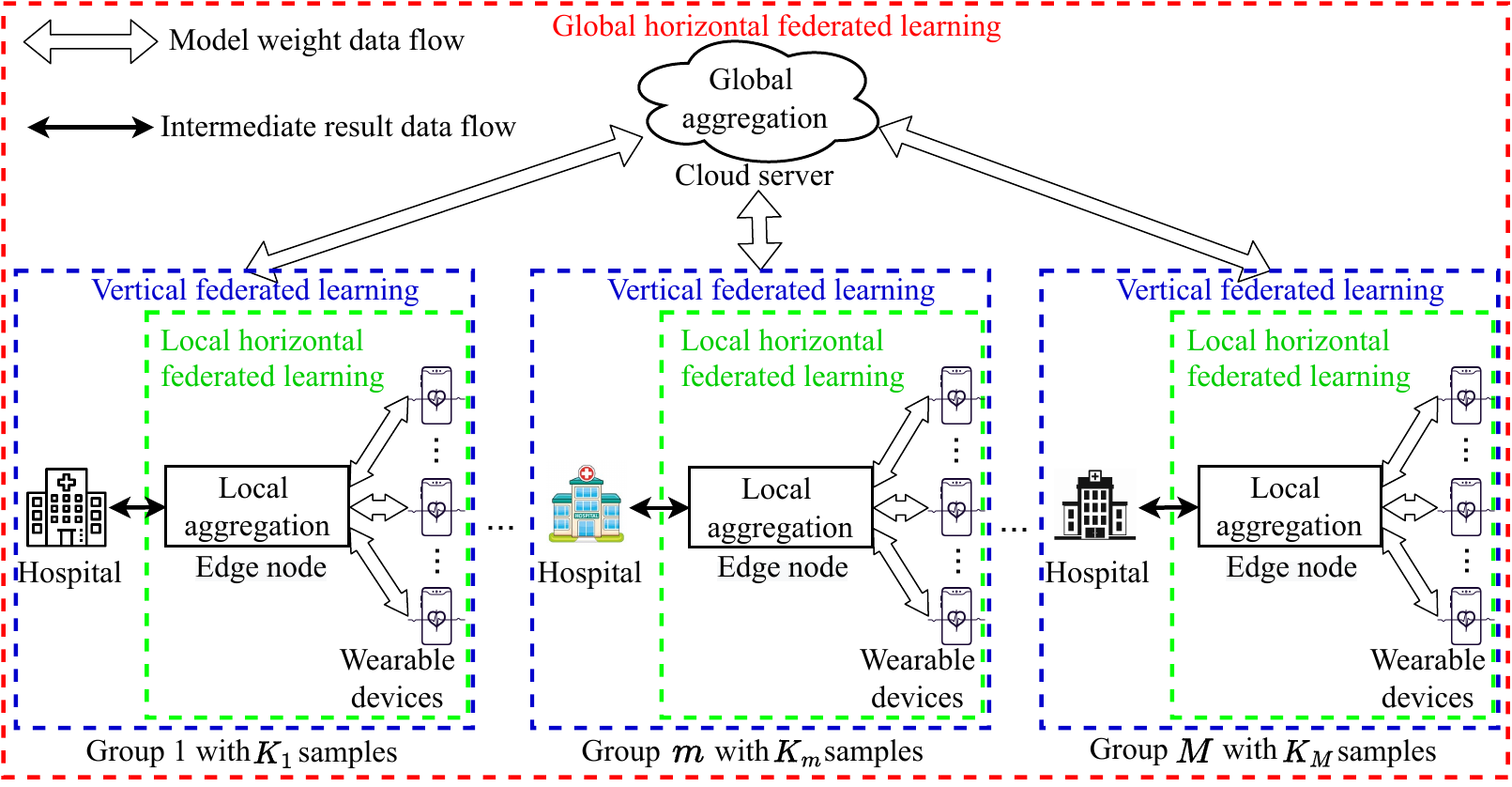} 
	\captionsetup{font={scriptsize}}
	\caption{Hybrid federated learning framework.}
	\label{Hybrid_federated_learning_framework}
\end{figure}

\textbf{Local aggregation phase.} Local aggregation is required to enhance training efficiency. Specifically, in each hospital-patient group, one hospital corresponds to multiple wearable devices. Allowing the hospital to learn a unique model for every wearable device may result in heavy computational burdens and increase training time. To improve training efficiency, the edge node, such as the base station, collects models trained on wearable devices within the same group to aggregate a uniform local device side model. The edge nodes may be placed in proximity to the devices that actively participate in the federated learning process. By introducing this local aggregation process, each hospital only trains one model that corresponds to the local device side model. Moreover, involving all wearable devices to participate in local aggregation may cause high communication overhead, while including a few wearable devices in this process can lead to overfitting due to limited samples. To address this, we randomly choose a subset of wearable devices to participate in the local aggregation~\cite{Zhang2022userselection}.

\textbf{Global aggregation phase.} Global aggregation is introduced to balance heterogeneous data and solve the issue of insufficient samples. Since the location, type, and reputation of hospitals affect the number of patients visiting hospitals, sub-datasets held by hospital-patient groups are diverse regarding data distribution. In addition, the number of samples for a hospital-patient group is limited due to privacy concerns and high data collection overhead. In these circumstances, local models trained on a single sub-dataset may overfit the local data~\cite{Shen2021FL}. To obtain a generalized model by leveraging the data collected by multiple hospital-patient groups, local models are sent to and aggregated on the cloud server.

\subsection{Problem Formulation}
\label{sec:Problem_Formulation}
For each hospital-patient group $m$, the local model consists of three sub-models: i) combined model; ii) hospital side model; iii) device side model. We define ${{\bm\theta_m}:=[({\bm\theta}_{0,m})^{\mathrm{T}},({\bm\theta}_{1,m})^{\mathrm{T}},({\bm\theta}_{2,m})^{\mathrm{T}}]^{\mathrm{T}}}$ to represent the local model parameter, where parameters ${\bm\theta}_{0,m}$, ${\bm\theta}_{1,m}$, and ${\bm\theta}_{2,m}$ are corresponding to combined model, hospital side model, and device side model, respectively. The hospital side model such as Convolutional Neural Network (CNN) or Long Short-Term Memory (LSTM) parameterized by ${\bm\theta}_{1,m}$ is used to train the data stored in hospitals. It maps $\mathbf X_1^{(m,n)}$ to ${\bm \zeta_{1,m,n}=h_1\left(\bm \theta_{1,m};\mathbf X_1^{(m,n)}\right)}$.  Similarly, device side model with parameter ${\bm\theta}_{2,m}$ maps $\mathbf X_2^{(m,n)}$ to ${\bm \zeta_{2,m,n}=h_2\left(\bm \theta_{2,m};\mathbf X_2^{(m,n)}\right)}$, which is utilized to train the data kept by wearable devices. Here, the outputs of the hospital side model and device side model, i.e., ${\bm \zeta}_{1,m,n}$ and ${\bm \zeta}_{2,m,n}$, are defined as intermediate results~\cite{Gu2022Privacy}. Then, the intermediate results are used to train a combined model with parameter ${\bm\theta}_{0,m}$, which can combine all data and output the final predictions. We assume that both the combined model and hospital side model are learned in the hospital.

As mentioned in Section~\ref{sec:Hybrid_federated_learning_framework}, the device side has an additional local aggregation phase compared to the hospital side. Based on this, we define
\begin{equation}
\begin{aligned}
{\bm\theta}_{2,m}=\frac{1}{|\mathcal{A}_m|}\sum_{n\in{\mathcal{A}_m}}{\bm\theta}_{2,m,n},
\label{LocalAggregation}
\end{aligned}
\end{equation}
where ${\bm\theta}_{2,m,n}$ is the model parameter learned on the \text{$n$-th} device in the $m$-th hospital-patient group, and the subset $\mathcal{A}_m$ consists of wearable devices that are selected to participate in local aggregation. We assume that a fixed proportion of wearable devices is chosen. It means that $|\mathcal{A}_m|=\alpha K_m,\forall m \in\{1, ..., M\}$, where $|\mathcal{A}_m|$ is the cardinality of subset $\mathcal{A}_m$.

The global model~${\tilde{\bm\theta}:=[(\tilde{\bm\theta}_0)^{\mathrm{T}}, (\tilde{\bm\theta}_1)^{\mathrm{T}},(\tilde{\bm\theta}_2)^{\mathrm{T}}]^{\mathrm{T}}}$ is the weighted average of local models, i.e.,
\begin{equation}
\begin{aligned}
&\tilde{\bm\theta}_i =\frac{1}{K}\sum_{m=1}^{M}K_m {\bm\theta}_{i,m}, \forall i=0,1,2.
\label{GlobalModel}	
\end{aligned}
\end{equation}
The loss on a sub-dataset $D_m$ with $K_m$ samples is
\begin{equation}
\begin{aligned}
&{F_{m}(\tilde{\bm\theta})}\!=\!\frac{1}{K_m}\!\sum_{n=1}^{K_m}f\left(\tilde{\bm\theta}_{0}, \tilde {\bm \zeta}_{1,m,n}, \tilde{\bm \zeta}_{2,m,n};y^{(m,n)}\right)\!+\!\sum_{i=1}^2r\!\left(\tilde{\bm\theta}_{i}\right),\\
&\text{with}\ \tilde{\bm \zeta}_{1,m,n}=h_1\left(\tilde{\bm\theta}_1;\mathbf X_1^{(m,n)}\right), \tilde{\bm \zeta}_{2,m,n}=h_2\left(\tilde{\bm\theta}_2;\mathbf X_2^{(m,n)}\right),
\label{LocalLossFunction}	
\end{aligned}
\end{equation}
where $f(\cdot)$ denotes loss function and $r(\cdot)$ represents the regularizer. The entire dataset is ${D=\cup_{m=1}^M D_m}$ which contains $K$ samples, and the global loss on dataset~$D$ is 
\begin{equation}
\begin{aligned}
&{F(\tilde{\bm\theta})}=\frac{1}{K}\sum_{m=1}^{M}K_m {F_{m}(\tilde{\bm\theta})}.
\label{GlobalLossFunction}	
\end{aligned}
\end{equation}
Our objective is to learn an optimal~$\tilde{\bm\theta}$ to minimize the global loss. The details of how to train the optimal global model are introduced in the next section.

\section{Hybrid Stochastic Gradient Descent Algorithm}
\label{sec:Hybrid_Stochastic_Gradient_Descent_Algorithm}
Based on the hybrid federated learning framework, we propose a HSGD algorithm for model training in e-health. The HSGD algorithm is shown as Algorithm~\ref{Algorithm1}.

In HSGD algorithm, we conduct global aggregation every $P$ iterations and local aggregation every $Q$ iterations. Each global aggregation includes $\Lambda$ local aggregations, i,e., $\frac{P}{Q}=\Lambda$, where $\Lambda$ is a positive integer. We define $t_0$ denotes the last iteration when the server aggregates the global model and $t_0^{\lambda}$ represents the last iteration to execute local aggregation. The relation between $t_0$ and $t_0^{\lambda}$ can be described as ${t_0^{\lambda}=t_0+\lambda Q, \lambda=\lfloor \frac{t-t_0}{Q}\rfloor}$. Then each global aggregation interval $[t_0,t_0+P-1]$ can be divided into multiple local aggregation intervals, i.e., $\cup_{\lambda=0}^{\Lambda-1}[t_0^{\lambda},t_0^{\lambda+1}-1]$ with $t_0^{0}=t_0$.

At the cloud server, an initial global model ${\tilde{\bm\theta}^{0}:=[(\tilde{\bm\theta}^0_0)^{\mathrm{T}}, (\tilde{\bm\theta}^0_1)^{\mathrm{T}},(\tilde{\bm\theta}^0_2)^{\mathrm{T}}]^{\mathrm{T}}}$ is generated. Model parameters $\tilde{\bm\theta}^0_0$ and $\tilde{\bm\theta}^0_1$ are transmitted to and updated in hospitals, and model parameter $\tilde{\bm\theta}^0_2$ is sent to and trained in wearable devices. In iteration 0, and every $P$-th iteration thereafter, the cloud server collects local models~${\bm\theta^{t}_m}, \forall m \in\{1, ..., M\}$. Then it aggregates a global model $\tilde{\bm\theta^{t}}$ by utilizing~\eqref{GlobalModel} (Line 4 of Algorithm~\ref{Algorithm1}). After that, the updated global model is transferred to hospitals and wearable devices again for iteratively training local models (Lines 5--9 of Algorithm~\ref{Algorithm1}).

\begin{algorithm}[t]
\caption{HSGD algorithm}
\label{Algorithm1}
\LinesNumbered
\KwIn {$P$, $Q$, $\eta$}
\KwOut {Global model $\tilde{\bm\theta}^{t}:=[(\tilde{\bm\theta}_0^{t})^{\mathrm{T}}, (\tilde{\bm\theta}_1^{t})^{\mathrm{T}},(\tilde{\bm\theta}_2^{t})^{\mathrm{T}}]^{\mathrm{T}}$}
Initialize ${\tilde{\bm\theta}^{0}:=[(\tilde{\bm\theta}^0_0)^{\mathrm{T}}, (\tilde{\bm\theta}^0_1)^{\mathrm{T}},(\tilde{\bm\theta}^0_2)^{\mathrm{T}}]^{\mathrm{T}}}$ \;
\For{$t=0,\ldots,T$}{   
    \If{$t\ (mod \ P)=0$} {
        The server computes $\tilde{\bm\theta}^t$ using~\eqref{GlobalModel}\;
        \For{$m=1,\ldots,M$ \textit{in parallel}} {
        ${\bm\theta}_{0,m}^t = \tilde{\bm\theta}_0^t$\;
        ${\bm\theta}_{1,m}^t = \tilde{\bm\theta}_1^t$\;
            \For{$n=1,\ldots,K_m$ \textit{in parallel}}{
                ${\bm\theta}_{2,m,n}^t = \tilde{\bm\theta}_2^t$\;
			}
        }
	}	 
	\If{$t\ (mod \ Q)=0$}{
		\For{$m=1,\ldots,M$ \textit{in parallel}}{
		    Edge nodes compute ${\bm\theta}_{2,m}^t$ utilizing~\eqref{LocalAggregation}\;
		    Each pair hospital and edge node agree on a subset $\mathcal{A}_m^{t_0^{\lambda}}$ and a corresponding mini-batch ${\xi}_m^{t_0^{\lambda}}$\;
		    \For{$n=1,\ldots,K_m$ \textit{in parallel}}{
		        ${\bm\theta}_{2,m,n}^t = {\bm\theta}_{2,m}^t$\;
		        \If{$n\in {\mathcal{A}_m^{t_0^{\lambda}}}$ \textit{in parallel}}{
		            Send ${\bm \zeta}_{2,m,n}^{t_0^{\lambda}}$ to edge nodes
		        }
		    }
		    Hospitals and edge nodes exchange intermediate results ${\bm\theta}^{t_0^{\lambda}}_{0,m}$, $\mathcal{Z}_{1,m}^{t_0^{\lambda}}$, and $\mathcal{Z}_{2,m}^{t_0^{\lambda}}$\;
		    \For{$n\in {\mathcal{A}_m^{t_0^{\lambda}}}$ \textit{in parallel}}{
		    Receive ${\bm\theta}^{t_0^{\lambda}}_{0,m}$ from edge nodes\;
		         Extract the information corresponding to its own samples from $\mathcal{Z}_{1,m}^{t_0^{\lambda}}$ \;
		    }
		}
	}
	\For{$m=1,\ldots,M$ \textit{in parallel}}{
	Update ${\bm\theta}_{0,m}^{t+1}$ using~\eqref{CombinedModelUpdate}\;
	Update ${\bm\theta}_{1,m}^{t+1}$ applying~\eqref{DeviceModelUpdate}\;
		\For{$n\in \mathcal{A}_m^{t_0^{\lambda}}$ \textit{in parallel}}{
		    Update ${\bm\theta}_{2,m,n}^{t+1}$ utilizing~\eqref{HospitalModelUpdate}\;
		}
	}
}
\end{algorithm}

At edge nodes, the local aggregation is conducted every~$Q$ iterations. In iteration 0, and every \text{$Q$-th} iteration thereafter, the edge node in hospital-patient group~$m$ calculates~${\bm\theta}_{2,m}$ based on~\eqref{LocalAggregation} (Line~12 of Algorithm~\ref{Algorithm1}). Then the edge node and hospital within hospital-patient group~$m$ agree on a wearable device subset $\mathcal{A}_m^{t_0^{\lambda}}$ and a corresponding mini-batch ${\xi}_m^{t_0^{\lambda}}, \lambda=\lfloor \frac{t-t_0}{Q}\rfloor$ (Line~13 of Algorithm~\ref{Algorithm1}), where $t_0$ denotes the last iteration when the server aggregates the global model, $t_0^{\lambda}$ is the last iteration that satisfies ${t_0^{\lambda}~(mod~Q)=0}$. At iteration $t_0^{\lambda}$, the aggregated model ${\bm\theta}_{2,m}$ is returned to all wearable devices within hospital-patient group $m$ for further model training (Line~15 of Algorithm~\ref{Algorithm1}). Note that the combined model~${\bm\theta}_{0,m}$ and hospital side model ${\bm\theta}_{1,m}$ do not need to be locally aggregated because each hospital-patient group only has one combined model and one hospital side model.

At wearable devices and their corresponding hospitals, intermediate results are communicated every $Q$ iterations for updating local models. From~\eqref{LocalLossFunction}, we know that ${\bm\theta}_{0,m}$, $\bm \zeta_{1,m,n}$, and $\bm \zeta_{2,m,n}$ are required to calculate the local loss. However, the wearable devices and hospitals cannot compute partial derivatives using their own data due to missing ${\bm\theta}_{0,m}$, $\bm \zeta_{1,m,n}$, or $\bm \zeta_{2,m,n}$. In this situation, the intermediate results and the combined model are shared between hospitals and wearable devices. The intermediate results $\bm \zeta_{1,m,n}$ and $\bm \zeta_{2,m,n}$ are calculated based on the raw data features but do not directly reveal the specific raw features themselves. Since hospitals hold ${\bm\theta}_{0,m}$ and $\bm \zeta_{1,m,n}$, they only request the intermediate result~$\bm \zeta_{2,m,n}$ from wearable devices. Conversely, wearable devices receive ${\bm\theta}_{0,m}$ and $\bm \zeta_{1,m,n}$ from their corresponding hospitals because they merely have $\bm \zeta_{2,m,n}$. To save communication cost, we use the same mini-batch to train local models during each local aggregation interval. It means that the intermediate results are calculated every $Q$ iterations and reused for the next $Q-1$ iterations. At iteration $t$, the intermediate result used on hospitals is ${\bm \zeta}^{t_0^{\lambda}}_{2,m,n}=h_2\left(\bm \theta^{t_0^{\lambda}}_{2,m};\mathbf X_2^{(m,n)}\right)$, while the intermediate results utilized on wearable devices are combined model ${\bm\theta}^{t_0^{\lambda}}_{0,m}$ and ${\bm \zeta}^{t_0^{\lambda}}_{1,m,n}=h_1\left(\bm \theta^{t_0^{\lambda}}_{1,m};\mathbf X_1^{(m,n)}\right)$.

\begin{figure}[!t]
	\centering
	\includegraphics[width=\columnwidth]{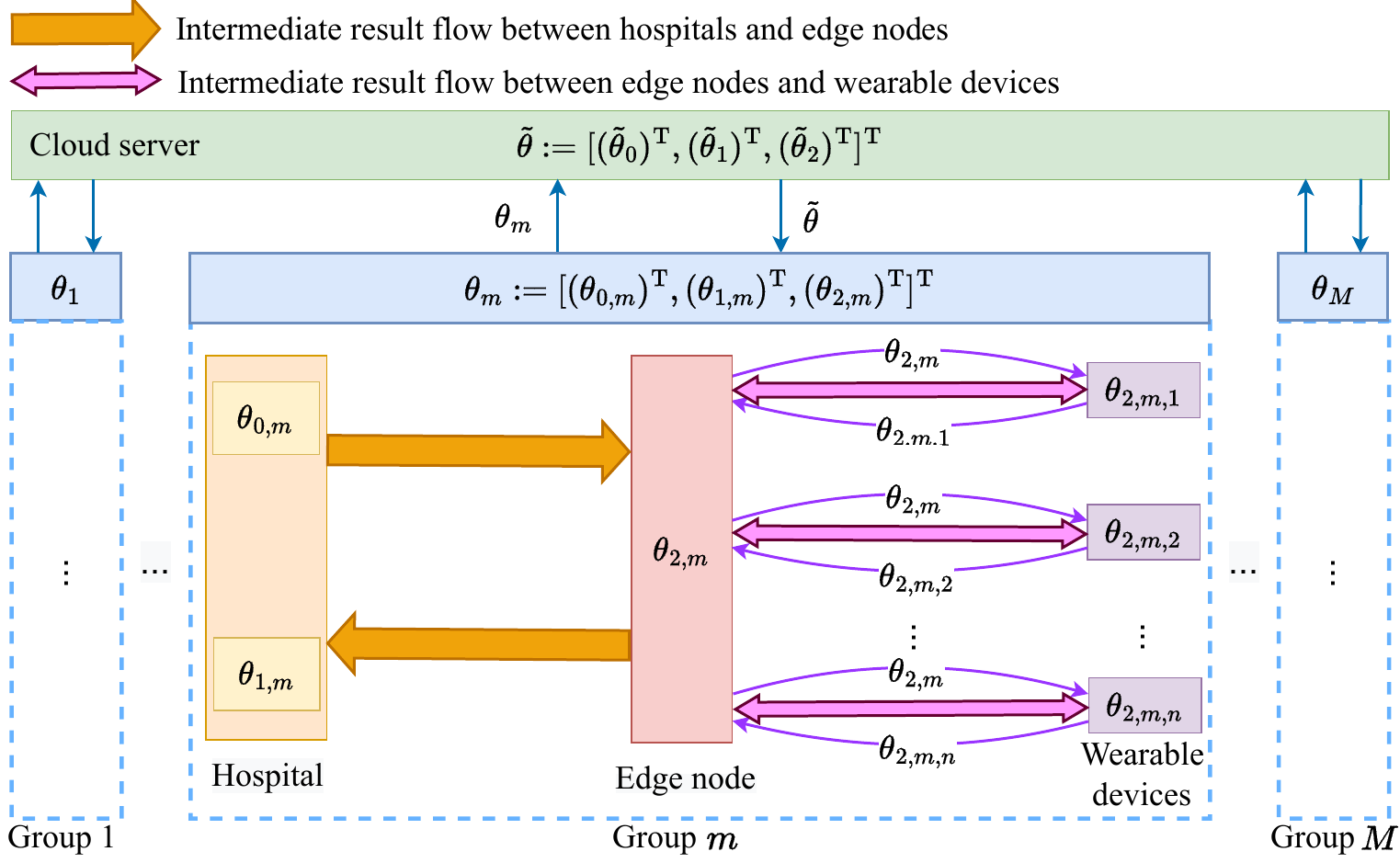} 
	\captionsetup{font={scriptsize}}
	\caption{HSGD algorithm implementation: the local model of hospital-patient group $m$ is ${{\bm\theta_m}:=[({\bm\theta}_{0,m})^{\mathrm{T}},({\bm\theta}_{1,m})^{\mathrm{T}},({\bm\theta}_{2,m})^{\mathrm{T}}]^{\mathrm{T}}}$, where ${\bm\theta}_{0,m}$ and ${\bm\theta}_{1,m}$ are trained on the hospital. Edge nodes not only aggregate models ${\bm\theta}_{2,m,n}, \forall n\in \mathcal{A}_m$ trained on wearable devices but also forward intermediate results for hospitals and wearable devices. The global model is generated on the cloud server by aggregating local models.}
	\label{HSGD_diagram}
    \vspace{-0.35cm}
\end{figure}

At iteration 0, and every $Q$-th iteration thereafter, hospitals and edge nodes generate the intermediate result for a single mini-batch ${\xi}_m^{t_0^{\lambda}}$. Each hospital $m$ calculates ${\mathcal{Z}_{1,m}^{t_0^{\lambda}}:=\{{\bm \zeta}_{1,m,n}^{t_0^{\lambda}}\}_{n\in {\xi}_m^{t_0^{\lambda}}}}$. Meanwhile, edge nodes receive ${\bm \zeta}^{t_0^{\lambda}}_{2,m,n}$ from wearable devices within the subset $\mathcal{A}^{t_0^{\lambda}}_m$ and stack them to form a intermediate result set $\mathcal{Z}_{2,m}^{t_0^{\lambda}}:=\{{\bm \zeta}_{2,m,n}^{t_0^{\lambda}}\}_{{n\in {\xi}_m^{t_0^{\lambda}}}}$. After that, edge nodes and their corresponding hospitals exchange ${\bm\theta}^{t_0^{\lambda}}_{0,m}$, $\mathcal{Z}_{1,m}^{t_0^{\lambda}}$, and $\mathcal{Z}_{2,m}^{t_0^{\lambda}}$ (Line~18 of Algorithm~\ref{Algorithm1}). For wearable device $n$ in subset ${\mathcal{A}_m^{t_0^{\lambda}}}$, it receives ${\bm\theta}^{t_0^{\lambda}}_{0,m}$ from edge nodes and extracts ${\bm \zeta}^{t_0^{\lambda}}_{1,m,n}$ corresponding to its own samples from $\mathcal{Z}_{1,m}^{t_0^{\lambda}}$ (Lines~19--21 of Algorithm~\ref{Algorithm1}).

At iteration $t$, the stochastic partial derivatives of local loss $F_m$ with respect to ${\bm\theta}_{i,m}^t$ is $\bm g_{i,m}({\bm\theta}_{0,m}^{t},\bm \zeta_{1,m,n}^{t}, {\bm \zeta}_{2,m,n}^{t_0^{\lambda}}; {\xi}_m^{t_0^{\lambda}})$ when ${i=0,1}$, which can be rewritten (with a slight abuse of notation) as a function of model parameters, i.e., $\bm g_{i,m}({\bm\theta}_{0,m}^{t},{\bm\theta}_{1,m}^{t}, {\bm\theta}_{2,m}^{t_0^{\lambda}}; {\xi}_m^{t_0^{\lambda}}), \forall i \in \{0,1\}$. Identically, the stochastic partial derivatives of $F_m$ with respect to ${\bm\theta}_{2,m,n}^t$ is $\bm g_{2,m,n}({\bm\theta}_{0,m}^{t_0^{\lambda}}, {\bm \zeta}_{1,m,n}^{t_0^{\lambda}}, \bm \zeta_{2,m,n}^{t}; {\xi}_{m,n}^{t_0^{\lambda}})$, which can be rewritten (with a slight abuse of notation) as $\bm g_{2,m,n}({\bm\theta}_{0,m}^{t_0^{\lambda}}, {\bm\theta}_{1,m}^{t_0^{\lambda}}, {\bm\theta}_{2,m,n}^{t}; {\xi}_{m,n}^{t_0^{\lambda}})$. Between two intermediate result exchange operations, hospitals and wearable devices execute $Q$ stochastic gradient steps to update models (Lines~\text{22--26} of Algorithm~\ref{Algorithm1}), i.e.,
\begin{align}
\label{CombinedModelUpdate}
{}&{\bm\theta}_{0,m}^{t+1} = {\bm\theta}_{0,m}^t - \eta \bm g_{0,m}({\bm\theta}_{0,m}^{t},{\bm\theta}_{1,m}^{t}, {\bm\theta}_{2,m}^{t_0^{\lambda}}; {\xi}_m^{t_0^{\lambda}}),\\
\label{DeviceModelUpdate}
{}&{\bm\theta}_{1,m}^{t+1} = {\bm\theta}_{1,m}^t - \eta \bm g_{1,m}({\bm\theta}_{0,m}^{t},{\bm\theta}_{1,m}^{t}, {\bm\theta}_{2,m}^{t_0^{\lambda}}; {\xi}_m^{t_0^{\lambda}}),\\
 \label{HospitalModelUpdate}	
{}&{\bm\theta}_{2,m,n}^{t+1} \!=\! {\bm\theta}_{2,m,n}^t \!- \!\eta \bm g_{2,m,n}({\bm\theta}_{0,m}^{t_0^{\lambda}}, {\bm\theta}_{1,m}^{t_0^{\lambda}}, {\bm\theta}_{2,m,n}^{t}; {\xi}_{m,n}^{t_0^{\lambda}}),
\end{align}
where $\eta$ represents the learning rate, which is set to be equal for hospitals and wearable devices. The training process is repeatedly conducted until convergence.  The diagram for HSGD algorithm implementation is shown in Fig.~\ref{HSGD_diagram}.

\textbf{Privacy}: In our proposed HSGD algorithm, both the local and global aggregation processes involve transmitting the model parameters, which are also shared in standard HFL. This ensures the same level of privacy preservation as HFL. Additionally, HSGD algorithm exchanges intermediate results, which are typically communicated in standard VFL, thereby providing the same privacy guarantees as VFL. Although HSGD algorithm does not transfer raw data during the training process, it may be vulnerable to certain attacks as the standard federated learning. Gradient attacks~\cite{geiping2020inverting} may infer the private information from the model parameters, and model inversion attacks~\cite{mahendran2015understanding} can potentially recover raw data based on the intermediate results. To address these concerns, various techniques such as homomorphic encryption~\cite{zhang2022homomorphic,zhang2020batchcrypt,hardy2017private} and differential privacy~\cite{wei2020federated,jiang2021privacy} have been applied in HFL or VFL settings to resist these attacks. Our proposed HSGD algorithm can also integrate with these secure mechanisms to protect patients' privacy and the integrity of their data.

\section{Convergence analysis}
\label{sec:Convergence_analysis}
In this section, we theoretically analyze the convergence of the proposed HSGD algorithm. We make the following assumptions~~\cite{Wang2019AFL,Zhang2022FLAssumptions,Suhas2020FLAssumptions}.

\begin{assumption}\label{Assumption1} The local gradient is Lipschitz continuous with constant $\rho$; further, the local stochastic partial derivatives are Lipschitz continuous with constant $\rho_i$, i.e., for  model parameters $\bm\theta$ and $\bm\upsilon$,
\begin{equation}
\begin{aligned}
\left\Vert\nabla F_m(\bm\theta)-\nabla F_m(\bm\upsilon)\right\Vert \leq \rho\left\Vert \bm\theta-\bm\upsilon\right\Vert,
\label{EqAssumption1_1}	
\end{aligned}
\end{equation}
\begin{equation}
\begin{aligned}
\left\Vert  \bm g_{i,m}(\bm\theta)- \bm g_{i,m}(\bm\upsilon)\right\Vert \leq \rho_i\left\Vert \bm\theta-\bm\upsilon\right\Vert, \forall i = 0, 1,
\label{EqAssumption1_2}	
\end{aligned}
\end{equation}
\begin{equation}
\begin{aligned}
\left\Vert  \bm g_{2,m,n}(\bm\theta)- \bm g_{2,m,n}(\bm\upsilon)\right\Vert \leq \rho_2\left\Vert \bm\theta-\bm\upsilon\right\Vert.
\label{EqAssumption1_3}	
\end{aligned}
\end{equation}
where ${\rho \geq \max_{0\leq i \leq 2}\ \rho_i}$.
\end{assumption}

\begin{assumption}\label{Assumption2} The local stochastic partial gradient is unbiased with bounded variance, i.e., 
\begin{equation}
\begin{aligned}
\mathbb{E}[\bm g_{i,m}(\bm\theta)\mid\bm\theta]= \nabla_{(i)} F(\bm\theta), \forall i=0,1,
\label{EqAssumption2_1}	
\end{aligned}
\end{equation}
\begin{equation}
\begin{aligned}
\mathbb{E}[\bm g_{2,m,n}(\bm\theta)\mid\bm\theta]= \nabla_{(2)} F(\bm\theta),
\label{EqAssumption2_2}	
\end{aligned}
\end{equation}
\begin{equation}
\begin{aligned}
\mathbb{E}\left[\left\Vert \bm g_{i,m}(\bm\theta)-\nabla_{(i)} F(\bm\theta)\right\Vert ^2 \mid \bm\theta\right]\leq \delta^2, \forall i=0,1,
\label{EqAssumption2_3}	
\end{aligned}
\end{equation}
\begin{equation}
\begin{aligned}
\mathbb{E}\left[\left\Vert \bm g_{2,m,n}(\bm\theta)-\nabla_{(2)} F(\bm\theta)\right\Vert ^2 \mid \bm\theta\right]\leq \delta^2.
\label{EqAssumption2_4}	
\end{aligned}
\end{equation}
\end{assumption}

The global model parameter $\tilde{\bm\theta}$ is observable only when global aggregation is conducted, i.e., $t~(mod~P)=0$, but to facilitate the analysis, we define
\begin{equation}
\begin{aligned}
\tilde{\bm\theta}^{t+1} = \tilde{\bm\theta}^t-\eta \bm G^t
\label{GlobalUpdate}	
\end{aligned}
\end{equation}
for all $t$, where ${\bm G^t}$ denotes the global gradient, i.e.,
\begin{equation}
\begin{aligned}
&{\bm G^t}\!=\![({\bm G^t}_{(0)})^{\mathrm{T}}, ({\bm G^t}_{(1)})^{\mathrm{T}},({\bm G^t}_{(2)})^{\mathrm{T}}]^{\mathrm{T}}\\
&\ \ \ \ \!\!=\!\!\left [\!\!\!\begin{array}{l}
\frac{1}{K}\sum_{m=1}^{M}\!K_m \bm g_{0,m}({\bm\theta}_{0,m}^{t},{\bm\theta}_{1,m}^{t}, {\bm\theta}_{2,m}^{t}; {\xi}_m^{t})\\
\frac{1}{K}\sum_{m=1}^{M}\!K_m \bm g_{1,m}({\bm\theta}_{0,m}^{t},{\bm\theta}_{1,m}^{t}, {\bm\theta}_{2,m}^{t}; {\xi}_m^{t})\\
\frac{1}{K}\sum_{m=1}^{M}\!K_m {\bm G^t}_{(2,m)}
\end{array}
\!\!\!\right ]\!\!,\\
&{\bm G^t}_{(2,m)}\! = \!\frac{1}{\vert \mathcal{A}_m^t \vert}\!\!\sum_{n\in{\mathcal{A}_m^t}}\!\!\!\bm g_{2,m,n}({\bm\theta}_{0,m}^{t}, {\bm\theta}_{1,m}^{t}, {\bm\theta}_{2,m,n}^{t}; {\xi}_{m,n}^{t}).
\label{GlobalGradient}	
\end{aligned}
\end{equation}

We present the main theoretical result of this paper. The result proves that our proposed algorithm can converge under certain conditions. Note that our convergence analysis only focuses on the iid data distribution, whereas non-iid data distribution is considered in our experiments. The proof of the theorem is deferred to Appendix~A.

\begin{theorem}\label{Theorem1}
Under Assumptions~\ref{Assumption1}--\ref{Assumption2}, when the learning rate $\eta$ satisfies $\eta\leq \frac{1}{8P\rho}$, the expected averaged squared gradient of $F$ over $T=RP$ is upper bounded by
\begin{equation}
\label{EqTheorem1}
\begin{aligned}
&\mathbb{E}\left[\frac{1}{R}\sum_{r=0}^{R-1} \left\Vert\nabla F(\tilde{\bm\theta}^{rP})\right\Vert^2\right]\\
&\leq \frac{4\left(F(\tilde{\bm\theta}^{0})-\mathbb{E}[F(\tilde{\bm\theta}^{T})]\right)}{\eta T} + 12P\rho\eta\delta^2 + 96Q^2\rho^2\eta^2 \delta^2.
\end{aligned}
\end{equation}
\end{theorem}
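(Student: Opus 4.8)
The plan is to run the standard smoothness-based descent argument on the virtual global iterate \eqref{GlobalUpdate}, while treating the hybrid scheme's two characteristic error sources separately: the horizontal drift of the local blocks away from their weighted average over a round of length $P$, and the vertical staleness incurred by reusing the exchanged intermediate results for $Q$ consecutive iterations in \eqref{CombinedModelUpdate}--\eqref{HospitalModelUpdate}. First I would apply the $\rho$-smoothness of $F$ implied by Assumption~\ref{Assumption1} to obtain, for every $t$,
\[
F(\tilde{\bm\theta}^{t+1}) \leq F(\tilde{\bm\theta}^{t}) - \eta\langle\nabla F(\tilde{\bm\theta}^t),\bm G^t\rangle + \tfrac{\rho\eta^2}{2}\|\bm G^t\|^2 .
\]
Taking conditional expectation and invoking the polarization identity $-\langle a,b\rangle = -\tfrac12\|a\|^2 - \tfrac12\|b\|^2 + \tfrac12\|a-b\|^2$ with $a = \nabla F(\tilde{\bm\theta}^t)$ and $b = \mathbb{E}[\bm G^t]$ isolates the productive term $-\tfrac{\eta}{2}\|\nabla F(\tilde{\bm\theta}^t)\|^2$, a non-positive term $-\tfrac{\eta}{2}\|\mathbb{E}[\bm G^t]\|^2$, and the gradient-mismatch term $\tfrac{\eta}{2}\|\nabla F(\tilde{\bm\theta}^t)-\mathbb{E}[\bm G^t]\|^2$ that becomes the object of the analysis.

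Next I would control $\|\bm G^t\|^2$ and the mismatch term. Writing $\mathbb{E}\|\bm G^t\|^2 = \|\mathbb{E}[\bm G^t]\|^2 + \mathbb{E}\|\bm G^t - \mathbb{E}[\bm G^t]\|^2$ and using the unbiasedness together with the uniform variance bound $\delta^2$ of Assumption~\ref{Assumption2} bounds the fluctuation by a $\delta^2$-type quantity; the $-\tfrac{\eta}{2}\|\mathbb{E}[\bm G^t]\|^2$ term from the previous step then absorbs the $\|\mathbb{E}[\bm G^t]\|^2$ contribution of the quadratic term once $\rho\eta\leq 1$. For the mismatch, the iid assumption makes each $\bm g_{i,m}$ an unbiased estimator of the matching block of $\nabla F$, so $\mathbb{E}[\bm G^t]$ is a $K_m$-weighted average of true partial gradients evaluated at the \emph{local and partially stale} arguments appearing in \eqref{CombinedModelUpdate}--\eqref{HospitalModelUpdate}; applying the triangle inequality and the Lipschitz estimate \eqref{EqAssumption1_1} reduces $\|\nabla F(\tilde{\bm\theta}^t)-\mathbb{E}[\bm G^t]\|$ to a weighted sum of two discrepancies, namely the horizontal drift $\|\bm\theta_{i,m}^t-\tilde{\bm\theta}_i^t\|$ and the stale-versus-current gaps such as $\|\bm\theta_{2,m}^t-\bm\theta_{2,m}^{t_0^{\lambda}}\|$ (and the symmetric gaps for $\bm\theta_{0,m},\bm\theta_{1,m}$).

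The technical heart, and the step I expect to be the main obstacle, is bounding these discrepancies, because the three blocks are coupled: the device-side update \eqref{HospitalModelUpdate} uses the stale hospital- and combined-side parameters while \eqref{CombinedModelUpdate}--\eqref{DeviceModelUpdate} use the stale device-side parameter, so the drift cannot be closed block-by-block. I would unroll the local updates back to the last global synchronization, where all blocks coincide with $\tilde{\bm\theta}^{t_0}$, expressing each discrepancy as a sum of at most $P$ stochastic gradient steps and then splitting each step into its mean and its zero-mean noise; Assumption~\ref{Assumption2} turns the noise into a $\delta^2$ contribution while Assumption~\ref{Assumption1} feeds the gradient magnitudes back into a self-referential inequality for the aggregated drift. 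The staleness gaps are handled identically but unrolled only to the last exchange $t_0^{\lambda}$, so they involve at most $Q$ steps and hence scale like $Q^2\eta^2\delta^2$, which is the source of the term $96Q^2\rho^2\eta^2\delta^2$ in \eqref{EqTheorem1}. The restriction $\eta\leq\frac{1}{8P\rho}$ is precisely what renders the self-referential coefficient strictly less than one, so the coupled recursion can be solved with explicit constants.

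Finally I would substitute the resulting drift and staleness bounds into the mismatch term, sum the per-iteration inequality over one global round, and telescope over the $R$ rounds so that $F(\tilde{\bm\theta}^0)-\mathbb{E}[F(\tilde{\bm\theta}^T)]$ appears. The stepsize condition keeps the coefficient of $\|\nabla F\|^2$ bounded below, which after rearranging and dividing yields the factor $4$ in the leading term $\tfrac{4(F(\tilde{\bm\theta}^0)-\mathbb{E}[F(\tilde{\bm\theta}^T)])}{\eta T}$; collecting the two residual variance contributions then produces the $12P\rho\eta\delta^2$ and $96Q^2\rho^2\eta^2\delta^2$ terms, completing \eqref{EqTheorem1}.
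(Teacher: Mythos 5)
Your overall contour (smoothness descent, separating horizontal drift from vertical staleness, a self-referential recursion closed by the stepsize condition, then telescoping) matches the spirit of the paper's argument, but your route differs structurally and one of the differences hides a genuine gap. The paper does not apply the descent lemma per iteration: it applies $\rho$-smoothness once per global round to the aggregate step $\eta\sum_{t=t_0}^{t_0^\Lambda-1}\bm G^{t}$, bounds the cross term with Young's inequality rather than the polarization identity, and---crucially---measures all drift in \emph{gradient} space via the Lipschitz continuity of the \emph{stochastic} partial derivatives in \eqref{EqAssumption1_2}--\eqref{EqAssumption1_3}, producing recursions on $\left\Vert\bm g_{i,m}(\bm\phi_{i,m}^{t})-\bm g_{i,m}(\bm\phi_{i,m}^{t_0^{\lambda}})\right\Vert^2$ (Lemmas~\ref{Lemma3}--\ref{Lemma4}) rather than on parameter discrepancies.

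The gap is in your mismatch step, where you assert that the iid assumption makes each $\bm g_{i,m}$ an unbiased estimator of the matching block of $\nabla F$ at the local, partially stale arguments of iteration $t$. This fails for $t_0^{\lambda}<t<t_0^{\lambda+1}$, because the algorithm reuses the mini-batch $\xi_m^{t_0^{\lambda}}$ for $Q$ consecutive iterations, so the argument $\bm\phi_{i,m}^{t}$ is itself a function of that mini-batch. Conditioning on the information available at time $t$ makes $\bm G^{t}$ deterministic, killing the variance term you plan to extract; conditioning at $t_0^{\lambda}$ makes $\mathbb{E}^{t_0^{\lambda}}[\bm g_{i,m}(\bm\phi_{i,m}^{t})]$ unequal to $\nabla_{(i)}F$ evaluated at any point, so the reduction of the mismatch to a Lipschitz constant times a parameter discrepancy does not go through. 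The paper explicitly flags this loss of unbiasedness and circumvents it by invoking Assumption~\ref{Assumption2} only at the refresh points $t_0^{\lambda}$ and by comparing $\bm g_{i,m}$ at the current versus refresh-point arguments \emph{pathwise} under the same sample, which is exactly where the stochastic-gradient Lipschitz conditions \eqref{EqAssumption1_2}--\eqref{EqAssumption1_3} are needed; your plan uses only the smoothness of $\nabla F$ and therefore cannot close this step. A secondary issue: a per-iteration descent inequality naturally yields a bound on the average of $\left\Vert\nabla F(\tilde{\bm\theta}^{t})\right\Vert^2$ over all $T$ iterates, whereas the theorem bounds the average over the $R$ aggregation points only; restricting the sum to those points without an additional argument costs a factor of $P$ in the leading term, which the paper's round-level descent inequality avoids by producing $-\frac{\eta P}{4}\left\Vert\nabla F(\tilde{\bm\theta}^{t_0})\right\Vert^2$ directly.
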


Theorem~\ref{Theorem1} shows that $\mathbb{E}\left[\frac{1}{R}\sum_{r=0}^{R-1} \left\Vert\nabla F(\tilde{\bm\theta}^{rP})\right\Vert^2\right]$ converges to zero when choosing $\eta=\Theta\left(\frac{1}{\sqrt{T}}\right)$. Based on this, we can conclude that the proposed HSGD algorithm converges as the total number of iteration $T$ gets large.

\section{Adaptive Strategies for Balancing Convergence Bound and Communication Cost}
\label{sec:Strategies_for_Minimizing_Global_Loss_Function}

We theoretically analyze the influence of several important training parameters for the HSGD algorithm on convergence upper bound when considering communication cost. Insightful strategies are given based on the analysis and validated by experimental results.

\subsection{Adaptive Strategy 1 for Determining the Relation of Global and Local Aggregation Intervals}
When considering communication cost, we first analyze how the difference between global aggregation interval $P$ and local aggregation interval $Q$ affects convergence upper bound. Then an adaptive strategy to determine the relation between $P$ and $Q$ is provided for minimizing communication cost while reaching expected upper bound.
\begin{proposition}\label{Proposition1}\renewcommand{\qedsymbol}{}
For the HSGD algorithm, communication cost increases with the ratio of $\frac{P}{Q}$ when guaranteeing the expected convergence bound, and the minimum communication cost can be achieved by setting $P=Q$ in certain cases.
\end{proposition}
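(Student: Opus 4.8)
The plan is to make the communication cost explicit as a function of the aggregation intervals, then use Theorem~\ref{Theorem1} to pin down the number of iterations needed to reach a prescribed convergence target, and finally substitute that requirement back into the cost to expose its dependence on $\frac{P}{Q}$. For the cost model I would count the two communication events over a horizon of $T$ iterations: global aggregation fires once every $P$ iterations and hence occurs $\frac{T}{P}$ times, each costing a fixed amount $C_G$ for the parameter exchange with the cloud; local aggregation together with the intermediate-result exchange fires once every $Q$ iterations and occurs $\frac{T}{Q}$ times, each costing a fixed amount $C_L$ for the within-group transmissions. The total communication cost is therefore $\mathcal{C}=\frac{T}{P}C_G+\frac{T}{Q}C_L$.

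Next I would fix a target accuracy $\epsilon$ for the left-hand side of~\eqref{EqTheorem1} and find the smallest $T$ that guarantees it. Taking $\eta=\Theta\!\left(\frac{1}{\sqrt{T}}\right)$ as indicated after Theorem~\ref{Theorem1}, the third term $96Q^2\rho^2\eta^2\delta^2=O\!\left(\frac{1}{T}\right)$ is dominated by the first two, which are both $O\!\left(\frac{1}{\sqrt{T}}\right)$. Writing $\eta=\frac{a}{\sqrt{T}}$ and bounding the numerator by the constant $\Delta:=F(\tilde{\bm\theta}^0)-F^\star$ with $F^\star$ the infimum of $F$, I would minimize $\frac{4\Delta}{a\sqrt{T}}+\frac{12aP\rho\delta^2}{\sqrt{T}}$ over $a$ by the arithmetic-geometric-mean inequality; the balancing choice equalizes the two terms and gives $T=\Theta\!\left(\frac{P\rho\delta^2\Delta}{\epsilon^2}\right)$, so $T$ scales linearly in $P$. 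I would also check that this balancing $\eta$ respects the admissibility constraint $\eta\le\frac{1}{8P\rho}$, which reduces to a mild condition on the target (roughly $\epsilon\lesssim\delta^2$).

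Substituting $T=\Theta(P)$ into $\mathcal{C}$ finishes the argument. The number of global rounds $\frac{T}{P}$ is then independent of $P$, so the global contribution $\frac{T}{P}C_G$ is a constant $c\,C_G$; the number of local rounds equals $\frac{T}{Q}=\frac{T}{P}\cdot\frac{P}{Q}=c\,\Lambda$ with $\Lambda=\frac{P}{Q}$, so the local contribution is $c\,\Lambda\,C_L$. Hence $\mathcal{C}=c\,C_G+c\,\Lambda\,C_L$ is linear and strictly increasing in the ratio $\frac{P}{Q}$ under a fixed convergence guarantee, which proves the first claim. Because $\Lambda$ is a positive integer and therefore at least one, $\mathcal{C}$ is minimized at $\Lambda=1$, i.e. $P=Q$, which proves the second.

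I expect the main obstacle to be making the scaling $T=\Theta(P)$ rigorous while all three terms of~\eqref{EqTheorem1} are simultaneously present, since the third term couples to $Q$ rather than $P$ and the constraint $\eta\le\frac{1}{8P\rho}$ may become active. This is exactly where the qualifier ``in certain cases'' enters: the clean linear scaling, and hence both the monotonicity in $\frac{P}{Q}$ and the optimality of $P=Q$, hold in the regime where the $O(1/T)$ term is negligible and the balancing learning rate is feasible (sufficiently large $T$ and a target $\epsilon$ not too small relative to $\delta^2$); outside this regime the dependence of $T$ on $P$ is no longer exactly linear and the minimizer need not be $P=Q$.
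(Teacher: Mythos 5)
Your proposal is correct, and it reaches both claims of the proposition, but it takes a genuinely different route from the paper. The paper holds $T$, $\eta$, and $P$ fixed, writes the target $\Gamma(P,Q)\leq\Xi$ with $Q=P/\Lambda$, and reads it as a feasibility constraint that \emph{lower-bounds} $\Lambda$ (since the $96(P/\Lambda)^2\rho^2\eta^2\delta^2$ term shrinks as $\Lambda$ grows); it then observes that the per-horizon cost $C(P,Q)$ is increasing in $\Lambda$ at fixed $P$ and $T$, so the optimum sits at the smallest feasible $\Lambda$, which equals $1$ precisely when $\eta$ is chosen so that the derived lower bound $\frac{4\sqrt{6}P\rho\eta\delta}{\sqrt{\Xi-\cdots}}$ does not exceed $1$ --- that is the paper's ``certain cases.'' You instead fix the accuracy target $\epsilon$, optimize $\eta$ by balancing the first two terms of~\eqref{EqTheorem1}, and let $T$ adapt, obtaining $T=\Theta(P)$; the cost then collapses to $cC_G+c\Lambda C_L$ with the number of global rounds constant, which makes the monotonicity in $\Lambda$ and the integer minimum at $\Lambda=1$ immediate. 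Your version buys a cleaner quantitative picture and an explicit regime ($\epsilon\lesssim\delta^2$, third term negligible, balancing $\eta$ admissible) in which the conclusion holds, at the price of an asymptotic dominant-term argument; the paper's version keeps all three terms of the exact bound and localizes the ``in certain cases'' qualifier entirely in the choice of $\eta$, but says less about how $T$ itself must scale with $P$. The one point to watch in your argument --- which you already flag --- is that the discarded term $96Q^2\rho^2\eta^2\delta^2$ couples to $Q$, so the claim that $T$ depends on $P$ but not on $Q$ is only valid in the regime you identify; with your balancing $\eta$ that term evaluates to $\Theta(\epsilon^2/(\Lambda^2\delta^2))$, which is indeed second order when $\epsilon\ll\delta^2$, so the caveat is consistent with the proposition's own hedging.
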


\begin{proof}
We use $\Gamma(P,Q)$ to represent the upper bound (i.e., right-hand side) in~\eqref{EqTheorem1}. The expected convergence upper bound is defined as $\Xi$. To achieve this expected bound, we have $\Gamma(P,Q)\leq \Xi$. Substituting $Q=\frac{P}{\Lambda}$ into it, we get 
\begin{equation}
\label{Proposition1Proof1}
\begin{aligned}
\frac{4\left(F(\tilde{\bm\theta}^{0})-\mathbb{E}[F(\tilde{\bm\theta}^{T})]\right)}{\eta T} + 12P\rho\eta\delta^2 + 96\left(\frac{P}{\Lambda}\right)^2\rho^2\eta^2 \delta^2 \leq \Xi.
\end{aligned}
\end{equation}
From~\eqref{Proposition1Proof1}, we obtain the relation between $P$ and $\Lambda$, i.e., ${\Lambda \geq \frac{4\sqrt{6}P\rho\eta\delta}{\sqrt{\Xi-\frac{4\left(F(\tilde{\bm\theta}^{0})-\mathbb{E}[F(\tilde{\bm\theta}^{T})]\right)}{\eta T}-12P\rho\eta\delta^2}}}$.

The sizes of hospital side model parameter, device side model parameter, and combined model parameter are represented by $|{\bm\theta}_{1}|$, $|{\bm\theta}_{2}|$, and $|{\bm\theta}_{0}|$. Furthermore, the number of selected devices in each group is denoted by $|\mathcal{A}|$. We also define $|\mathcal{Z}_1^{t_0^{\lambda}}|$ to represent the size of intermediate results transmitted from hospitals to devices and $|\mathcal{Z}_2^{t_0^{\lambda}}|$ to denote the size of intermediate results transmitted from devices to hospitals. The total communication cost during $T$ iterations is related to the global aggregation interval $P$ and local aggregation interval $Q$ and can be represented by ${C(P,Q)=\left(\frac{|{\bm\theta}_{1}|}{P}+\frac{|\mathcal{A}|{|\bm\theta}_{2}|+|{\bm\theta}_{0}|+|\mathcal{Z}_1^{t_0^{\lambda}}|+|\mathcal{Z}_2^{t_0^{\lambda}}|}{Q}\right)MT}$, where $M$ is the number of hospital-patient groups. According to the relation between $P$ and $Q$, we have 
\begin{equation}
\label{Proposition1Proof2}
\begin{aligned}
C(P,Q)\!\!=\!\!\left(\!\frac{|{\bm\theta}_{1}|}{P}+\frac{\Lambda\left(|\mathcal{A}|{|\bm\theta}_{2}|+|{\bm\theta}_{0}|+|\mathcal{Z}_1^{t_0^{\lambda}}|+|\mathcal{Z}_2^{t_0^{\lambda}}|\right)}{P}\!\right)\!\!MT
\end{aligned}
\end{equation}
From~\eqref{Proposition1Proof2}, we observe that the communication cost increases with $\Lambda$ when $P$ is fixed.

Based on the findings from~\eqref{Proposition1Proof1} and~\eqref{Proposition1Proof2}, it is evident that the value of $\Lambda$ is lower bounded, and the communication cost is positively correlated with $\Lambda$. Consequently, a smaller value of $\Lambda$ can contribute to reducing communication costs, indicating a narrower gap between $P$ and $Q$. The minimum communication cost is achieved at ${\Lambda = \frac{4\sqrt{6}P\rho\eta\delta}{\sqrt{\Xi-\frac{4\left(F(\tilde{\bm\theta}^{0})-\mathbb{E}[F(\tilde{\bm\theta}^{T})]\right)}{\eta T}-12P\rho\eta\delta^2}}}$.
When carefully setting $\eta$ to satisfy ${1\geq\frac{4\sqrt{6}P\rho\eta\delta}{\sqrt{\Xi-\frac{4\left(F(\tilde{\bm\theta}^{0})-\mathbb{E}[F(\tilde{\bm\theta}^{T})]\right)}{\eta T}-12P\rho\eta\delta^2}}}$, the optimal $\Lambda$ equals $1$, i.e., $P=Q$.
\end{proof}

\textit{Adaptive strategy 1}: According to Proposition~\ref{Proposition1}, we propose an adaptive strategy 1, which gives the effect of the difference between $P$ and $Q$ on the training performance. Specifically, to minimize the communication cost for achieving a target training requirement, the global aggregation interval $P$ and local aggregation interval $Q$ should be set to the same with carefully selected $\eta$.

\subsection{Adaptive Strategy 2 for Optimizing Global and Local Aggregation Intervals}
We next analyze how local aggregation interval $Q$ affects the convergence upper bound~\eqref{EqTheorem1} and communication cost when Proposition~\ref{Proposition1} is applied. Based on the analysis, we propose an adaptive strategy to optimize $P$ and $Q$ for achieving the trade-off between convergence bound and communication cost.

\begin{proposition}\label{Proposition2}
For the HSGD algorithm with $P\!\!=\!\!Q$, the composite indicator of convergence bound~\eqref{EqTheorem1} and communication cost decreases first and then increases with $Q$, and the extreme point is ${Q=\sqrt{\frac{F(\tilde{\bm\theta}^{0})-\mathbb{E}[F(\tilde{\bm\theta}^{T})]}{24\rho^2\eta^2\delta^2T}}}$. The \text{trade-off} of the upper bound and communication cost is achieved by choosing ${P=Q=\sqrt{\frac{F(\tilde{\bm\theta}^{0})-\mathbb{E}[F(\tilde{\bm\theta}^{T})]}{24\rho^2\eta^2\delta^2T}}}$.
\end{proposition}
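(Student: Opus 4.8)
The plan is to scalarize the two quantities studied in Proposition~\ref{Proposition1} into a single function of $Q$ and then minimize it. Since Proposition~\ref{Proposition1} already fixes $P=Q$ as communication-optimal, I would first specialize the convergence bound~\eqref{EqTheorem1} and the cost~\eqref{Proposition1Proof2} to this regime. Substituting $P=Q$ into~\eqref{EqTheorem1} gives
\[
\Gamma(Q,Q)=\frac{4\left(F(\tilde{\bm\theta}^{0})-\mathbb{E}[F(\tilde{\bm\theta}^{T})]\right)}{\eta T}+12Q\rho\eta\delta^2+96Q^2\rho^2\eta^2\delta^2,
\]
while~\eqref{Proposition1Proof2} with $\Lambda=1$ collapses to $C(Q,Q)=B_0MT/Q$, where $B_0:=|{\bm\theta}_{1}|+|\mathcal{A}||{\bm\theta}_{2}|+|{\bm\theta}_{0}|+|\mathcal{Z}_1^{t_0^{\lambda}}|+|\mathcal{Z}_2^{t_0^{\lambda}}|$ is the fixed per-round payload. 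The key qualitative facts are that $\Gamma(Q,Q)$ is increasing in $Q$ (fewer, coarser communications hurt accuracy) whereas $C(Q,Q)$ is decreasing in $Q$ (coarser communications cost less), so a genuine trade-off exists.

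Next I would form the composite indicator as the product $J(Q):=\Gamma(Q,Q)\,C(Q,Q)$ and expand it. Writing the three bound terms as $a$, $bQ$, $cQ^2$, multiplication by $B_0MT/Q$ yields
\[
J(Q)=B_0MT\left(\frac{a}{Q}+b+cQ\right),
\]
which is the structural heart of the argument: the linear-in-$Q$ term of the bound is converted into a constant and therefore drops out of the optimization, while the progress term and the quadratic drift term become the reciprocal and linear parts of a convex profile. Consequently $J$ is strictly convex on $Q>0$, diverges at both ends, and hence decreases and then increases exactly as claimed. The stationarity condition $J'(Q)=B_0MT(c-a/Q^2)=0$ — equivalently AM--GM applied to $a/Q+cQ\ge 2\sqrt{ac}$ — gives the unique minimizer $Q^\star=\sqrt{a/c}$, and $J''>0$ certifies it as a minimum. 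Since $P=Q$, the same $Q^\star$ determines $P$.

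The final step is to substitute the coefficients $a$ and $c$ back and check that $\sqrt{a/c}$ reproduces ${Q=\sqrt{\frac{F(\tilde{\bm\theta}^{0})-\mathbb{E}[F(\tilde{\bm\theta}^{T})]}{24\rho^2\eta^2\delta^2T}}}$. I expect the main obstacle to lie precisely in this bookkeeping: the choice of scalarization and the normalization of the progress term control the powers of $\eta$ in the answer. A naive product of~\eqref{EqTheorem1} with $B_0MT/Q$ retains the factor $1/\eta$ inside $a$, which would give $Q^\star\propto\eta^{-3/2}$ rather than the stated $\eta^{-1}$; matching the proposition therefore requires weighting the progress term against the communication cost so that its $1/\eta$ is absorbed (equivalently, balancing the per-round progress $\eta\cdot a$ against the quadratic drift term). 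I would make this weighting explicit first, since the clean closed form for $Q^\star$ hinges on it; once the composite is fixed, the convexity and first-order argument above are routine. A secondary check is that the extremum remains admissible in practice, so in the algorithm $Q^\star$ would be rounded while preserving $P/Q=\Lambda\in\mathbb{Z}_{>0}$.
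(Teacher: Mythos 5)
Your proposal is essentially the paper's own proof: with $P=Q$ it forms the product $\Gamma(Q)C(Q)$, notes that the $12Q\rho\eta\delta^2$ term of the bound is cancelled by the $1/Q$ in the cost and becomes a constant, and locates the unique minimizer of the resulting convex $a/Q+b+cQ$ profile via the first-order condition (the paper does the sign analysis of the derivative in~\eqref{Proposition2Proof1}; your AM--GM/convexity phrasing is the same argument). Your worry about the powers of $\eta$ is well founded, and the discrepancy sits in the paper rather than in your argument: the derivative in~\eqref{Proposition2Proof1} correctly retains the factor $1/\eta$ in the $Q^{-2}$ term, so setting it to zero yields $Q=\sqrt{\frac{F(\tilde{\bm\theta}^{0})-\mathbb{E}[F(\tilde{\bm\theta}^{T})]}{24\rho^2\eta^3\delta^2T}}$, i.e.\ $Q^\star\propto\eta^{-3/2}$ exactly as you predict, whereas the stated extreme point has $\eta^2$ in the denominator; the paper supplies no reweighting of the progress term to absorb that extra power of $\eta$, so the closed form in the proposition is inconsistent with its own derivative by one factor of $\eta^{1/2}$.
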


\begin{proof}
Based on Proposition~\ref{Proposition1}, we can substitute $P=Q$ into $\Gamma(P,Q)C(P,Q)$ and obtain $\Gamma(Q)C(Q)$. Then we compute the derivative of $\Gamma(Q)C(Q)$ with respect to $Q$, i.e.,
\begin{equation}
\label{Proposition2Proof1}
\begin{aligned}
\frac{\partial \Gamma(Q)C(Q)}{\partial Q}=&-\frac{4\left(F(\tilde{\bm\theta}^{0})-\mathbb{E}[F(\tilde{\bm\theta}^{T})]\right)}{\eta}ZQ^{-2}\\
&+96\rho^2\eta^2\delta^2ZT,
\end{aligned}
\end{equation}
where $Z =( |{\bm\theta}_{1}|+|\mathcal{A}|{|\bm\theta}_{2}|+|{\bm\theta}_{0}|+|\mathcal{Z}_1^{t_0^{\lambda}}|+|\mathcal{Z}_2^{t_0^{\lambda}}|)M$. 
Since ${F(\tilde{\bm\theta}^{0})-F(\tilde{\bm\theta}^{T})>0, Z >0, \rho > 0, \eta > 0, \delta>0}$, $Q > 0$, and $T > 0$, when $Q$ satisfies the following condition:
\begin{equation}
\label{Proposition2Proof1_1}
\begin{aligned}
0 <Q < \sqrt{\frac{F(\tilde{\bm\theta}^{0})-\mathbb{E}[F(\tilde{\bm\theta}^{T})]}{24\rho^2\eta^2\delta^2T}}
\end{aligned}
\end{equation}
we have ${\frac{\partial \Gamma(Q)C(Q)}{\partial Q} < 0}$. While when $Q$ satisfies that
\begin{equation}
\label{Proposition2Proof1_2}
\begin{aligned}
Q > \sqrt{\frac{F(\tilde{\bm\theta}^{0})-\mathbb{E}[F(\tilde{\bm\theta}^{T})]}{24\rho^2\eta^2\delta^2T}}
\end{aligned}
\end{equation}
we obtain ${\frac{\partial \Gamma(Q)C(Q)}{\partial Q} > 0}$. The result shows that the minimum value of $\Gamma(Q)C(Q)$ is achieved when setting ${Q = \sqrt{\frac{F(\tilde{\bm\theta}^{0})-\mathbb{E}[F(\tilde{\bm\theta}^{T})]}{24\rho^2\eta^2\delta^2T}}}$. 
\end{proof}

\textit{Adaptive strategy 2}: We propose an adaptive strategy 2 based on Proposition~\ref{Proposition2}. To achieve the trade-off between the convergence upper bound and communication cost, the global aggregation interval $P$ and local aggregation interval $Q$ should be set to be $\sqrt{\frac{F(\tilde{\bm\theta}^{0})-\mathbb{E}[F(\tilde{\bm\theta}^{T})]}{24\rho^2\eta^2\delta^2T}}$. Here $\mathbb{E}[F(\tilde{\bm\theta}^{T})]$ is the expected global loss at $T$-th iteration. We can evaluate unknown parameters including $F(\tilde{\bm\theta}^{0}), \rho$, and $\delta$ by performing a small number of pre-training but cannot know the optimal global loss in advance. Approximating $\mathbb{E}[F(\tilde{\bm\theta}^{T})]$ by 0, we propose an approximate optimal solution, i.e., $P^*=Q^*=\sqrt{\frac{F(\tilde{\bm\theta}^{0})}{24\rho^2\eta^2\delta^2T}}$.

\subsection{Adaptive Strategy 3 for Adjusting Learning Rate when Aggregation Intervals Change}
Theorem~\ref{Theorem1} shows the convergence upper bound~\eqref{EqTheorem1} is not only affected by $P$ and $Q$, but also influenced by learning rate~$\eta$. Learning rate is included in $-\frac{4\left(F(\tilde{\bm\theta}^{0})-\mathbb{E}[F(\tilde{\bm\theta}^{T})]\right)}{\eta T}$, i.e., the first item on the right-hand side of~\eqref{EqTheorem1}, where $\mathbb{E}[F(\tilde{\bm\theta}^{T})]$ denotes expected global loss at $T$-th iteration. Since $\mathbb{E}[F(\tilde{\bm\theta}^{T})]$ is determined by the training iteration $T$, it cannot be directly regarded as a constant when analyzing the effect of $\eta$. Therefore, we analyze the influence of $\eta$ on training performance utilizing the change of global loss within one global aggregation, i.e., $\mathbb{E}\left[F(\tilde{\bm\theta}^{t_0^\Lambda-1})\mid \tilde{\bm\theta}^{t_0}\right]-F(\tilde{\bm\theta}^{t_0})$. Based on the analysis, we give the following proposition, which shows how to adjust the learning rate $\eta$ for minimizing the upper bound of $\mathbb{E}\left[F(\tilde{\bm\theta}^{t_0^\Lambda-1})\mid \tilde{\bm\theta}^{t_0}\right]-F(\tilde{\bm\theta}^{t_0})$ and communication cost when $P$ or $Q$ changes.

\begin{proposition}\label{Proposition3}For HSGD algorithm with any fixed $Q$ or $\frac{P}{Q}$, the optimal learning rate $\eta$ that minimizes the upper bound of $\mathbb{E}\left[F(\tilde{\bm\theta}^{t_0^\Lambda-1})\mid \tilde{\bm\theta}^{t_0}\right]-F(\tilde{\bm\theta}^{t_0})$ and communication cost decreases with $P$ or $Q$, respectively.
\end{proposition}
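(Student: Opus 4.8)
The plan is to mirror the structure used for Propositions~\ref{Proposition1} and~\ref{Proposition2}: first pin down an explicit upper bound on the one-round descent $\mathbb{E}\left[F(\tilde{\bm\theta}^{t_0^\Lambda-1})\mid\tilde{\bm\theta}^{t_0}\right]-F(\tilde{\bm\theta}^{t_0})$ as a function of $\eta$, then minimize over $\eta$ and track how the minimizer moves with $P$ and $Q$. The required bound is exactly the per-round descent inequality that, summed over $r=0,\ldots,R-1$, yields Theorem~\ref{Theorem1}; extracting it from the Appendix~A argument gives the cubic form
\begin{equation*}
\mathbb{E}\!\left[F(\tilde{\bm\theta}^{t_0^\Lambda-1})\mid\tilde{\bm\theta}^{t_0}\right]-F(\tilde{\bm\theta}^{t_0})\leq -\tfrac{P}{4}B\eta+3P^2\rho\delta^2\eta^2+24PQ^2\rho^2\delta^2\eta^3,
\end{equation*}
where $B:=\Vert\nabla F(\tilde{\bm\theta}^{t_0})\Vert^2$ is fixed once the round begins and is therefore independent of $\eta$, $P$, and $Q$. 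The per-round communication cost, obtained by setting $T=P$ in~\eqref{Proposition1Proof2}, depends on $P$ and $Q$ but not on $\eta$; hence, whether the composite indicator is formed as a product of the descent bound and the communication cost (as in Proposition~\ref{Proposition2}) or otherwise, its minimizer in $\eta$ coincides with the minimizer of the right-hand side above.

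Next I would minimize this cubic. Writing its right-hand side as $\Phi(\eta)$, we have $\Phi(0)=0$, $\Phi'(0)=-\tfrac{P}{4}B<0$, and a positive leading coefficient, so $\Phi$ first decreases and then increases on $\eta>0$ and its minimizer is the unique positive root of $\Phi'(\eta)=0$. Solving the resulting quadratic $72Q^2\rho^2\delta^2\eta^2+6P\rho\delta^2\eta-\tfrac{B}{4}=0$ and factoring $\rho\delta$ from the discriminant gives
\begin{equation*}
\eta^{\ast}=\frac{-6P\delta+\sqrt{36P^2\delta^2+72Q^2B}}{144Q^2\rho\delta}.
\end{equation*}

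The monotonicity then follows by inspection of $\eta^{\ast}$. For fixed $\tfrac{P}{Q}=\Lambda$, substituting $P=\Lambda Q$ factors a $Q$ out of the numerator and leaves $\eta^{\ast}=\big(-6\Lambda\delta+\sqrt{36\Lambda^2\delta^2+72B}\big)/(144Q\rho\delta)$, a strictly positive constant divided by $Q$, hence strictly decreasing in $Q$. For fixed $Q$, the denominator is constant in $P$ and the numerator $g(P)=-6P\delta+\sqrt{36P^2\delta^2+72Q^2B}$ has $g'(P)=-6\delta+36P\delta^2/\sqrt{36P^2\delta^2+72Q^2B}<0$, because $6P\delta<\sqrt{36P^2\delta^2+72Q^2B}$ whenever $B>0$; thus $\eta^{\ast}$ strictly decreases in $P$. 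This establishes both claims.

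Finally I would reconcile the result with the admissibility constraint $\eta\leq\frac{1}{8P\rho}$ of Theorem~\ref{Theorem1}: if the interior minimizer violates this bound, the constrained optimum is the boundary value $\frac{1}{8P\rho}$, which is itself decreasing in $P$ for fixed $Q$ and, writing $P=\Lambda Q$, decreasing in $Q$ for fixed $\Lambda$, so the conclusion is unchanged in either regime. The main obstacle is not the optimization, which is a routine quadratic, but extracting the exact per-round descent inequality with coefficients consistent with Theorem~\ref{Theorem1} from the Appendix~A analysis; in particular one must check that the off-by-one between the window $[t_0,t_0^\Lambda-1]$ and a full $P$-step round only rescales the linear-in-$\eta$ coefficient and leaves the signs that drive the monotonicity intact, after which the remaining steps are elementary.
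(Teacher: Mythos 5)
Your proof is correct and follows essentially the same route as the paper: both extract the per-round descent bound $a\eta^3+b\eta^2-c\eta$ with $a=24PQ^2\rho^2\delta^2$, $b=3P^2\rho\delta^2$, $c=\tfrac{P}{4}\Vert\nabla F(\tilde{\bm\theta}^{t_0})\Vert^2$ from the Theorem~\ref{Theorem1} analysis, observe that the communication-cost factor is independent of $\eta$, and identify the optimal $\eta$ as the positive root of the derivative. If anything, your explicit closed form for $\eta^{\ast}$ and the direct monotonicity check are slightly more careful than the paper's argument, which differentiates only through the coefficient $a$ (even though $b$ and $c$ also depend on $P$ and $Q$) and leaves the boundary case $\eta=\frac{1}{8P\rho}$ largely implicit.
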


\begin{proof}
For notation brevity, we define $\mathbb{E}^{t_0}:=\mathbb{E}\left[\ \bm\cdot\mid \tilde{\bm\theta}^{t_0}\right]$. According to~\eqref{GlobalLossFunction} 
and Assumption~\ref{Assumption1}, we know that global gradient follows \text{$\rho$-Lipschitz}. Based on this, we can obtain
\begin{equation}
\begin{aligned}
&\mathbb{E}^{t_0}\left[F(\tilde{\bm\theta}^{t_0^\Lambda-1})\right]- F(\tilde{\bm\theta}^{t_0})\\
&\leq -\mathbb{E}^{t_0}\left\langle \nabla F(\tilde{\bm\theta}^{t_0}), \eta \sum_{t=t_0}^{t_0^\Lambda-1} \bm G^{t}\right\rangle + \frac{\rho}{2}\mathbb{E}^{t_0}\left\Vert\eta \sum_{t=t_0}^{t_0^\Lambda-1} \bm G^{t}\right\Vert^2.
\label{Proposition3Proof1}
\end{aligned}
\end{equation}

Substituting Lemmas~1--6 (please refer to Appendix~A) into~\eqref{Proposition3Proof1}, we have,
\begin{equation}
\begin{aligned}
&\mathbb{E}^{t_0}\left[F(\tilde{\bm\theta}^{t_0^\Lambda-1})\right]- F(\tilde{\bm\theta}^{t_0})\\
&\leq 24\eta^3Q^2P\rho^2\delta^2+ 3\eta^2P^2\rho\delta^2 -\frac{\eta P}{4}\left\Vert\nabla F(\tilde{\bm\theta}^{t_0})\right\Vert^2.
\end{aligned}
\label{Proposition3Proof2}
\end{equation}
To simplify the expression, we define
\begin{equation}
\label{Proposition3Proof3}
\begin{aligned}
a:=24Q^2P\rho^2\delta^2; b:=3P^2\rho\delta^2; c:=\frac{P}{4}\left\Vert\nabla F(\tilde{\bm\theta}^{t_0})\right\Vert^2.
\end{aligned}
\end{equation}
Then we can rewrite~\eqref{Proposition3Proof2} as 
\begin{equation}
\begin{aligned}
\mathbb{E}^{t_0}\left[F(\tilde{\bm\theta}^{t_0^\Lambda-1})\right]- F(\tilde{\bm\theta}^{t_0})\leq a\eta^3+ b\eta^2 -c\eta.
\end{aligned}
\label{Proposition3Proof4}
\end{equation}
Here $a,b,c \geq 0$. We define $\digamma(P,Q)$ to denote the upper bound (i.e., right-hand side) in~\eqref{Proposition3Proof4} and $\frac{C(P,Q)P}{T}$ to represent the total communication cost in each global aggregation interval. To adjust learning rate $\eta$ to minimize upper bound $\digamma(P,Q)$ and communication cost in each global aggregation interval, we calculate the derivative of $\frac{\digamma(P,Q)C(P,Q)P}{T}$ with respect to $\eta$. Letting $\frac{\partial \frac{\digamma(P,Q)C(P,Q)P}{T}}{\partial \eta}=0$, we have two solutions $\eta_1$ and $\eta_2$, where $\eta_1=\frac{-2b-\sqrt{4b^2+12ac}}{6a}\leq 0$ and $\eta_2=\frac{-2b+\sqrt{4b^2+12ac}}{6a} \geq 0$. Derivative $\frac{\partial \frac{\digamma(P,Q)C(P,Q)P}{T}}{\partial \eta}$ is negative when $\eta_1< \eta <\eta_2$ and is positive when ${\eta >\eta_2}$ and ${\eta <\eta_1}$. Considering the range of $\eta$ given in Theorem~\ref{Theorem1}, we can obtain that $\mathbb{E}^{t_0}\left[F(\tilde{\bm\theta}^{t_0^\Lambda-1})\right]- F(\tilde{\bm\theta}^{t_0})$ and communication cost are minimized when ${\eta = \min \{\eta_2,\frac{1}{8P\rho}\}}$.

To evaluate effects of $P$ and $Q$ on the optimal $\eta$, we should calculate $\frac{\partial  \eta }{\partial P}$ and $\frac{\partial  \eta }{\partial Q}$. We take $\eta=\eta_2$ as an example to show the influence. Taking the derivative of the optimal $\eta$ with respect to $a$, we have 
$\frac{\partial \eta}{\partial a}=\frac{-3\left(\sqrt{4b^2+12ac}-2b\right)^2}{36a^2 \sqrt{4b^2+12ac}}$. Since $a,b,c \geq 0$, we have $\frac{\partial \eta}{\partial a} \leq 0$. Then deriving $\frac{\partial a}{\partial P}$, we obtain $\frac{\partial a}{\partial P} > 0$. The optimal learning rate $\eta$ decreases with $a$ because $\frac{\partial \eta}{\partial a} \leq 0$, and $a$ is a increase function of $P$ since $\frac{\partial a}{\partial P} > 0$, so that the optimal $\eta$ decreases with $P$. To evaluate the influence of~$Q$, we define that $P=\Lambda Q$, where $\Lambda$ is a positive integer. Using $P=\Lambda Q$ and computing derivative of $\frac{\partial a}{\partial Q}$, we have $\frac{\partial a}{\partial Q}> 0$. Due to $\frac{\partial \eta}{\partial a} <0$ and $\frac{\partial a}{\partial Q} >0$, we have $\frac{\partial \eta}{\partial Q} <0$. This means that when $\frac{P}{Q}$ is fixed, the increase of $Q$ would result in the optimal $\eta$ decrease.
\end{proof}

\textit{Adaptive strategy 3}: To minimize the communication cost for achieving target training requirements, we propose an adaptive strategy 3 based on the above analysis. Adaptive strategy 3 tunes learning rate $\eta$ when aggregation intervals $P$ and $Q$ vary. This strategy includes two parts: i) when $Q$ is fixed, learning rate~$\eta$ should decrease with $P$; ii) when $\frac{P}{Q}$ is fixed and $Q$ increases, learning rate $\eta$ should decrease.

\section{Experimental Evaluation}
\label{sec:Experimental_Evaluation}
To evaluate the proposed HSGD algorithm and adaptive strategies, we conduct experiments on different datasets, tasks, and models, and compare results with commonly used benchmarks and state-of-the-art methods. In this section, we first introduce our experimental setup including baselines, datasets, and parameter settings in Section~\ref{sec:Experimental_Settings}. Then in Section~\ref{sec:Efficiency_of_HSGD_algorithm}, we validate the effectiveness of the proposed HSGD algorithm in guaranteeing accuracy, and saving training time and communication cost. Finally, adaptive strategies for adjusting global and local aggregation intervals, and learning rates to achieve target accuracy while reducing communication cost are verified in Section~\ref{sec:Design_principle_of_HSGD_algorithm_validation}.

\subsection{Experimental Setup}
\label{sec:Experimental_Settings}

\textit{1) Baselines:} To the best of our knowledge, only two state-of-the-art federated learning techniques can deal with both horizontal and vertical data partitioning. For comparison, we utilize these two methods and their variations as baselines. To ensure a fair comparison, we employ a random participant selection scheme for all baselines. Additionally, unless stated otherwise, all other settings remain the same across the experiments to maintain consistency.

\begin{itemize}
	\item[i)] JFL~\cite{Yu2022FL}: This baseline combines HFL and VFL. When implementing JFL in e-health systems, each randomly selected wearable device (i.e., the farm edge node in~\cite{Yu2022FL}) and the corresponding hospital (i.e., the server in~\cite{Yu2022FL}) collaboratively learn a unique local model by conducting VFL. Next, local models are transmitted to a server for global aggregation. Then, aggregated results are sent back to wearable devices and hospitals for the next round of updates. Compared with the proposed HSGD algorithm, JFL lacks the local aggregation phase. 	
	\item[ii)]  TDCD~\cite{Das2021CSFLjournal}: It is a combination of HFL and VFL with a two-tier horizontal-vertical structure. Using TDCD, each randomly selected device (i.e., the client in~\cite{Das2021CSFLjournal}) in each group (i.e., the silo in~\cite{Das2021CSFLjournal}) performs local gradient steps before sharing updates with their edge nodes (i.e., the hubs in~\cite{Das2021CSFLjournal}). Each hub adjusts its coordinates by averaging its clients’ updates, and then exchanges intermediate updates with others. Note TDCD has no global aggregation phase compared to the proposed HSGD algorithm. In addition, e-health has a three-tier horizontal-vertical-horizontal data structure, while TDCD can deal with a two-tier horizontal-vertical data structure. To apply TDCD to e-health systems, we combine data in different hospital-patient groups to form one group. This step requires transmitting a part of the raw data. 
	\item[iii)]  C-TDCD: Castiglia et al.~\cite{castiglia2022compressed} proposed a Compressed-VFL method that enhances the communication efficiency of training with vertically partitioned data. We adopt this method to TDCD, where the top-$k$ sparsification is utilized to compress the intermediate results and the combined models. For simplifying expression, we refer to this method as C-TDCD.
	\item[iv)]  C-HSGD: For comparison, we also incorporate top-$k$ sparsification into the vertical training process between wearable devices and the corresponding hospitals in our proposed method, referred to as C-HSGD.
\end{itemize}

\textit{2) Datasets:} We conduct experiments on three datasets: OrganAMNIST, MIMIC-III, and Epileptic Seizure Recognition (ESR), where OrganAMNIST is an image dataset, MIMIC-III and ESR are time series datasets. Since \text{e-health} data can be either in image or time series format, using these three datasets can adequately show the generality of the proposed HSGD algorithm and adaptive strategies. OrganAMNIST consists of a set of ${28\times28}$ grayscale images from 11 classes. It contains 34581 training samples and 17778 test samples~\cite{Yang2021MedMNIST}. MIMIC-III is a public dataset comprising electronic health information of patients admitted to intensive care units in a medical center~\cite{Johnson2016MIMIC_III}. \text{MIMIC-III} can be preprocessed as in~\cite{MIMIC-III_preprocessed} to generate 14681 training samples and 3236 test samples including two classes. Each sample in MIMIC-III is represented as a time series with 48 time steps and 76 features. ESR is a preprocessed and restructured version of a commonly used dataset featuring epileptic seizure detection~\cite{miscepilepticseizurerecognition388}. It consists of 11500 samples with 178 features, including 5 levels of epilepsy. We use 80\% of the samples for training and the remaining samples for testing purposes.

\begin{figure*}[!t]
	\centering
	\centering
	\subfloat[OrganAMNIST]{\includegraphics[width=0.66\columnwidth]{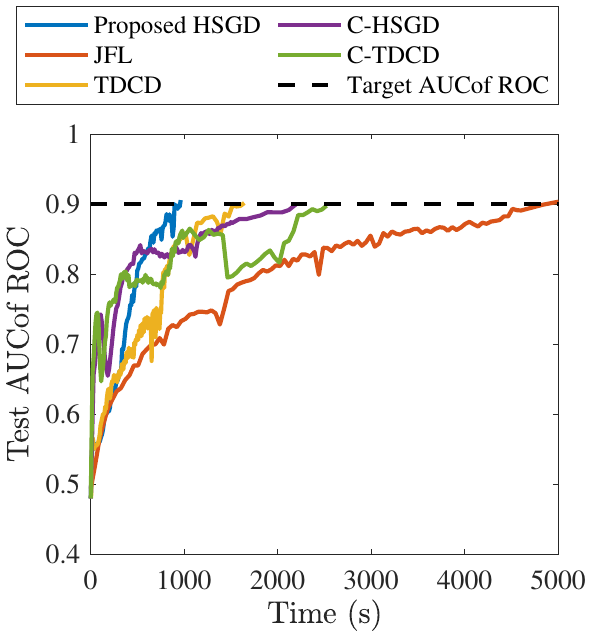}
	\label{Fig4a}}
	\subfloat[ESR]{\includegraphics[width=0.67\columnwidth]{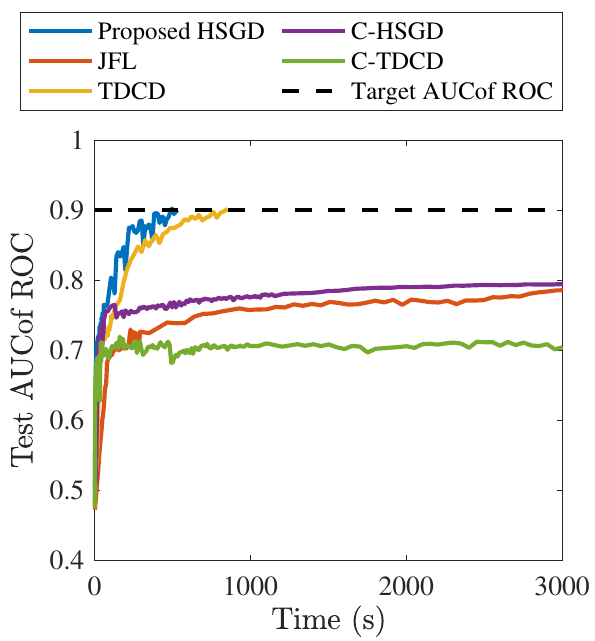}
	\label{Fig4b}}
	\subfloat[MIMIC-III]{\includegraphics[width=0.66\columnwidth]{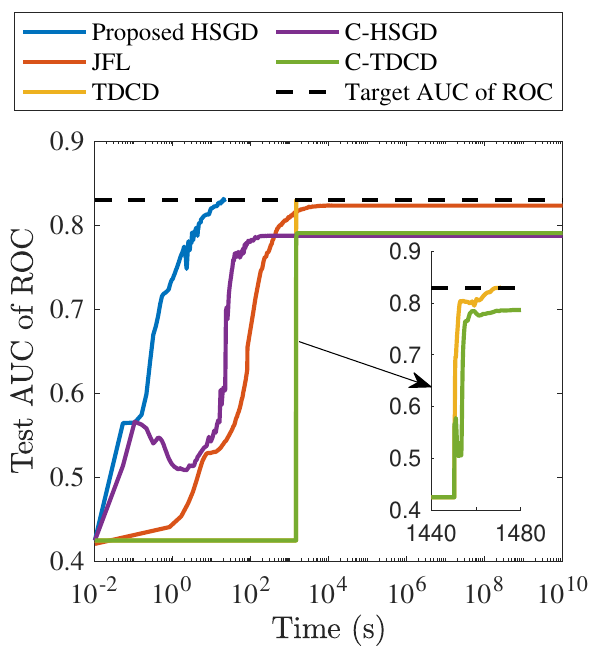}
	\label{Fig4c}}
	\caption{Training performance versus time on three datasets. (a)-(c): Our proposed algorithm consumes less time to achieve target training requirements compared to baselines.  (c): When the size of raw data is large, the proposed algorithm begins model training earlier contributing to saving training time.}
	\label{Baselines_training_time}
\end{figure*}

\begin{figure*}[!t]
	\centering
	\centering
	\subfloat[OrganAMNIST]{\includegraphics[width=0.66\columnwidth]{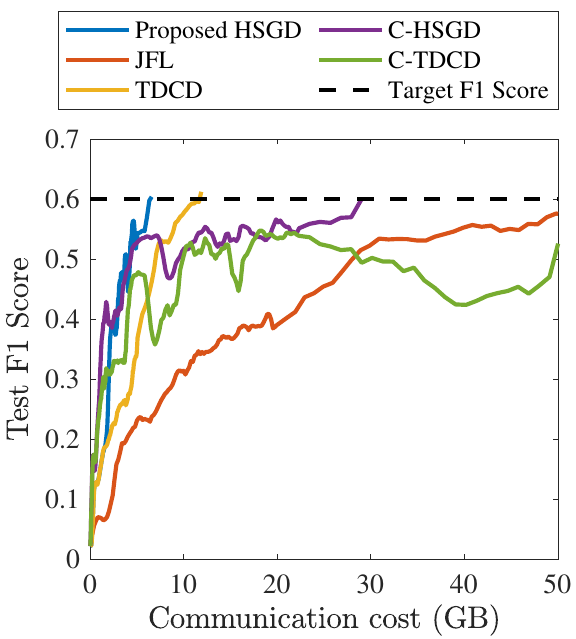}
	\label{Fig5a}}
	\subfloat[ESR]{\includegraphics[width=0.66\columnwidth]{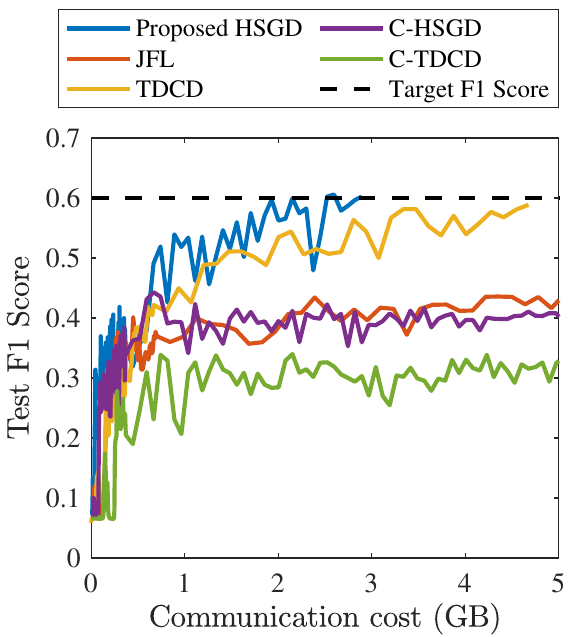}
	\label{Fig5b}}
	\subfloat[MIMIC-III]{\includegraphics[width=0.67\columnwidth]{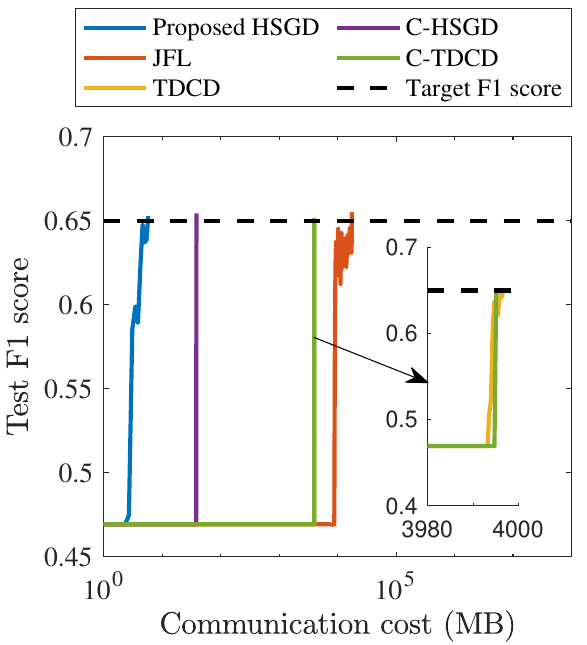}
	\label{Fig5c}}
	\caption{Training performance versus communication cost (each group) on three datasets. The raw data sizes of OrganAMNIST, ESR, and \text{MIMIC-III} are 63~\si{MB}, 7.3~\si{MB}, and 42.3~\si{GB}, respectively.  Our proposed algorithm can reduce the communication cost, and its advantage becomes obvious when the size of raw data becomes larger.} 
	\label{Baselines_communication_cost}
\end{figure*}

\textit{Data split:} We set $M=10$ and $K_m = 3458$ for all experiments on OrganAMNIST. To align with the data distribution in the \text{e-health} domain, we split OrganAMNIST in three steps: i) the dataset is horizontally split into $M=10$ hospital-patient groups following the non-iid data distribution, where each group contains 3000 samples of only 2 labels and 458 samples of other labels; ii) each sample in one hospital-patient group is vertically split into two sub-images with $300$ pixels and $484$ pixels respectively, where one set of sub-images is located at the hospital and the other one is located at the wearable devices; iii) the set of sub-images located at the wearable devices is further partitioned horizontally among $K_m = 3458$ devices, so that each device holds a unique sub-image. For MIMIC-III, we set $M=10$ and $K_m = 1468$. To split MIMIC-III, we first horizontally divide the dataset among $M=10$ hospital-patient groups. The data is non-iid across groups, where each group keeps an imbalanced number of 1468 samples of 2 labels. Then each sample is vertically split into two parts, where each part consists of 36 features for 48 time steps. The rest process is identical to that of OrganAMNIST but $K_m = 1468$. The splitting of ESR is similar to that of MIMIC-III with a few differences. Here, we have $K_m = 920$, and each group consists of 700 samples belonging to 2 specific labels and 220 samples belonging to other labels. Each sample in ESR is further divided into two parts, each containing 89 features.

\begin{table*}[]
	\centering
	\caption{Communication cost for reaching target training requirements, including training loss, test precision, and test recall. A “–” indicates that the target was not reached during training.\label{Table2}}
	\renewcommand{\arraystretch}{1.3}
\begin{tabular}{@{}l|l|l|llllllllllllll@{}}
\toprule
\multirow{2}*{\shortstack{Dataset}} &\multirow{2}*{\shortstack{Metrics}} &\multirow{2}*{\shortstack{Training targets}} &\multicolumn{5}{c}{Communication cost (each group) for reaching targets} \\     
\cline{4-8}~ &{} &{} &{Proposed HSGD} & {JFL}   & {TDCD} & {C-HSGD}   & {C-TDCD}                                            \\ \hline 
\multirow{6}*{\shortstack{OrganAMNIST}} 
\multirow{6}*{\shortstack{}} &\multirow{2}*{\shortstack{Training loss}}  &{Traget = 1.5}  &{2.32~\si{GB}}  & {31.53~\si{GB}}& {5.30~\si{GB}}& \textbf{1.38~\si{GB}} &{3.79~\si{GB}}\\
~ &{}&{Traget = 0.5}   &\textbf{{42.87~\si{GB}}}  &{88.65~\si{GB}} & {31.53~\si{GB}}  & \makecell[c]{--}  & \makecell[c]{--} \\
\cline{2-8}
&\multirow{2}*{\shortstack{Test precision}}  &{Traget = 50\%}  &\textbf{3.08~\si{GB}}   & {19.52~\si{GB}}  & {5.81~\si{GB}}  & {3.75~\si{GB}}  & {4.15~\si{GB}} \\
~ &{} &{Traget = 70\%}   &\textbf{18.41~\si{GB}}  &\makecell[c]{--} & \makecell[c]{--}  & \makecell[c]{--}    & \makecell[c]{--} \\            
\cline{2-8}
\multirow{6}*{\shortstack{}} &\multirow{2}*{\shortstack{Test recall}}   &{Traget = 50\%}  &\textbf{3.07~\si{GB}}   & {23.91~\si{GB}}  & {7.32~\si{GB}}  & {5.13~\si{GB}}  & {10.49~\si{GB}} \\
~ &{} &{Traget = 70\%}   &\textbf{49.17~\si{GB}}  &\makecell[c]{--}  & \makecell[c]{--}   & \makecell[c]{--}  & \makecell[c]{--} \\     
\hline \hline
\multirow{6}*{\shortstack{ESR}} 
\multirow{6}*{\shortstack{}} &\multirow{2}*{\shortstack{Training loss}}  
&{Traget = 1.2}  &\textbf{{0.15~\si{GB}}}   & {0.53~\si{GB}}  &{0.61~\si{GB}}   & {0.61~\si{GB}}   &  \makecell[c]{--}\\
~ &{}&{Traget = 0.8}   &\textbf{{2.52~\si{GB}}} &{\makecell[c]{--}}  & {{4.41~\si{GB}}}   &{\makecell[c]{--}}  & {\makecell[c]{--}} \\ 
\cline{2-8}&\multirow{2}*{\shortstack{Test precision}}  
&{Traget = 40\%}  &\textbf{{0.15~\si{GB}}}   & {{0.19~\si{GB}}}  & \textbf{{0.15~\si{GB}}}  & {{0.23~\si{GB}}}  & {{0.74~\si{GB}}} \\   
~ &{} &{Traget = 60\%}   &\textbf{{1.92~\si{GB}}}  &{\makecell[c]{--}}  & {{5.08~\si{GB}}}   & {\makecell[c]{--}}   & {\makecell[c]{--}} \\            
\cline{2-8}\multirow{6}*{\shortstack{}} &\multirow{2}*{\shortstack{Test recall}}   &{Traget = 40\%}  &\textbf{{0.1~\si{GB}}}   & {{0.29~\si{GB}}}  & {{0.29~\si{GB}}}  & {{0.14~\si{GB}}}  & {{1.34~\si{GB}}} \\
~ &{} &{Traget = 60\%}   &\textbf{{2.52~\si{GB}}}  &{\makecell[c]{--}}  & {{6.28~\si{GB}}}   &{\makecell[c]{--}}  & {\makecell[c]{--}} \\    
\hline \hline
\multirow{4}*{\shortstack{MIMIC-III}} 
\multirow{4}*{\shortstack{}} &\multirow{2}*{\shortstack{Training loss}}  
&{Traget = 0.5}  &\textbf{{0.19~\si{MB}}}   & {{0.63~\si{MB}}}  & {{3990.73~\si{MB}}}  & {{0.23~\si{MB}}}  & {{3990.75~\si{MB}}} \\
~ &{}&{Traget = 0.3}   &\textbf{{9.61~\si{MB}}}  & \makecell[c]{--} & {{3995.89~\si{MB}}}   &  \makecell[c]{--} &  \makecell[c]{--}\\
\cline{2-8}&\multirow{1}*{\shortstack{Test precision}}  
&{Traget = 70\%}  &\textbf{{2.69~\si{MB}}}   & {{34.30~\si{MB}}}  & {{3994.73~\si{MB}}}  & {{38.46~\si{MB}}}  & {{3995.16~\si{MB}}} \\        
\cline{2-8}\multirow{9}*{\shortstack{}} &\multirow{1}*{\shortstack{Test recall}}   
&{Traget = 60\%}  &\textbf{{4.61~\si{MB}}}   & \makecell[c]{--} & {{3994.73~\si{MB}}}  &\makecell[c]{--}& {{40003.58~\si{MB}}} \\
\bottomrule
\end{tabular}
\end{table*}

\textit{Models:} We train CNN models on OrganAMNIST to perform the image classification task, LSTM models on \text{MIMIC-III} to execute the in-hospital mortality prediction task, and LSTM models on ESR to classify epilepsy levels. Each local model trained in our proposed algorithm and all baselines contains three parts: a hospital side model, a device side model, and a combined model. We take the CNN model as an example to illustrate the model structure. We train three CNN models, where two CNN models without fully connected networks are trained based on the data collected by wearable devices and hospitals respectively, and the outputs of these two CNN models are utilized to train the last CNN model which outputs the final predictions (For the detailed structures please refer to Fig.~10 in Appendix~B.).

\textit{3) Training parameters:} For all experiments, we start with the same initial models. In the proposed HSGD algorithm and all baselines, we uniformly sample $\alpha K_m$ devices at random in each communication round, where $\alpha = 0.01$ for OrganAMNIST and $\alpha = 0.02$ for both \text{MIMIC-III} and ESR. Furthermore, in our experiments, each model parameter and gradient is a 32-bit float. For C-TDCD and C-HSGD, the quantization level $b$ is 128 and the compression ratio is $\frac{\log_{2} b}{32}$. In addition, we use an initial learning rate $\eta$ which decays halved per $T_0$ iterations. The initial learning rate of OrganAMNIST is 0.0025 and that of MIMIC-III and ESR is 0.01 unless otherwise specified.

To simulate real-world communication, we consider that wearable devices communicate with edge nodes and hospitals via mobile Internet. The download and upload speeds of mobile Internet are 110~Mbps and 14~Mbps~\cite{Internet_Speeds}, respectively. Moreover, edge nodes, hospitals, and the cloud server communicate via fixed broadband, where download speed is 204~Mbps and upload speed is 74~Mbps~\cite{Internet_Speeds}. According to the communication setting, we can calculate the training time. Taking the proposed HSGD algorithm as an example, for each global round, e-health systems conduct one global aggregation, $\frac{P}{Q}$ times local aggregation, $\frac{P}{Q}$ times intermediate result exchange, and $P$ times local computation. The training time for each global aggregation round is ${t= t_g + \frac{P}{Q}(t_l+t_e)+P\times t_c}$, where $t_g$, $t_l$, and~$t_e$ denote communication time for global aggregation, local aggregation, and intermediate result exchange, respectively. The local computational time $t_c$ is obtained from experiments. Without specified explanation, all comparison algorithms transmit updates every iteration and run one step of SGD between two communications, i.e., $P=Q=1$ for the proposed algorithm.

\begin{table*}[]
	\centering
	\caption{Computational overhead for reaching target AUC of ROC.\label{Table3}}
	\renewcommand{\arraystretch}{1.3}
\begin{tabular}{@{}l|l|lllll@{}}
\toprule
\multirow{2}*{\shortstack{Dataset (The maximum target \\achieved by the worst baseline)}}&\multirow{2}*{\shortstack{Metrics}} &\multicolumn{5}{c}{Computational overhead (each device) for reaching targets} \\     
\cline{3-7}~ &{} &{Proposed HSGD} & {JFL}   & {TDCD} & {C-HSGD}   & {C-TDCD}                                                \\ \hline 
\multirow{2}*{\shortstack{OrganAMNIST\\(Test AUC of ROC = 0.9)}}
\multirow{2}*{\shortstack{}} 
&{Memory consumption}  &\textbf{890~\si{MB}}  & {2600~\si{MB}}& {1950~\si{MB}}& {1530~\si{MB}} &{1734~\si{MB}}\\
&{FLOPs consumption}   &\textbf{{11.57~\si{GFLOPs}}}  & {33.8~\si{GFLOPs}}  & {25.35~\si{GFLOPs}} &{21.60~\si{GFlops}} &{24.48~\si{GFLOPs}}  \\
\hline \hline
\multirow{2}*{\shortstack{ESR\\(Test AUC of ROC = 0.7)}}
\multirow{2}*{\shortstack{}}
&{Memory consumption}  &\textbf{{7.6~\si{MB}}}   & {17.2~\si{MB}}  &{8.8~\si{MB}}   & {16.4~\si{MB}}   &  {29.4~\si{MB}} \\
&{FLOPs consumption}   &\textbf{{1.48~\si{MFLOPs}}} &{3.37~\si{MFLOPs}}& {1.72~\si{MFLOPs}} &{3.21~\si{MFLOPs}}&{5.80~\si{MFLOPs}}\\
\hline \hline
\multirow{2}*{\shortstack{MIMIC-III\\(Test AUC of ROC = 0.78)}}
\multirow{2}*{\shortstack{}} 
&{Memory consumption}  &\textbf{{0.078~\si{MB}}}   & {{0.6~\si{MB}}}  & {{0.11~\si{MB}}}  & {{3~\si{MB}}}  & {{3.28~\si{MB}}} \\
&{FLOPs consumption}  &\textbf{{11661~\si{FLOPs}}}  &{89700~\si{FLOPs}} &{16445~\si{FLOPs}}  & {448500~\si{FLOPs}} &{490360~\si{Flops}} \\
\bottomrule
\end{tabular}
\end{table*}

\begin{table}[]
	\centering
	\caption{Computational time in each round. The local and global aggregation intervals are 1. \label{Table4}}
	\renewcommand{\arraystretch}{1.3}
\begin{tabular}{@{}l|lllll@{}}
\toprule
\multirow{2}*{\shortstack{Dataset}} &\multicolumn{5}{c}{Computational time (\si{s})} \\    
\cline{2-6}~ &{Proposed HSGD} & {JFL}   & {TDCD} & {C-HSGD}   & {C-TDCD}\\                  \hline 
\multirow{1}*{\shortstack{OrganAMNIST}} &{0.06}  & {0.48}& {0.06}& {0.06} &{0.06}\\
\multirow{1}*{\shortstack{ESR}} &{{0.064}}   & {0.1}  &{0.064}   & {0.064}   &  {0.064}\\
\multirow{1}*{\shortstack{MIMIC-III}}   &{0.05}   & {0.8}  & {0.05}  & {0.05}  & {0.05}\\
\bottomrule
\end{tabular}
\end{table}

\begin{figure}[!t]
	\centering
	\subfloat[In each communication round, the communication time remains unchanged, while the computational time is reduced to 0.1 times the experimental time.]{\includegraphics[width=0.47\columnwidth]{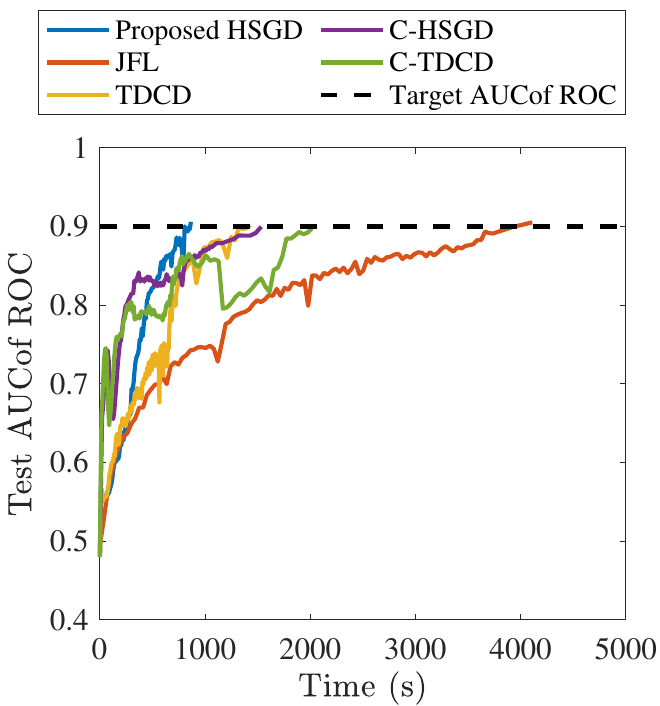}
	\label{Fig6a}}
    \hspace*{\fill}
	\subfloat[In each communication round, the communication time remains unchanged, while the computational time is reduced to 10 times the experimental time.]{\includegraphics[width=0.465\columnwidth]{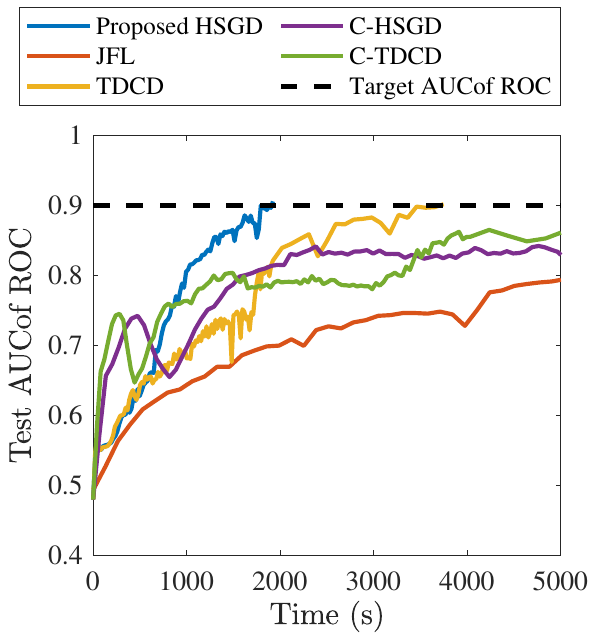}
	\label{Fig6b}}
	\caption{Effect of computational time on training convergence with OrganAMNIST dataset.}
	\label{Baselines_compuational_time}
\end{figure}

\begin{figure*}[!t]
	\centering
	\subfloat[]{\includegraphics[width=0.5\columnwidth]{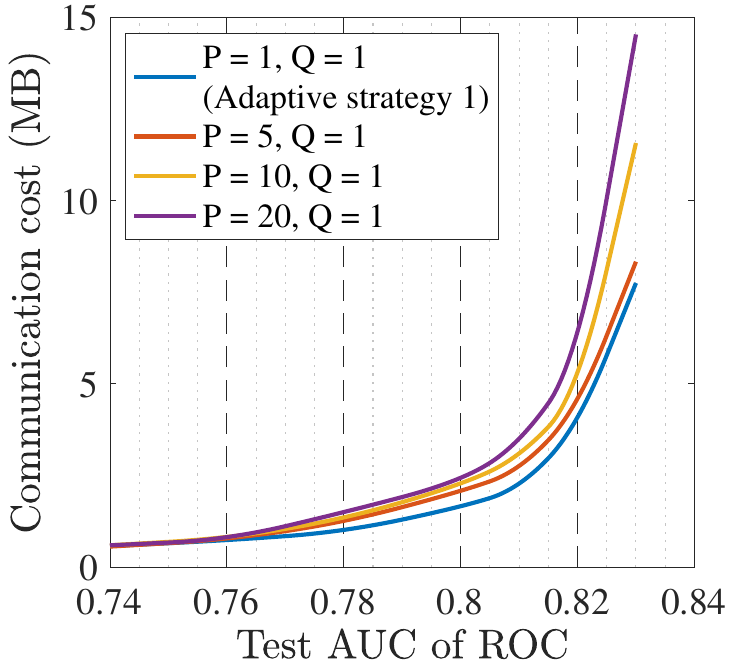}
	\label{Fig7a}}
	\subfloat[]{\includegraphics[width=0.5\columnwidth]{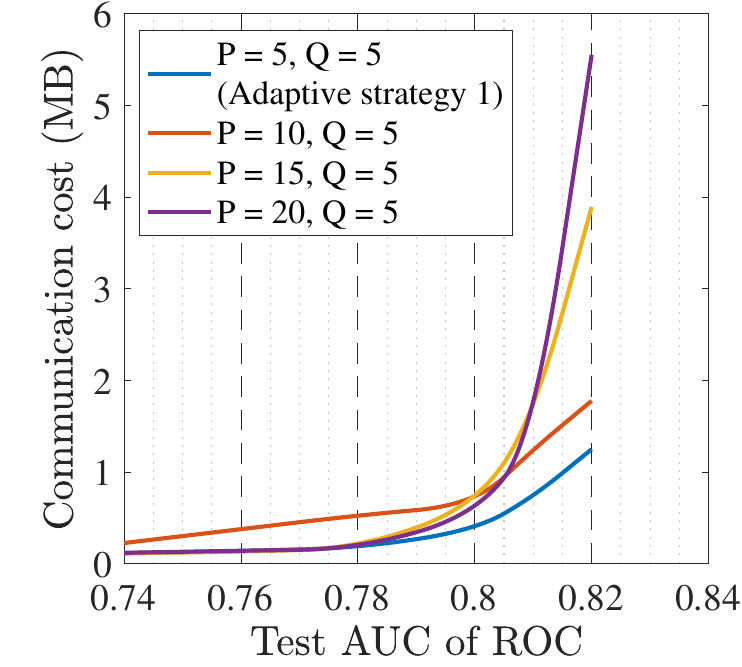}
	\label{Fig7b}}
	\subfloat[]{\includegraphics[width=0.5\columnwidth]{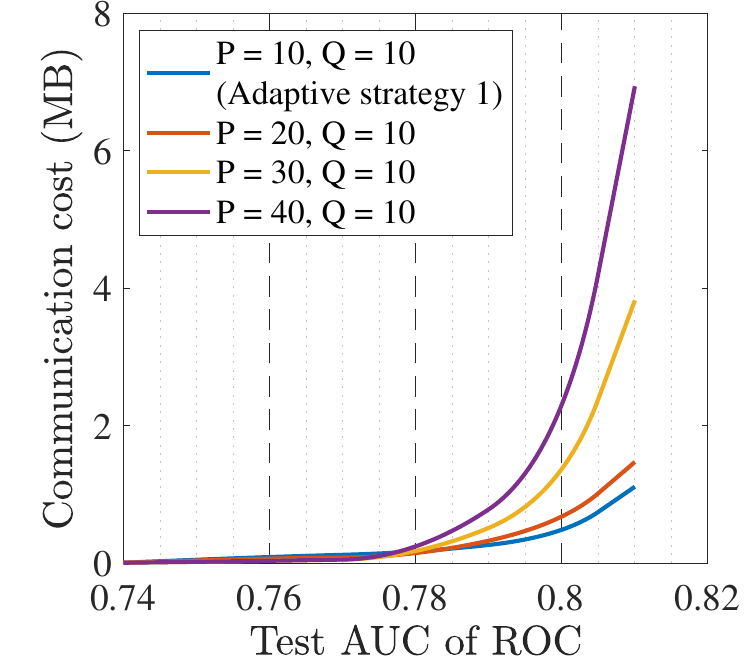}
	\label{Figure6g}}
	\subfloat[]{\includegraphics[width=0.5\columnwidth]{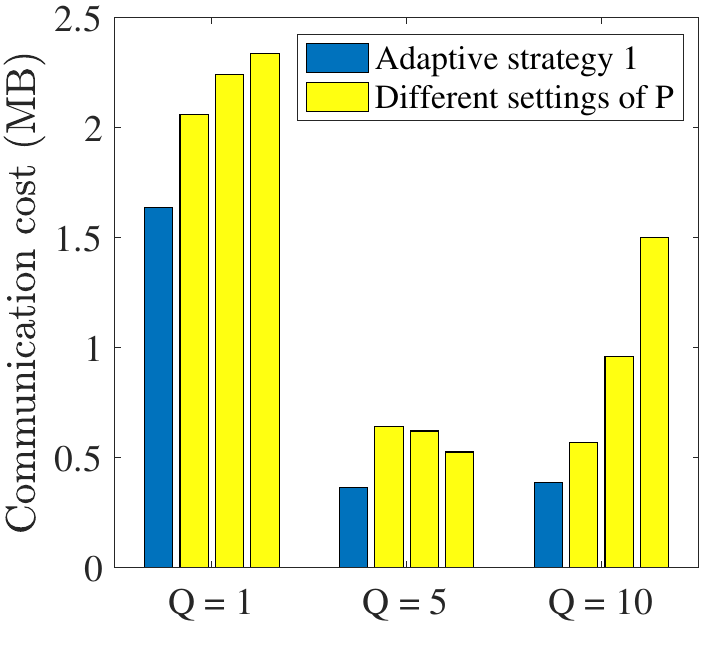}
	\label{Fig7c}}
	\caption{Effect of aggregation intervals on the communication cost of a single group with MIMIC-III dataset. (a)-(c): The communication cost of a single group versus test AUC of ROC with different settings of $P$ and $Q$. (d): The communication cost of a single group to reach 0.8 test AUC of ROC on MIMIC-III. For different $Q$, setting $P=Q$ can always reduce communication cost.}
	\vspace{-0.4cm}
	\label{Proposition1property}
\end{figure*}

\begin{figure}[!t]
		\centering
		\includegraphics[width=0.6\columnwidth]{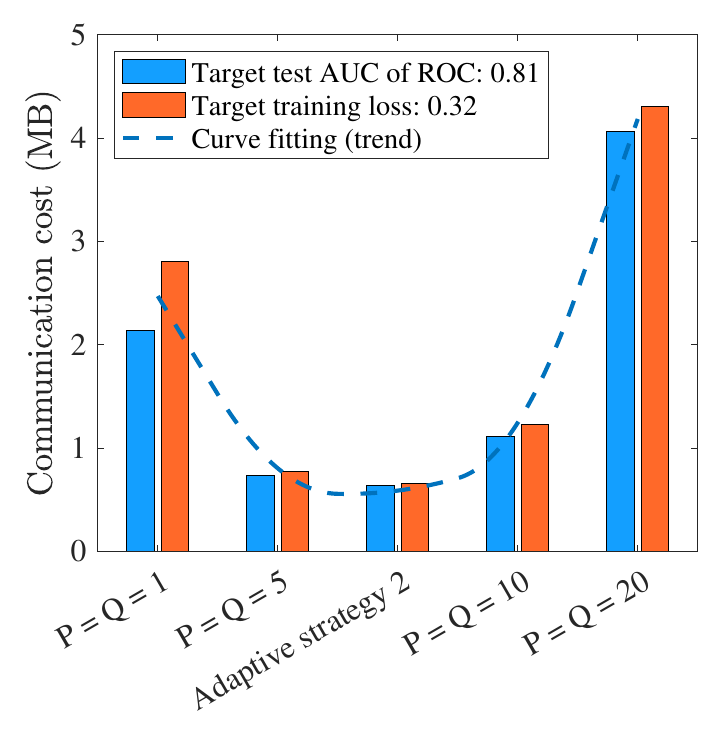}
		\captionsetup{font={scriptsize}}
		\vspace{-0.2cm}
		\caption{The communication cost of one group versus different settings P = Q with MIMIC-III dataset. The transferred data size shows a trend of decrease first and then increase with P and Q growth, and adaptive strategy 2 helps to minimize the communication overhead to achieve the same level of training performance.}
		\label{Proposition2property}
		\vspace{-0.3cm}
\end{figure}

\subsection{Validation of HSGD Algorithm}
\label{sec:Efficiency_of_HSGD_algorithm}

\textit{1) Saving training time:} The training time includes both computational and communication times. We compare the training time of baselines in terms of test Area Under the Curve of the Receiver Operating Characteristic (AUC of ROC). The AUC of ROC is calculated based on a classifier threshold determined by the ROC curve~\cite{Saman2101AUCofROC}. A higher AUC of ROC means a better ability for a classifier to distinguish classes. The results are shown in Fig.~\ref{Baselines_training_time}. To achieve 0.9 test UC of ROC on OrganAMNIST, our proposed method saves over 80\%, 41\%, 56\%, and 62\% training time compared to JFL, TDCD, \text{C-HSGD}, and \text{C-TDCD}, respectively. For JFL, it does not include the local aggregation phase, which causes each hospital to learn a unique model for every wearable device within the same group, resulting in a long training time. Furthermore, since TDCD lacks the global aggregation phase, it overfits local data, resulting in low accuracy when applied to the entire dataset. In addition, when the test AUC of ROC is lower than 0.85/0.8, C-HSGD/C-TDCD outperforms our proposed method. However, our method surpasses C-HSGD/C-TDCD when the test AUC of ROC is over 0.85/0.8. This is because \text{C-HSGD} and C-TDCD save time by compressing the transmitted results, while they also undermine the convergence rate and the achievable training performance as they discard many results. This phenomenon is more remarkable in ESR and \text{MIMIC-III}. In addition, Fig.~\ref{Baselines_training_time}(c) shows that TDCD and C-TDCD start learning at about {1450}~\si{s}. This is because they require transmitting a part of the dataset to convert the three-tier data structure to the two-tier structure. Since the dataset sizes of OrganAMNIST (63~\si{MB}) and ESR (7.3~\si{MB}) are much smaller than that of MIMIC-III (42.3~\si{GB}), the raw data communication time for OrganAMNIST and ESR is tiny. This explains why the aforementioned phenomenon is not remarkably shown in Figs.~\ref{Baselines_training_time}(a) and (b).

\begin{figure*}[!t]
	\centering
	\subfloat[]{\includegraphics[width=0.4\columnwidth]{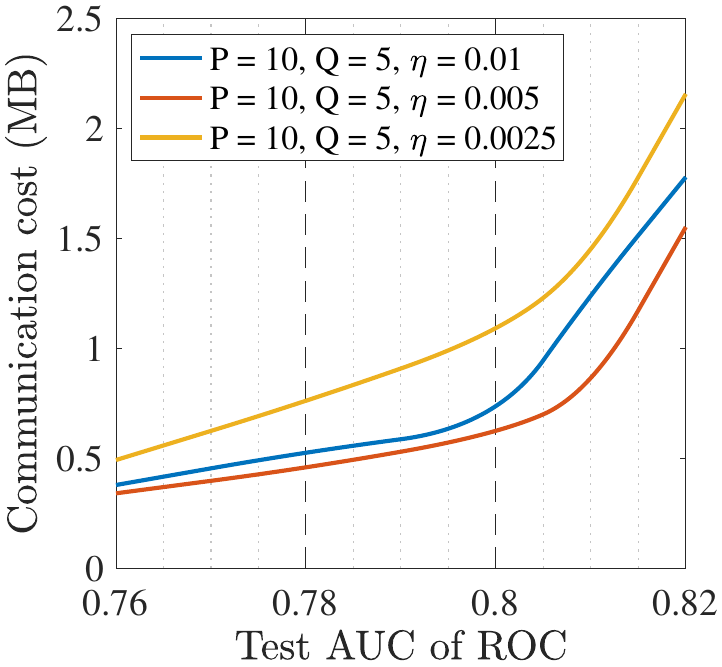}
	\label{Fig9a}}
	\subfloat[]{\includegraphics[width=0.4\columnwidth]{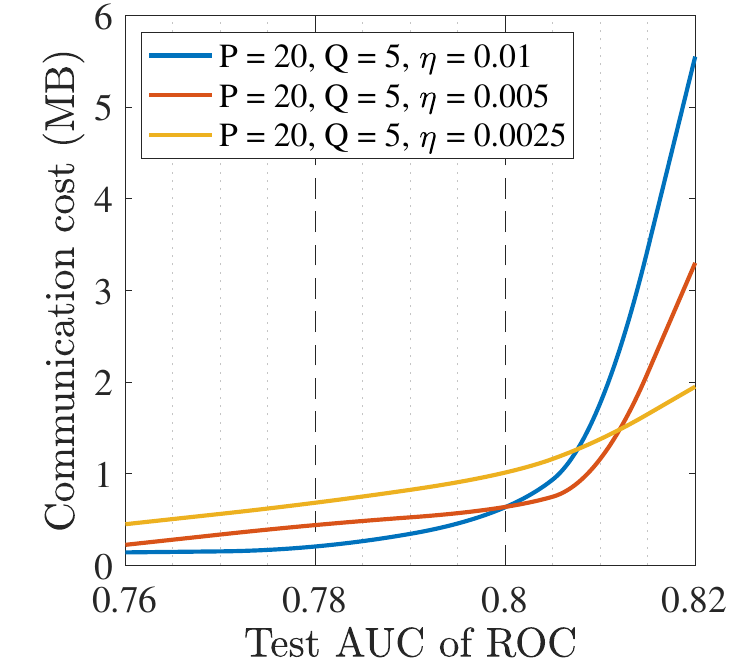}
	\label{Fig9b}}
	\subfloat[]{\includegraphics[width=0.4\columnwidth]{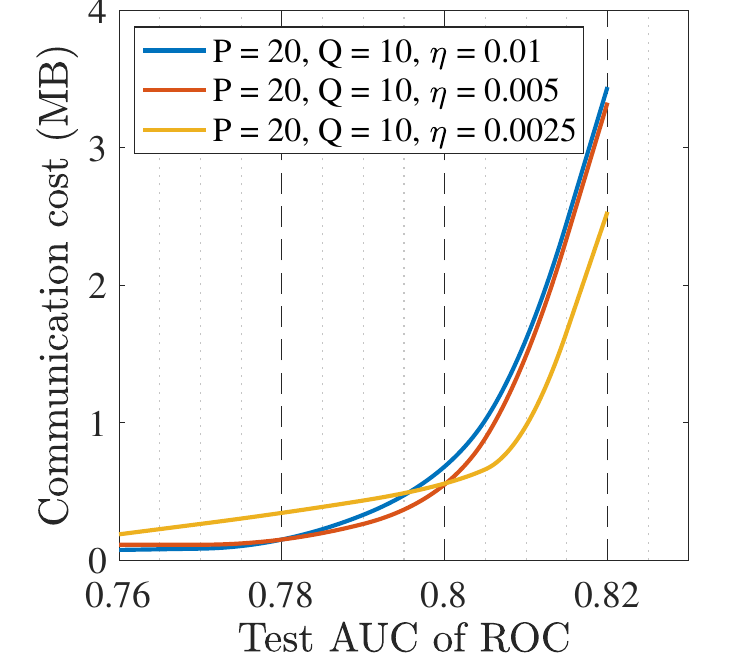}
	\label{Fig9c}}
	\subfloat[]{\includegraphics[width=0.4\columnwidth]{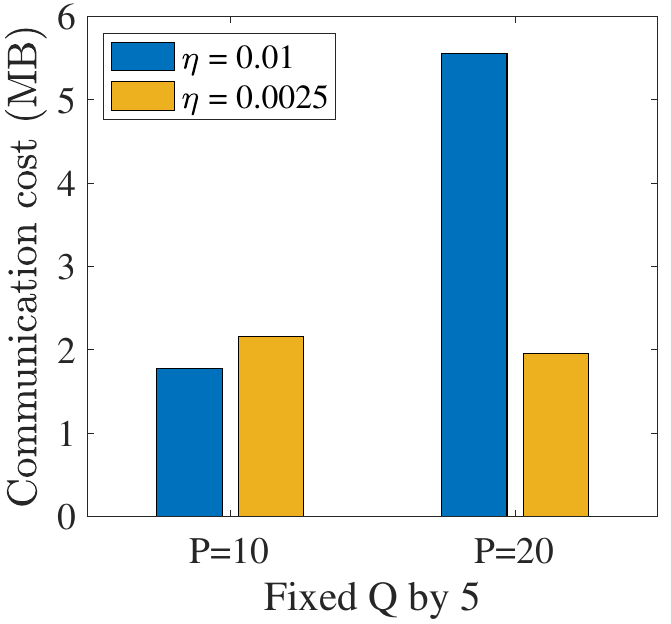}
	\label{Fig9d}}
	\subfloat[]{\includegraphics[width=0.4\columnwidth]{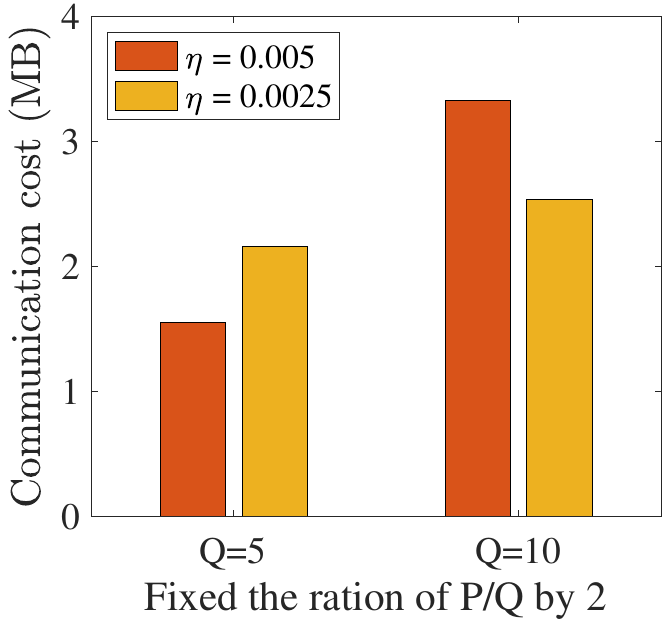}
	\label{Fig9e}}
	\caption{Effect of learning rate on the communication cost of a single group with MIMIC-III dataset. (a)-(c): The communication cost of a single group versus test accuracy with various learning rates. (d)-(e): The communication cost of a single group to reach 0.82 test AUC of ROC on MIMIC-III with various learning rates. For different pairs of $P$ and $Q$, the learning rate should decrease with $Q$ and $P$ to achieve efficient communications.}
	\vspace{-0.3cm}
	\label{Proposition3property}
\end{figure*}

\textit{2) Reducing communication cost:} We then compare the communication cost of the proposed HSGD algorithm and baselines on three datasets. Fig.~\ref{Baselines_communication_cost} shows that the proposed HSGD algorithm can save the communication cost by 45\% and 77\% compared with TDCD and C-HSGD when reaching 0.6 F1 score, while JFL and C-TDCD can not achieve 0.6 F1 score. Note that TDCD and C-TDCD need to combine all data in different groups, which results in high communication cost when the size of raw data is large. Compared with OrganAMNIST and ESR, the communication costs of TDCD and C-TDCD on \text{MIMIC-III} increase significantly because the size of \text{MIMIC-III} (42.3~\si{GB}) is much larger than that of OrganAMNIST (63~\si{MB}) and ESR  (7.3~\si{MB}). In addition, we also compare the communication overhead for reaching different training loss, test precision, and test recall in Table~\ref{Table2}. We can see that the proposed HSGD can substantially reduce communication cost. Note that although \text{C-HSGD} consumes less communication cost to reach 1.5 training loss on OrganAMNIST, it cannot achieve 0.5 training loss, because it compresses models at the expense of training accuracy to save overhead.

\textit{3) Saving computational cost:} 
To evaluate the computation efficiency of the proposed HSGD method, we analyze the memory and FLOPs consumptions. As shown in Table~\ref{Table3}, our proposed method always consumes the least memory and FLOPs to achieve the target test AUC of ROC. For example, to reach 0.9 test AUC of ROC (the maximum target achieved by the worst baseline) on the OrganAMNIST dataset, the proposed method consumes 890~\si{MB} memory and 11.57~\si{GFLOPs}. Compared with baselines, it saves memory consumption by 66\%, 54\%, 42\%, and 48\%, and FLOPs consumption by 66\%, 54\%, 46\%, and 52\%. This is because the memory and FLOPs consumptions are determined by the consumption in each iteration and the total iteration to reach the target requirements. For the proposed method and all baselines, we run the same model, so that their computational consumptions in each iteration are similar (that of C-HSGD and C-TDCD are slightly smaller because they use the compression method), and the total consumptions to achieve the target is mainly determined by the total training iteration. Since our proposed method convergence fast, it reduces the computational cost.

To investigate the impact of computational time on convergence, we list the computational time for each round across all baselines in Table~\ref{Table4}. Here, the computational time is obtain from experiments. The results indicate that the computational times for the proposed HSGD, TDCD, C-HSGD, and C-TDCD are similar because they all train the same local model. On the contrary, JFL requires training multiple local hospital-side models within one group due to the lack of a local aggregation phase, resulting in higher computational time. Subsequently, we vary the computational time to 0.1 times and 10 times the computational time obtained from experiments, respectively. Under these two scenarios, we show the training performance on the OrganAMNIST dataset in Fig.~\ref{Baselines_compuational_time}, where time is the sum of computational and communication times. Comparing Figs.~\ref{Baselines_compuational_time}(a) and (b), we can observe that the increase in computational time can degrade the convergence. In addition, our proposed HSGD converges faster than baselines and the gap in convergence between our method and baselines widens as computational time increases. This is because the proposed method can significantly reduce the training round, thereby saving training time while guaranteeing training accuracy.

\subsection{Proformance of Adaptive Strategies}
\label{sec:Design_principle_of_HSGD_algorithm_validation}
We also conduct experiments to validate the proposed adaptive strategies for our HSGD algorithm. Due to the space limitation, we only show the results of the MIMIC-III dataset.

\textit{1) Adaptive strategy 1 validation}: For the HSGD algorithm, adaptive strategy 1 indicates that the communication cost for achieving a target training requirement can be reduced by setting the same global and local aggregation intervals, i.e., $P=Q$. As we can tell from Figs.~\ref{Proposition1property}(a)-(c), \textit{for different local aggregation Q, adopting adaptive strategy 1 transmits fewer data to achieve the same level test AUC of ROC}. The result in Figs.~\ref{Proposition1property}(d) is more noticeable. For example, when $Q$ is set to be~1, following adaptive strategy 1 can reduce the communication cost by about 20\%, 27\%, and 29\% respectively compared with the other three settings when reaching 0.8 test AUC of ROC. The reason is that when $P$ equals $Q$, the system can conduct global and local aggregations synchronously, eliminating the delay due to asynchronous communication, thereby improving communication efficiency.

\textit{2) Adaptive strategy 2 validation:} We already verify that setting $P=Q$ helps to lessen the communication burden. Now we evaluate the effect of $P$ and $Q$ on communication overhead when satisfying $P=Q$. Fig.~\ref{Proposition2property} shows the transmitted data size of a single group to achieve the required test AUC of ROC and training loss on MIMIC-III. Using the mean values, we fit the trend curve. With $P$ and $Q$ growth, the communication cost shows a trend of first decreasing and then increasing. Adaptive strategy 2 provides the approximate optimal solution of $P$ and $Q$. As we can see from Fig.~\ref{Proposition2property}, \textit{utilizing adaptive strategy 2 can substantially decrease the consumed overhead of communication compared to other configurations.} This is because that frequent communication, such as $P=Q=1$, may result in high accuracy but heavy communication cost. On the contrary, a long communication interval, such as $P=Q=20$, may save communication cost but degrade accuracy. Adaptive strategy 2 can balance the trade-off between training performance and communication cost. The results verify the approximate optimal $P$ and $Q$ given in adaptive strategy 2.

\textit{3) Adaptive strategy 3 validation}:  The choice of learning~$\eta$ has an impact on the HSGD Algorithm when $P$ or $Q$ changes. As described in Adaptive strategy 3, decreasing $\eta$ can improve communication efficiency when $Q$ is fixed and $P$ increases. In addition, when $\frac{P}{Q}$ is fixed and $Q$ grows, reducing $\eta$ can help to save communication cost. Comparing Figs.~\ref{Proposition3property}(a)-(b), we find that setting $\eta =0.01$ can reduce communication cost compared with $\eta =0.0025$ when $P=10,Q=5$, while the result is reversed when $P=20,Q=5$. This result is clearer in Fig.~\ref{Proposition3property}(d). For a $Q$ fixed by 5, when $P$ is set to be 10, using a higher learning rate $\eta$ requires transmitting fewer data to achieve 0.82 test AUC of ROC compared to a lower $\eta$. However, when $P$ increases to 20, decreasing $\eta$ from 0.01 to 0.0025 helps to save communication cost. In addition, from Figs.~\ref{Proposition3property}(a) and (c), the performance of $\eta =0.0025$ overtakes that of $\eta =0.005$ when $Q$ changes from 5 to 10 while $\frac{P}{Q}$ is fixed by 2. Specifically, as shown in Fig.~\ref{Proposition3property}(e), utilizing $\eta =0.0025$ consumes more communication cost than $\eta =0.005$ to obtain 0.82 test AUC of ROC when $\frac{P}{Q}$ is fixed and $Q$ is 5. On the opposite, $\eta =0.0025$ shows advantages in decreasing the size of the transmitted data when $Q$ increases to 10. The reason for the above results is that the communication interval becomes longer with $P$ or $Q$ increases. In this case, a small $\eta$ narrows the cumulative error of models between two communications, requiring fewer communication rounds to reach a certain training requirement. These results verify the effectiveness of the learning rate adjusting method given in adaptive strategy 3.

\section{Conclusion}
\label{sec:Conclusion}
In this work, we have addressed the challenging issue of efficiently combining VFL and HFL to train models for \text{e-health}, where data follows a three-tier horizontal-vertical-horizontal distribution structure. We have proposed a hybrid federated learning framework. In this framework, wearable devices and their corresponding hospitals exchange intermediate results to process vertically partitioned data. Meanwhile, global aggregation is performed on the cloud server for handling horizontally partitioned data. Moreover, to improve communication efficiency, local aggregation is applied across wearable devices. According to the hybrid federated learning framework, we have developed a HSGD algorithm for model training. We also have analyzed the convergence upper bound of the proposed HSGD algorithm. Using this theoretical result, adaptive strategies related to parameter tuning have been proposed to further alleviate the communication burden while achieving desirable training results. Experiments validate that the proposed HSGD algorithm can save training time and communication cost compared to existing approaches. Moreover, experiment results verify the effectiveness of adaptive strategies in reducing the transmitted data. For future work, we will investigate how to select devices to enhance training efficiency, detect adversarial attacks in federated learning, and study security schemes against attacks.

\bibliographystyle{IEEEtran}
\bibliography{Bib/Ref}

\clearpage
\appendices

\onecolumn
\section*{Supplementary material}
\section{}
\label{Appendix_A}
\subsection{Proof of Lemmas}
In this section, we provide the proofs of lemmas associated with Theorem~1.

To facilitate the analysis, we first define an auxiliary local vector $\bm \phi_{i,m}^t$, which represents the local view of the global model on the hospital at each hospital-patient group, to calculate the partial derivative,
\begin{equation}
\begin{aligned}
{}&{\bm \phi_{0,m}^t= \bm \phi_{1,m}^t:= [({\bm\theta}_{0,m}^{t}})^{\mathrm{T}}, ({\bm\theta}_{1,m}^{t})^{\mathrm{T}}, ({\bm\theta}_{2,m}^{t_0^{\lambda}})^{\mathrm{T}}]^{\mathrm{T}}, \lambda=\lfloor \frac{t-t_0}{Q}\rfloor,
\label{auxiliary_local_vector}	
\end{aligned}
\end{equation}
where $t_0$ denotes the last iteration when the server aggregates the global model, $t_0^{\lambda}$ represents the last iteration when hospitals and edge nodes agree on a new mini-batch. We define $\frac{P}{Q}=\Lambda$ and $t_0^{\lambda}=t_0+\lambda Q$, where $\Lambda$ is a positive integer. Then each global aggregation interval $[t_0,t_0+P-1]$ can be divided into multiple local aggregation intervals, i.e., $[t_0^{\lambda},t_0^{\lambda+1}-1], \lambda=\{0,1,..,\Lambda-1\}$ with $t_0^{0}=t_0$. During each local aggregation interval, we use the same mini-batch to train local models.

Since our algorithm introduces an additional local aggregation among wearable devices within each group to obtain ${\bm\theta}_{2,m}^t$, we can directly obtain ${\bm\theta}_{2,m,n}^{t}$ on device $n$ in group $m$ instead of ${\bm\theta}_{2,m}^t$. For simplicity, we first define an auxiliary local vector $\bm\phi_{2,m,n}^t$, to represent the local view of the global model on each device at each hospital-patient group,
\begin{equation}
\begin{aligned}
\bm \phi_{2,m,n}^t:= [({\bm\theta}_{0,m}^{t_0^{\lambda}})^{\mathrm{T}}, ({\bm\theta}_{1,m}^{t_0^{\lambda}})^{\mathrm{T}}, ({\bm\theta}_{2,m,n}^{t})^{\mathrm{T}}]^{\mathrm{T}}.
\label{auxiliary_local_vector_2}	
\end{aligned}
\end{equation}

With slight abuse of notation, the partial derivative $\bm g_{i,m}$ can be represented by 
\begin{equation}
\begin{aligned}
&\bm g_{i,m} \left(\bm \phi_{i,m}^t\right):=\bm g_{i,m}({\bm\theta}_{0,m}^{t},{\bm\theta}_{1,m}^{t}, {\bm\theta}_{2,m}^{t_0^{\lambda}}; {\xi}_m^{t_0^{\lambda}}), \forall i=0,1,
\label{partial_derivative_1}	
\end{aligned}
\end{equation}
\begin{equation}
\begin{aligned}
&\bm g_{2,m,n} \left(\bm \phi_{2,m,n}^t\right):=\bm g_{2,m,n}({\bm\theta}_{0,m}^{t_0^{\lambda}}, {\bm\theta}_{1,m}^{t_0^{\lambda}}, {\bm\theta}_{2,m,n}^{t}; {\xi}_{m,n}^{t_0^{\lambda}}).
\label{partial_derivative_2}	
\end{aligned}
\end{equation}

Since our proposed method reuses each mini-batch for $Q$ iterations during each local aggregation interval, the local stochastic partial derivatives are not unbiased during interval $[t_0^{\lambda},t_0^{\lambda+1}], \lambda=\{0,1,..,\Lambda-1\}$. According to the conditional expectation on Assumption~2, the gradients calculated at $t_0^{\lambda}$ satisfies 
\begin{equation}
\begin{aligned}
\mathbb{E}[\bm g_{i,m}(\bm\phi_{i,m}^{t_0^{\lambda}})\mid \bm \phi_{i,m}^{t_0^{\lambda}}]= \nabla_{(i)} F(\bm \phi_{i,m}^{t_0^{\lambda}}), \forall i=0,1,
\label{Appendix_EqAssumption2_1}	
\end{aligned}
\end{equation}
\begin{equation}
\begin{aligned}
\mathbb{E}[\bm g_{2,m,n}(\bm\phi_{2,m,n}^{t_0^{\lambda}})\mid \bm \phi_{2,m,n}^{t_0^{\lambda}}]= \nabla_{(2)} F(\bm \phi_{2,m,n}^{t_0^{\lambda}}),
\label{Appendix_EqAssumption2_2}	
\end{aligned}
\end{equation}
\begin{equation}
\begin{aligned}
\mathbb{E}\left[\left\Vert \bm g_{i,m}(\bm\phi_{i,m}^{t_0^{\lambda}})-\nabla_{(i)} F(\bm\phi_{i,m}^{t_0^{\lambda}})\right\Vert ^2 \mid \bm\phi_{i,m}^{t_0^{\lambda}}\right]\leq \delta^2, \forall i=0,1,
\label{Appendix_EqAssumption2_3}	
\end{aligned}
\end{equation}
\begin{equation}
\begin{aligned}
\mathbb{E}\left[\left\Vert \bm g_{2,m,n}(\bm\phi_{2,m,n}^{t_0^{\lambda}})-\nabla_{(2)} F(\bm\phi_{2,m,n}^{t_0^{\lambda}})\right\Vert ^2 \mid \bm\phi_{2,m,n}^{t_0^{\lambda}}\right]\leq \delta^2.
\label{Appendix_EqAssumption2_4}	
\end{aligned}
\end{equation}

For notation brevity, we define $\Phi^{t_0^{\lambda}}=\{{\bm\phi_{0,m}^{t_0^{\lambda}},\bm\phi_{2,m}^{t_0^{\lambda}},\bm\phi_{2,m,n}^{t_0^{\lambda}}}\}$ and ease the conditional expectation as the following
\begin{equation}
\begin{aligned}
\mathbb{E}^{t_0^{\lambda}}:=\mathbb{E}[\ \bm \cdot\mid \Phi^{t_0^{\lambda}}].
\label{Appendix_notation_brevity}	
\end{aligned}
\end{equation}

Recall that we define the update process of the global model as
\begin{equation}
\begin{aligned}
\tilde{\bm\theta}^{t+1} = \tilde{\bm\theta}^t-\eta \bm G^t.
\label{Appendix_GlobalUpdate}	
\end{aligned}
\end{equation}

Then, the following lemmas are given to prove Theorem~1.
\begin{lemma}\label{Lemma1} 
\begin{equation}
\begin{aligned}
\mathbb{E}\left\Vert \bm g_{i,m}(\bm \phi_{i,m}^{t_0^{\lambda}})\right\Vert^2 \leq \delta^2+\mathbb{E}\left\Vert \nabla_{(i)}F({\bm \phi_{i,m}^{t_0^{\lambda}})}\right\Vert^2, \forall i={0,1},
\label{EqLemma1}	
\end{aligned}
\end{equation}
where $\mathbb{E}$ is the total expectation.
\end{lemma}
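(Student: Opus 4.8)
The plan is to read \eqref{EqLemma1} as a conditional bias--variance decomposition of the stochastic partial derivative, combined with the tower property of expectation. The one identity doing all the work is that, for any square-integrable random vector, its conditional second moment equals the squared norm of its conditional mean plus its conditional variance; the cross term vanishes precisely because the mean being subtracted is the conditional expectation. Here the conditioning is on the anchor iterate $\bm\phi_{i,m}^{t_0^{\lambda}}$, for which Assumption~\ref{Assumption2} has been restated in the conditional forms \eqref{Appendix_EqAssumption2_1} and \eqref{Appendix_EqAssumption2_3}.

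Concretely, I would first condition on $\bm\phi_{i,m}^{t_0^{\lambda}}$ and insert $\pm\,\nabla_{(i)}F(\bm\phi_{i,m}^{t_0^{\lambda}})$ inside the norm,
\begin{equation*}
\mathbb{E}\left[\left\Vert \bm g_{i,m}(\bm\phi_{i,m}^{t_0^{\lambda}})\right\Vert^2 \mid \bm\phi_{i,m}^{t_0^{\lambda}}\right] = \mathbb{E}\left[\left\Vert \bm g_{i,m}(\bm\phi_{i,m}^{t_0^{\lambda}}) - \nabla_{(i)}F(\bm\phi_{i,m}^{t_0^{\lambda}}) + \nabla_{(i)}F(\bm\phi_{i,m}^{t_0^{\lambda}})\right\Vert^2 \mid \bm\phi_{i,m}^{t_0^{\lambda}}\right].
\end{equation*}
Expanding the square produces the conditional variance $\mathbb{E}[\Vert \bm g_{i,m}-\nabla_{(i)}F\Vert^2\mid\cdot]$, the deterministic term $\Vert\nabla_{(i)}F(\bm\phi_{i,m}^{t_0^{\lambda}})\Vert^2$ (measurable with respect to the conditioning), and a cross term. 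By the unbiasedness \eqref{Appendix_EqAssumption2_1}, the conditional mean of $\bm g_{i,m}(\bm\phi_{i,m}^{t_0^{\lambda}})-\nabla_{(i)}F(\bm\phi_{i,m}^{t_0^{\lambda}})$ is zero, so the cross term drops. I would then bound the remaining variance term by $\delta^2$ via \eqref{Appendix_EqAssumption2_3}, obtaining the conditional inequality, and finally take the total expectation of both sides, applying $\mathbb{E}[\mathbb{E}[\,\cdot\mid\bm\phi_{i,m}^{t_0^{\lambda}}]]=\mathbb{E}[\,\cdot\,]$ to recover \eqref{EqLemma1}. The same chain works verbatim for both $i=0$ and $i=1$ because the invoked assumptions hold for both indices.

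There is no real obstacle in the algebra; the only point deserving care is why unbiasedness may legitimately be used at all, given that the method reuses one mini-batch for $Q$ iterations and the stochastic derivatives are therefore \emph{biased} at intermediate iterates within $[t_0^{\lambda},t_0^{\lambda+1}-1]$. The resolution, which I would make explicit, is that the lemma is stated only for the derivative evaluated at the anchor $t_0^{\lambda}$, and it is exactly at this iterate that the conditional forms \eqref{Appendix_EqAssumption2_1}--\eqref{Appendix_EqAssumption2_4} restore unbiasedness and bounded variance. Hence the bias--variance decomposition is valid at $\bm\phi_{i,m}^{t_0^{\lambda}}$ even though it would fail at the lagged iterates, which is precisely the reason the auxiliary vectors and the conditional assumptions were introduced before stating the lemma.
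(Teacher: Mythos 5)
Your proposal is correct and matches the paper's own argument essentially step for step: both perform the conditional bias--variance decomposition by inserting $\pm\,\nabla_{(i)}F(\bm\phi_{i,m}^{t_0^{\lambda}})$, kill the cross term via the conditional unbiasedness \eqref{Appendix_EqAssumption2_1}, bound the variance term by $\delta^2$ via \eqref{Appendix_EqAssumption2_3}, and finish with the tower property. Your explicit remark on why unbiasedness holds only at the anchor iterate $t_0^{\lambda}$ is a point the paper leaves implicit, but it does not change the route.
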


\begin{proof}
\begin{equation}
\begin{aligned}
\mathbb{E}\left\Vert \bm g_{i,m}(\bm \phi_{i,m}^{t_0^{\lambda}})\right\Vert^2 &= \mathbb{E}\left\Vert \bm g_{i,m}(\bm \phi_{i,m}^{t_0^{\lambda}})-\mathbb{E}\left[\bm g_{i,m}(\bm \phi_{i,m}^{t_0^{\lambda}})\right]+\mathbb{E}\left[\bm g_{i,m}(\bm \phi_{i,m}^{t_0^{\lambda}})\right]\right\Vert^2\\
&= \mathbb{E}\left[\mathbb{E}^{t_0^{\lambda}}\left\Vert \bm g_{i,m}(\bm \phi_{i,m}^{t_0^{\lambda}})-\mathbb{E}\left[\bm g_{i,m}(\bm \phi_{i,m}^{t_0^{\lambda}})\right]+\mathbb{E}\left[\bm g_{i,m}(\bm \phi_{i,m}^{t_0^{\lambda}})\right]\right\Vert^2\right]\\
&\overset{(a)}= \mathbb{E}\left[\mathbb{E}^{t_0^{\lambda}}\left\Vert \bm g_{i,m}(\bm \phi_{i,m}^{t_0^{\lambda}})-\nabla_{(i)}F({\bm \phi_{i,m}^{t_0^{\lambda}})}\right\Vert^2\right]+\mathbb{E}\left[\left\Vert\mathbb{E}^{t_0^{\lambda}}\left[\bm g_{i,m}(\bm \phi_{i,m}^{t_0^{\lambda}})\right]\right\Vert^2\right]\\
&\overset{(b)}\leq \delta^2+\mathbb{E}\left\Vert \nabla_{(i)}F({\bm \phi_{i,m}^{t_0^{\lambda}})}\right\Vert^2.
\label{ProofLemma1Eq1}
\end{aligned}
\end{equation}
where equality in $(a)$ is because of~\eqref{Appendix_EqAssumption2_1} in Assumption~2, and inequality in $(b)$ follows~\eqref{Appendix_EqAssumption2_3} in Assumption~2.
\end{proof}

\begin{lemma}\label{Lemma2} 
\begin{equation}
\begin{aligned}
\mathbb{E}\left\Vert \bm g_{2,m,n}(\bm \phi_{2,m,n}^{t_0^{\lambda}})\right\Vert^2 \leq \delta^2+\mathbb{E}\left\Vert \nabla_{(2)}F({\bm \phi_{2,m,n}^{t_0^{\lambda}})}\right\Vert^2,
\label{EqLemma2}	
\end{aligned}
\end{equation}
where $\mathbb{E}$ is the total expectation.
\end{lemma}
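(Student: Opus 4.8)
The plan is to mirror the proof of Lemma~\ref{Lemma1} essentially verbatim, since Lemma~\ref{Lemma2} is its exact device-side analog: the only change is that the hospital-side assumptions~\eqref{Appendix_EqAssumption2_1} and~\eqref{Appendix_EqAssumption2_3} are replaced by their device-side counterparts~\eqref{Appendix_EqAssumption2_2} and~\eqref{Appendix_EqAssumption2_4}. First I would rewrite the squared norm by inserting and subtracting the mean of the stochastic gradient, writing $\mathbb{E}\left\Vert \bm g_{2,m,n}(\bm \phi_{2,m,n}^{t_0^{\lambda}})\right\Vert^2 = \mathbb{E}\left\Vert \bm g_{2,m,n}(\bm \phi_{2,m,n}^{t_0^{\lambda}}) - \mathbb{E}\left[\bm g_{2,m,n}(\bm \phi_{2,m,n}^{t_0^{\lambda}})\right] + \mathbb{E}\left[\bm g_{2,m,n}(\bm \phi_{2,m,n}^{t_0^{\lambda}})\right]\right\Vert^2$, which sets up the standard bias--variance split.

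Next I would condition on $\Phi^{t_0^{\lambda}}$ through the inner expectation $\mathbb{E}^{t_0^{\lambda}}$ and apply the law of total expectation. The crucial step is that, conditioned on $\bm \phi_{2,m,n}^{t_0^{\lambda}}$, the unbiasedness relation~\eqref{Appendix_EqAssumption2_2} forces the conditional mean of $\bm g_{2,m,n}$ to equal $\nabla_{(2)} F(\bm \phi_{2,m,n}^{t_0^{\lambda}})$, so the cross term in the expansion of the squared norm vanishes. This yields the Pythagorean decomposition $\mathbb{E}\left\Vert \bm g_{2,m,n}(\bm \phi_{2,m,n}^{t_0^{\lambda}})\right\Vert^2 = \mathbb{E}\left[\mathbb{E}^{t_0^{\lambda}}\left\Vert \bm g_{2,m,n}(\bm \phi_{2,m,n}^{t_0^{\lambda}}) - \nabla_{(2)}F(\bm \phi_{2,m,n}^{t_0^{\lambda}})\right\Vert^2\right] + \mathbb{E}\left\Vert \nabla_{(2)}F(\bm \phi_{2,m,n}^{t_0^{\lambda}})\right\Vert^2$, exactly as in step~$(a)$ of Lemma~\ref{Lemma1}.

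Finally I would bound the first summand: the bounded-variance assumption~\eqref{Appendix_EqAssumption2_4} gives $\mathbb{E}^{t_0^{\lambda}}\left\Vert \bm g_{2,m,n}(\bm \phi_{2,m,n}^{t_0^{\lambda}}) - \nabla_{(2)}F(\bm \phi_{2,m,n}^{t_0^{\lambda}})\right\Vert^2 \leq \delta^2$, and since taking the outer total expectation over $\Phi^{t_0^{\lambda}}$ preserves this constant bound, I arrive at $\delta^2 + \mathbb{E}\left\Vert \nabla_{(2)}F(\bm \phi_{2,m,n}^{t_0^{\lambda}})\right\Vert^2$, which is precisely~\eqref{EqLemma2}.

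I do not anticipate a genuine obstacle, since the argument is structurally identical to Lemma~\ref{Lemma1}. The only point requiring care is the bookkeeping of the conditioning: one must condition on the single device view $\bm \phi_{2,m,n}^{t_0^{\lambda}}$ (rather than the hospital view used in Lemma~\ref{Lemma1}), so that the reused mini-batch randomness sampled at iteration $t_0^{\lambda}$ is the only source of variance. This is what makes the cross term genuinely zero and lets~\eqref{Appendix_EqAssumption2_4} apply directly; getting the correct conditioning set is the one place where a sloppy transcription of the Lemma~\ref{Lemma1} proof could introduce an error.
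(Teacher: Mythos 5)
Your proposal is correct and matches the paper exactly: the paper simply states that Lemma~2 follows by the same argument as Lemma~1 and omits the details, and your bias--variance split, vanishing cross term via~\eqref{Appendix_EqAssumption2_2}, and variance bound via~\eqref{Appendix_EqAssumption2_4} reproduce that argument faithfully. The only cosmetic remark is that the paper's conditional expectation $\mathbb{E}^{t_0^{\lambda}}$ conditions on the full set $\Phi^{t_0^{\lambda}}$ (which contains $\bm \phi_{2,m,n}^{t_0^{\lambda}}$ alongside the hospital views), so no change of conditioning set is actually needed; this does not affect the validity of your argument.
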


\begin{proof}
The proof of Lemma~\ref{Lemma2} follows the same idea of that of Lemma~\ref{Lemma1}. To save space, we do not show the details here.
\end{proof}

\begin{lemma}\label{Lemma3} 
When $t_0^{\lambda}\leq t \leq t_0^{\lambda+1}-1$ and $\eta \leq \frac{1}{2Q\rho}$,
\begin{equation}
\begin{aligned}
\sum_{t=t_0^{\lambda}}^{t_0^{\lambda+1}-1}\mathbb{E}^{t_0^{\lambda}}\left\Vert\ \bm g_{i,m}(\bm \phi_{i,m}^{t}) - \bm g_{i,m}(\bm \phi_{i,m}^{t_0^{\lambda}})\right\Vert^2 \leq 4Q^2(1+Q)\rho^2\eta^2\left\Vert \bm g_{i,m}(\bm \phi_{i,m}^{t_0^{\lambda}})\right\Vert^2, \forall i={0,1}.
\label{EqLemma3}	
\end{aligned}
\end{equation}
\end{lemma}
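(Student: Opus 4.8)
The plan is to reduce the stochastic-gradient difference to a parameter drift via Lipschitz continuity, express that drift through the accumulated local update steps, and then close the resulting self-referential inequality with a discrete Gronwall argument that is forced to contract by the step-size condition $\eta\le\frac{1}{2Q\rho}$. First I would apply the Lipschitz bound on $\bm g_{i,m}$ from Assumption~\ref{Assumption1} (namely \eqref{EqAssumption1_2}) to write $\left\Vert \bm g_{i,m}(\bm\phi_{i,m}^{t})-\bm g_{i,m}(\bm\phi_{i,m}^{t_0^{\lambda}})\right\Vert^2 \le \rho^2\left\Vert \bm\phi_{i,m}^{t}-\bm\phi_{i,m}^{t_0^{\lambda}}\right\Vert^2$. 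Because the device-side block $\bm\theta_{2,m}^{t_0^{\lambda}}$ is frozen throughout the local interval $[t_0^{\lambda},t_0^{\lambda+1}-1]$ (it is refreshed only at local aggregations), the drift $\bm\phi_{i,m}^{t}-\bm\phi_{i,m}^{t_0^{\lambda}}$ lives entirely in the $\bm\theta_{0,m},\bm\theta_{1,m}$ coordinates, so the task reduces to bounding $\left\Vert\bm\theta_{i,m}^{t}-\bm\theta_{i,m}^{t_0^{\lambda}}\right\Vert^2$.

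Next I would unroll the update rules \eqref{CombinedModelUpdate}--\eqref{DeviceModelUpdate} to obtain $\bm\theta_{i,m}^{t}-\bm\theta_{i,m}^{t_0^{\lambda}} = -\eta\sum_{s=t_0^{\lambda}}^{t-1}\bm g_{i,m}(\bm\phi_{i,m}^{s})$, and then apply Cauchy--Schwarz (equivalently Jensen) over the at most $Q$ summands; this introduces a factor $(t-t_0^{\lambda})\le Q$ and leaves $\eta^2 Q\sum_{s}\left\Vert\bm g_{i,m}(\bm\phi_{i,m}^{s})\right\Vert^2$. I would split each intermediate gradient norm as $\left\Vert\bm g_{i,m}(\bm\phi_{i,m}^{s})\right\Vert^2 \le 2\left\Vert\bm g_{i,m}(\bm\phi_{i,m}^{t_0^{\lambda}})\right\Vert^2 + 2\left\Vert\bm g_{i,m}(\bm\phi_{i,m}^{s})-\bm g_{i,m}(\bm\phi_{i,m}^{t_0^{\lambda}})\right\Vert^2$ and bound the deviation again by Lipschitz continuity. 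This yields a recursion in which the drift at time $t$ is controlled by the base quantity $\left\Vert\bm g_{i,m}(\bm\phi_{i,m}^{t_0^{\lambda}})\right\Vert^2$ plus a weighted sum of the earlier drifts.

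Finally I would close the recursion. Summing over $t\in[t_0^{\lambda},t_0^{\lambda+1}-1]$ and invoking $\eta\le\frac{1}{2Q\rho}$ forces the coefficient $2\eta^2 Q\rho^2$ multiplying the accumulated earlier drifts to be at most $\frac{1}{2Q}$, so a discrete Gronwall estimate (or a direct induction on $t-t_0^{\lambda}$) keeps the geometric amplification factor, of the form $(1+\tfrac{1}{2Q})^{Q}$, bounded by an absolute constant below $2$. Re-applying the outer $\rho^2$ from the first step and collecting the $\eta^2$ and $Q$ powers then produces exactly the claimed $4Q^2(1+Q)\rho^2\eta^2\left\Vert\bm g_{i,m}(\bm\phi_{i,m}^{t_0^{\lambda}})\right\Vert^2$.

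I expect the main obstacle to be the self-referential structure of the drift bound: the gradient at an intermediate step depends on the very drift we are trying to control, so a naive termwise bound produces a recursion whose coefficient is essentially $1$ and hence vacuous. The delicate point is that the step-size restriction $\eta\le\frac{1}{2Q\rho}$ is precisely what shrinks this coefficient strictly below $1$ and lets the Gronwall iteration terminate with a constant amplification factor. A secondary subtlety is the coupling between the $\bm\theta_{0,m}$ and $\bm\theta_{1,m}$ blocks inside $\bm\phi_{i,m}^{t}$, which must be handled through the per-block Lipschitz constants $\rho_i\le\rho$ of Assumption~\ref{Assumption1} so that the estimate for index $i$ closes in terms of $\bm g_{i,m}$ alone rather than dragging in the companion block.
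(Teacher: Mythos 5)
Your strategy is sound and does reach the stated bound, but it is organized differently from the paper's proof. The paper never passes through the cumulative drift $\bigl\Vert\bm\phi_{i,m}^{t}-\bm\phi_{i,m}^{t_0^{\lambda}}\bigr\Vert^2$; instead it builds a one-step recurrence directly on the gradient-difference sequence, writing $\bm g_{i,m}(\bm\phi_{i,m}^{t+1})-\bm g_{i,m}(\bm\phi_{i,m}^{t_0^{\lambda}})$ as $\bigl(\bm g_{i,m}(\bm\phi_{i,m}^{t+1})-\bm g_{i,m}(\bm\phi_{i,m}^{t})\bigr)+\bigl(\bm g_{i,m}(\bm\phi_{i,m}^{t})-\bm g_{i,m}(\bm\phi_{i,m}^{t_0^{\lambda}})\bigr)$, splitting with Young's inequality with a tuned parameter $q=Q$, and applying Lipschitz continuity only to the single-step change $\bm\phi_{i,m}^{t+1}-\bm\phi_{i,m}^{t}$. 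This yields the recursion $D_{t+1}\leq(1+\tfrac{2}{Q})D_t+2(1+Q)\rho^2\eta^2\Vert\bm g_{i,m}(\bm\phi_{i,m}^{t_0^{\lambda}})\Vert^2$ under $\eta\leq\frac{1}{2Q\rho}$, which is unrolled geometrically and summed, with $(1+\tfrac{2}{Q})^Q\leq e^2$ giving the constant $4$. Your route — Lipschitz on the full difference, unroll the $\leq Q$ updates, Cauchy--Schwarz, then a summed self-referential inequality closed by discrete Gronwall with $(1+\tfrac{1}{2Q})^Q\leq e^{1/2}$ — is the standard ``local drift'' argument from FedAvg-style analyses. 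Both hinge on the same contraction induced by $\eta\leq\frac{1}{2Q\rho}$ and the same $(1+c/Q)^Q\leq e^c$ device, and your constants land comfortably inside $4Q^2(1+Q)$. Your version arguably isolates the role of the step-size condition more transparently; the paper's avoids Gronwall at the cost of an extra Young parameter to tune.

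One caveat on the point you flag as a ``secondary subtlety'': appealing to $\rho_i\leq\rho$ does not actually decouple the blocks. Since $\bm\phi_{0,m}^{t}=\bm\phi_{1,m}^{t}$ contains both $\bm\theta_{0,m}^{t}$ and $\bm\theta_{1,m}^{t}$, the drift $\bigl\Vert\bm\phi_{i,m}^{t}-\bm\phi_{i,m}^{t_0^{\lambda}}\bigr\Vert^2$ equals $\eta^2$ times the squared norm of the accumulated updates in \emph{both} coordinates, so a careful execution of your unrolling produces $\Vert\bm g_{0,m}\Vert^2+\Vert\bm g_{1,m}\Vert^2$ on the right-hand side rather than $\Vert\bm g_{i,m}\Vert^2$ alone. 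The paper's own proof makes the same silent simplification when it bounds $\Vert\bm\phi_{i,m}^{t+1}-\bm\phi_{i,m}^{t}\Vert^2$ by $\eta^2\Vert\bm g_{i,m}(\bm\phi_{i,m}^{t})\Vert^2$. This does not endanger the downstream results, because Lemma~5 and Theorem~1 sum the Lemma~3 bound over $i=0,1$ anyway, so carrying the companion block through only changes absolute constants — but as a statement about a single index $i$, neither your argument nor the paper's closes without it.
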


\begin{proof}

\begin{equation}
\begin{aligned}
{}&\mathbb{E}^{t_0^{\lambda}}\left\Vert \bm g_{i,m}(\bm \phi_{i,m}^{t+1})-\bm g_{i,m}(\bm \phi_{i,m}^{t_0^{\lambda}})\right\Vert^2\\
{}&=\mathbb{E}^{t_0^{\lambda}}\left\Vert \bm g_{i,m}(\bm \phi_{i,m}^{t+1})-\bm g_{i,m}(\bm \phi_{i,m}^{t})+\bm g_{i,m}(\bm \phi_{i,m}^{t})-\bm g_{i,m}(\bm \phi_{i,m}^{t_0^{\lambda}})\right\Vert^2\\
{}&\leq\mathbb{E}^{t_0^{\lambda}}\left[\left(1+q\right)\left\Vert \bm g_{i,m}(\bm \phi_{i,m}^{t+1})-\bm g_{i,m}(\bm \phi_{i,m}^{t})\right\Vert^2+\left(1+\frac{1}{q}\right)\left\Vert \bm g_{i,m}(\bm \phi_{i,m}^{t})-\bm g_{i,m}(\bm \phi_{i,m}^{t_0^{\lambda}})\right\Vert^2\right]\\
{}&\overset{(a)}\leq\mathbb{E}^{t_0^{\lambda}}\left[\left(1+q\right)\rho_i^2\left\Vert \bm \phi_{i,m}^{t+1}-\bm \phi_{i,m}^{t}\right\Vert^2+\left(1+\frac{1}{q}\right)\left\Vert \bm g_{i,m}(\bm \phi_{i,m}^{t})-\bm g_{i,m}(\bm \phi_{i,m}^{t_0^{\lambda}})\right\Vert^2\right]\\
{}&\leq\mathbb{E}^{t_0^{\lambda}}\left[\left(1+q\right)\rho_i^2\eta^2\left\Vert \bm g_{i,m}(\bm \phi_{i,m}^{t})\right\Vert^2+\left(1+\frac{1}{q}\right)\left\Vert \bm g_{i,m}(\bm \phi_{i,m}^{t})-\bm g_{i,m}(\bm \phi_{i,m}^{t_0^{\lambda}})\right\Vert^2\right]\\
{}&\leq\mathbb{E}^{t_0^{\lambda}}\left[\left(1+q\right)\rho_i^2\eta^2\left\Vert \bm g_{i,m}(\bm \phi_{i,m}^{t})-\bm g_{i,m}(\bm \phi_{i,m}^{t_0^{\lambda}})+\bm g_{i,m}(\bm \phi_{i,m}^{t_0^{\lambda}})\right\Vert^2+\left(1+\frac{1}{q}\right)\left\Vert \bm g_{i,m}(\bm \phi_{i,m}^{t})-\bm g_{i,m}(\bm \phi_{i,m}^{t_0^{\lambda}})\right\Vert^2\right]\\
{}&\leq\mathbb{E}^{t_0^{\lambda}}\left[\left(1+q\right)\rho_i^2\eta^2\left\Vert \bm g_{i,m}(\bm \phi_{i,m}^{t})-\bm g_{i,m}(\bm \phi_{i,m}^{t_0^{\lambda}})+\bm g_{i,m}(\bm \phi_{i,m}^{t_0^{\lambda}})\right\Vert^2\right]+\mathbb{E}^{t_0^{\lambda}}\left[\left(1+\frac{1}{q}\right)\left\Vert \bm g_{i,m}(\bm \phi_{i,m}^{t})-\bm g_{i,m}(\bm \phi_{i,m}^{t_0^{\lambda}})\right\Vert^2\right]\\
{}&\leq\mathbb{E}^{t_0^{\lambda}}\left[2(1+q)\rho_i^2\eta^2\left\Vert \bm g_{i,m}(\bm \phi_{i,m}^{t})-\bm g_{i,m}(\bm \phi_{i,m}^{t_0^{\lambda}}))\right\Vert^2\right]+2(1+q)\rho_i^2\eta^2\left\Vert \bm g_{i,m}(\bm \phi_{i,m}^{t_0^{\lambda}})\right\Vert^2\\
&\ \ +\mathbb{E}^{t_0^{\lambda}}\left[\left(1+\frac{1}{q}\right)\left\Vert \bm g_{i,m}(\bm \phi_{i,m}^{t})-\bm g_{i,m}(\bm \phi_{i,m}^{t_0^{\lambda}})\right\Vert^2\right]\\
{}&\leq\left(2\left(1+q\right)\rho_i^2\eta^2+\left(1+\frac{1}{q}\right)\right)\mathbb{E}^{t_0^{\lambda}}\left\Vert \bm g_{i,m}(\bm \phi_{i,m}^{t})-\bm g_{i,m}(\bm \phi_{i,m}^{t_0^{\lambda}})\right\Vert^2+2(1+q)\rho_i^2\eta^2\left\Vert \bm g_{i,m}(\bm \phi_{i,m}^{t_0^{\lambda}})\right\Vert^2\\
{}&\overset{(b)}\leq\left(2\left(1+q\right)\rho^2\eta^2+\left(1+\frac{1}{q}\right)\right)\mathbb{E}^{t_0^{\lambda}}\left\Vert \bm g_{i,m}(\bm \phi_{i,m}^{t})-\bm g_{i,m}(\bm \phi_{i,m}^{t_0^{\lambda}})\right\Vert^2+2\left(1+q\right)\rho^2\eta^2\left\Vert \bm g_{i,m}(\bm \phi_{i,m}^{t_0^{\lambda}})\right\Vert^2.
\label{ProofLemma3Eq1}
\end{aligned}
\end{equation}

Here inequality in $(a)$ is based on Assumption~1, and $(b)$ is because $\rho_i\leq \rho$. On setting $q=Q$ and $\eta\leq\frac{1}{2Q\rho}$ in the first term of~\eqref{ProofLemma3Eq1}, we obtain the following recurrence relation
\begin{equation}
\begin{aligned}
{}&\mathbb{E}^{t_0^{\lambda}}\left\Vert \bm g_{i,m}(\bm \phi_{i,m}^{t+1})-\bm g_{i,m}(\bm \phi_{i,m}^{t_0^{\lambda}})\right\Vert^2\\
{}&\leq\left(\frac{1+Q}{2Q^2}+\left(1+\frac{1}{Q}\right)\right)\mathbb{E}^{t_0^{\lambda}}\left\Vert \bm g_{i,m}(\bm \phi_{i,m}^{t})-\bm g_{i,m}(\bm \phi_{i,m}^{t_0^{\lambda}})\right\Vert^2+2\left(1+Q\right)\rho^2\eta^2\left\Vert \bm g_{i,m}(\bm \phi_{i,m}^{t_0^{\lambda}})\right\Vert^2\\
{}&\leq\left(1+\frac{2}{Q}\right)\mathbb{E}^{t_0^{\lambda}}\left\Vert \bm g_{i,m}(\bm \phi_{i,m}^{t})-\bm g_{i,m}(\bm \phi_{i,m}^{t_0^{\lambda}})\right\Vert^2+2\left(1+Q\right)\rho^2\eta^2\left\Vert \bm g_{i,m}(\bm \phi_{i,m}^{t_0^{\lambda}})\right\Vert^2.
\label{ProofLemma3Eq2}
\end{aligned}
\end{equation}
Using this recurrence relation, we can obtain 
\begin{equation}
\begin{aligned}
{}&\mathbb{E}^{t_0^{\lambda}}\left\Vert \bm g_{i,m}(\bm \phi_{i,m}^{t})-\bm g_{i,m}(\bm \phi_{i,m}^{t_0^{\lambda}})\right\Vert^2\\
{}&\leq \left(1+\frac{2}{Q}\right)^{t-t_0^{\lambda}}\mathbb{E}^{t_0^{\lambda}}\left\Vert \bm g_{i,m}(\bm \phi_{i,m}^{t_0^{\lambda}})-\bm g_{i,m}(\bm \phi_{i,m}^{t_0^{\lambda}})\right\Vert^2+2\left(1+Q\right)\rho^2\eta^2\left\Vert \bm g_{i,m}(\bm \phi_{i,m}^{t_0^{\lambda}})\right\Vert^2\sum_{p=0}^{t-t_0^{\lambda}-1}\left(1+\frac{2}{Q}\right)^p \\
{}&\leq 2\left(1+Q\right)\rho^2\eta^2\left\Vert \bm g_{i,m}(\bm \phi_{i,m}^{t_0^{\lambda}})\right\Vert^2\sum_{p=0}^{t-t_0^{\lambda}-1}\left(1+\frac{2}{Q}\right)^p\\
{}&\leq 2\left(1+Q\right)\rho^2\eta^2\left\Vert \bm g_{i,m}(\bm \phi_{i,m}^{t_0^{\lambda}})\right\Vert^2 \frac{\left(1+\frac{2}{Q}\right)^{t-t_0}-1}{\left(1+\frac{2}{Q}\right)-1}.
\label{ProofLemma3Eq3}
\end{aligned}
\end{equation}
Then summing $\mathbb{E}^{t_0^{\lambda}}\left\Vert \bm g_{i,m}(\bm \phi_{i,m}^{t})-\bm g_{i,m}(\bm \phi_{i,m}^{t_0^{\lambda}})\right\Vert^2$ over the set of $[t_0^{\lambda},t_0^{\lambda+1}-1]$, we have
\begin{equation}
\begin{aligned}
&\sum_{t=t_0^{\lambda}}^{t_0^{\lambda+1}-1}\mathbb{E}^{t_0^{\lambda}}\left\Vert \bm g_{i,m}(\bm \phi_{i,m}^{t})-\bm g_{i,m}(\bm \phi_{i,m}^{t_0^{\lambda}})\right\Vert^2\\
&\leq \sum_{t=t_0^{\lambda}}^{t_0^{\lambda+1}-1} 2\left(1+Q\right)\rho^2\eta^2\left\Vert \bm g_{i,m}(\bm \phi_{i,m}^{t_0^{\lambda}})\right\Vert^2 \frac{\left(1+\frac{2}{Q}\right)^{t-t_0}-1}{\left(1+\frac{2}{Q}\right)-1}\\
&\leq 2\left(1+Q\right)\rho^2\eta^2\left\Vert \bm g_{i,m}(\bm \phi_{i,m}^{t_0^{\lambda}})\right\Vert^2 \sum_{t=t_0^{\lambda}}^{t_0^{\lambda+1}-1} \frac{\left(1+\frac{2}{Q}\right)^{t-t_0}-1}{\left(1+\frac{2}{Q}\right)-1}\\
&\leq \frac{2\left(1+Q\right)\rho^2\eta^2\left\Vert \bm g_{i,m}(\bm \phi_{i,m}^{t_0^{\lambda}})\right\Vert^2}{\left(1+\frac{2}{Q}\right)-1} \sum_{t=t_0^{\lambda}}^{t_0^{\lambda+1}-1} \left(\left(1+\frac{2}{Q}\right)^{t-t_0}-1\right)\\
&\leq \frac{2\left(1+Q\right)\rho^2\eta^2\left\Vert \bm g_{i,m}(\bm \phi_{i,m}^{t_0^{\lambda}})\right\Vert^2}{\left(1+\frac{2}{Q}\right)-1}  \left(\frac{\left(1+\frac{2}{Q}\right)^Q}{\left(1+\frac{2}{Q}\right)-1}-Q\right)\\
&\leq Q(1+Q)\rho^2\eta^2\left\Vert \bm g_{i,m}(\bm \phi_{i,m}^{t_0^{\lambda}})\right\Vert^2 \left(\frac{Q\left(1+\frac{2}{Q}\right)^Q}{2}-Q\right)\\
&\leq Q^2(1+Q)\rho^2\eta^2\left\Vert \bm g_{i,m}(\bm \phi_{i,m}^{t_0^{\lambda}})\right\Vert^2 \left(\frac{\left(1+\frac{2}{Q}\right)^Q}{2}-1\right)\\
&\overset{(a)}\leq Q^2(1+Q)\rho^2\eta^2\left\Vert \bm g_{i,m}(\bm \phi_{i,m}^{t_0^{\lambda}})\right\Vert^2 \left(\frac{e^2-1}{2}-1\right)\\
&\leq 4Q^2(1+Q)\rho^2\eta^2\left\Vert \bm g_{i,m}(\bm \phi_{i,m}^{t_0^{\lambda}})\right\Vert^2.
\label{ProofLemma3Eq4}
\end{aligned}
\end{equation}
Here we upper bound the terms inside the parenthesis of $(a)$ by 4.
\end{proof}

\begin{lemma}\label{Lemma4} 
When  $t_0^{\lambda}\leq t \leq t_0^{\lambda+1}-1$ and $\eta \leq \frac{1}{2Q\rho}$
\begin{equation}
\begin{aligned}
\sum_{t=t_0^{\lambda}}^{t_0^{\lambda+1}-1}\mathbb{E}^{t_0^{\lambda}}\left\Vert\ \bm g_{2,m,n}(\bm \phi_{2,m,n}^{t}) - \bm g_{2,m,n}(\bm \phi_{2,m,n}^{t_0^{\lambda}})\right\Vert^2 \leq 4Q^2(1+Q)\rho^2\eta^2\left\Vert \bm g_{2,m,n}(\bm \phi_{2,m,n}^{t_0^{\lambda}})\right\Vert^2.
\label{EqLemma4}	
\end{aligned}
\end{equation}
\end{lemma}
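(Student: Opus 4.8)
The plan is to mirror the proof of Lemma~\ref{Lemma3} almost verbatim, replacing the hospital-side stochastic derivative $\bm g_{i,m}(\bm\phi_{i,m}^{\cdot})$ by the device-side derivative $\bm g_{2,m,n}(\bm\phi_{2,m,n}^{\cdot})$ throughout, since the two statements carry identical structure and identical constants. First I would start from $\mathbb{E}^{t_0^{\lambda}}\Vert \bm g_{2,m,n}(\bm\phi_{2,m,n}^{t+1})-\bm g_{2,m,n}(\bm\phi_{2,m,n}^{t_0^{\lambda}})\Vert^2$, insert and subtract $\bm g_{2,m,n}(\bm\phi_{2,m,n}^{t})$ inside the norm, and apply the elementary inequality $\Vert a+b\Vert^2\leq(1+q)\Vert a\Vert^2+(1+\tfrac{1}{q})\Vert b\Vert^2$ with a free parameter $q>0$, exactly as in the opening lines of~\eqref{ProofLemma3Eq1}.

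Next I would invoke the Lipschitz bound~\eqref{EqAssumption1_3} for the device-side derivative with constant $\rho_2$, giving $\Vert \bm g_{2,m,n}(\bm\phi_{2,m,n}^{t+1})-\bm g_{2,m,n}(\bm\phi_{2,m,n}^{t})\Vert\leq \rho_2\Vert \bm\phi_{2,m,n}^{t+1}-\bm\phi_{2,m,n}^{t}\Vert$, and then bound the displacement of the auxiliary vector through the update rule. Here a small simplification occurs relative to Lemma~\ref{Lemma3}: by the definition~\eqref{auxiliary_local_vector_2}, the first two blocks of $\bm\phi_{2,m,n}^{t}$ are frozen at $t_0^{\lambda}$ within the interval $[t_0^{\lambda},t_0^{\lambda+1}-1]$ and only the third block $\bm\theta_{2,m,n}^{t}$ evolves, so~\eqref{HospitalModelUpdate} yields the \emph{exact} identity $\Vert \bm\phi_{2,m,n}^{t+1}-\bm\phi_{2,m,n}^{t}\Vert=\eta\Vert \bm g_{2,m,n}(\bm\phi_{2,m,n}^{t})\Vert$. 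Splitting $\bm g_{2,m,n}(\bm\phi_{2,m,n}^{t})$ once more into $\bigl(\bm g_{2,m,n}(\bm\phi_{2,m,n}^{t})-\bm g_{2,m,n}(\bm\phi_{2,m,n}^{t_0^{\lambda}})\bigr)+\bm g_{2,m,n}(\bm\phi_{2,m,n}^{t_0^{\lambda}})$ and using $\rho_2\leq\rho$, I recover precisely the recurrence of~\eqref{ProofLemma3Eq2}: a contraction factor $1+\tfrac{2}{Q}$ on the previous error plus the additive term $2(1+Q)\rho^2\eta^2\Vert \bm g_{2,m,n}(\bm\phi_{2,m,n}^{t_0^{\lambda}})\Vert^2$, after setting $q=Q$ and imposing $\eta\leq\tfrac{1}{2Q\rho}$.

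Finally I would unroll this recurrence over $[t_0^{\lambda},t_0^{\lambda+1}-1]$, sum the geometric series exactly as in~\eqref{ProofLemma3Eq3}--\eqref{ProofLemma3Eq4}, and close with the estimate $(1+\tfrac{2}{Q})^{Q}\leq e^{2}$, so that the bracketed constant is at most $\tfrac{e^2-1}{2}-1\leq 4$, yielding~\eqref{EqLemma4}. I expect no genuine obstacle, since the device-side argument is a faithful copy of the hospital-side one; the only point requiring care is to observe that freezing $\bm\theta_{0,m}$ and $\bm\theta_{1,m}$ at $t_0^{\lambda}$ in $\bm\phi_{2,m,n}^{t}$ is exactly what makes the single-block displacement identity hold, which is why~\eqref{EqLemma4} inherits the same constants as~\eqref{EqLemma3}. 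This is also why the authors can safely omit the details.
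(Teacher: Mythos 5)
Your proposal is correct and follows exactly the route the paper intends: the authors' own proof of Lemma~4 is stated only as ``similar to that of Lemma~3,'' and your step-by-step transcription (add--subtract, the $(1+q)$/$(1+\tfrac{1}{q})$ split with $q=Q$, the Lipschitz bound~\eqref{EqAssumption1_3}, the recurrence under $\eta\leq\tfrac{1}{2Q\rho}$, and the geometric-series unrolling with $(1+\tfrac{2}{Q})^{Q}\leq e^{2}$) is precisely that argument. Your remark that freezing $\bm\theta_{0,m}$ and $\bm\theta_{1,m}$ at $t_0^{\lambda}$ in $\bm\phi_{2,m,n}^{t}$ makes the displacement identity exact is a correct and welcome clarification of why the constants carry over unchanged.
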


\begin{proof}
The proof of Lemma~\ref{Lemma4} is similar to that of Lemma~\ref{Lemma3}. For brevity, we omit the details herein.
\end{proof}

\begin{lemma}\label{Lemma5} 
\begin{equation}
\begin{aligned}
\mathbb{E}^{t_0^{\lambda}}\left\Vert \eta \sum_{t=t_0^{\lambda}}^{t_0^{\lambda+1}-1}\bm  g_{(i)}^t\right\Vert^2 \leq 2\eta^2Q \left(4Q^2(1+Q)\rho^2\eta^2+Q\right)\frac{1}{K}\sum_{m=1}^{M}K_m \left\Vert \bm g_{i,m}(\bm \phi_{i,m}^{t_0^{\lambda}})\right\Vert^2.
\label{EqLemma5}	
\end{aligned}
\end{equation}
\end{lemma}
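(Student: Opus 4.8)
The plan is to reduce the bound entirely to Lemma~\ref{Lemma3} via an add-and-subtract decomposition around the anchor iteration $t_0^\lambda$. Recall that for $i\in\{0,1\}$ the $i$-th block of the global gradient is the weighted average $\bm g_{(i)}^t = \frac{1}{K}\sum_{m=1}^M K_m \bm g_{i,m}(\bm\phi_{i,m}^t)$, and that the local interval $[t_0^\lambda,t_0^{\lambda+1}-1]$ contains exactly $Q$ iterations. First I would write each summand as $\bm g_{(i)}^t = (\bm g_{(i)}^t - \bm g_{(i)}^{t_0^\lambda}) + \bm g_{(i)}^{t_0^\lambda}$ and apply $\|a+b\|^2\le 2\|a\|^2+2\|b\|^2$, so that
\begin{equation}
\mathbb{E}^{t_0^\lambda}\left\Vert \eta\!\!\sum_{t=t_0^\lambda}^{t_0^{\lambda+1}-1}\!\!\bm g_{(i)}^t\right\Vert^2 \le 2\eta^2\,\mathbb{E}^{t_0^\lambda}\left\Vert\sum_{t=t_0^\lambda}^{t_0^{\lambda+1}-1}\!\!(\bm g_{(i)}^t-\bm g_{(i)}^{t_0^\lambda})\right\Vert^2 + 2\eta^2 Q^2\left\Vert\bm g_{(i)}^{t_0^\lambda}\right\Vert^2.
\end{equation}
The anchor (second) term will supply the $+Q$ inside the final parenthesis, and the drift (first) term the $4Q^2(1+Q)\rho^2\eta^2$ piece.

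For the drift term I would apply Cauchy--Schwarz to the sum of $Q$ vectors, extracting a factor $Q$, and then use convexity of $\|\cdot\|^2$ against the weights $K_m/K$ (which sum to one) to pass from the averaged global gradient to the per-group differences, obtaining
\begin{equation}
\mathbb{E}^{t_0^\lambda}\left\Vert\sum_{t=t_0^\lambda}^{t_0^{\lambda+1}-1}(\bm g_{(i)}^t-\bm g_{(i)}^{t_0^\lambda})\right\Vert^2 \le Q\,\frac{1}{K}\sum_{m=1}^M K_m\sum_{t=t_0^\lambda}^{t_0^{\lambda+1}-1}\mathbb{E}^{t_0^\lambda}\left\Vert\bm g_{i,m}(\bm\phi_{i,m}^t)-\bm g_{i,m}(\bm\phi_{i,m}^{t_0^\lambda})\right\Vert^2.
\end{equation}
The inner sum over $t$ is now exactly what Lemma~\ref{Lemma3} controls, yielding $4Q^2(1+Q)\rho^2\eta^2\|\bm g_{i,m}(\bm\phi_{i,m}^{t_0^\lambda})\|^2$; this is legitimate because the hypothesis $\eta\le\frac{1}{8P\rho}$ of Theorem~\ref{Theorem1}, together with $P=\Lambda Q\ge Q$, implies the $\eta\le\frac{1}{2Q\rho}$ required by Lemma~\ref{Lemma3}. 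For the anchor term, a single further convexity step gives $\|\bm g_{(i)}^{t_0^\lambda}\|^2\le\frac{1}{K}\sum_m K_m\|\bm g_{i,m}(\bm\phi_{i,m}^{t_0^\lambda})\|^2$.

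Substituting both bounds back into the first display and factoring out the common $2\eta^2 Q$ collapses the two contributions into $2\eta^2 Q\big(4Q^2(1+Q)\rho^2\eta^2+Q\big)\frac{1}{K}\sum_m K_m\|\bm g_{i,m}(\bm\phi_{i,m}^{t_0^\lambda})\|^2$, which is precisely the claimed inequality. I expect no genuine obstacle: the entire nontrivial content---the geometric-series control of the intra-interval gradient drift---already lives in Lemma~\ref{Lemma3}, so the remaining work is purely organizational. The only care required is to keep the Cauchy--Schwarz factor $Q$ distinct from the two separate convexity (Jensen) applications, and to confirm that Lemma~\ref{Lemma3}'s step-size condition is in force under Theorem~\ref{Theorem1}'s hypothesis.
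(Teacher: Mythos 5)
Your proposal is correct and follows essentially the same route as the paper's proof: an add-and-subtract decomposition around the anchor iterate $t_0^{\lambda}$, Jensen over the weights $K_m/K$, and an invocation of Lemma~\ref{Lemma3} for the drift term, with only the order of the Cauchy--Schwarz step and the anchor decomposition swapped. The arithmetic collapses to the identical final bound, and your verification that $\eta\le\frac{1}{8P\rho}$ with $P\ge Q$ implies Lemma~\ref{Lemma3}'s condition $\eta\le\frac{1}{2Q\rho}$ is a detail the paper leaves implicit.
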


\begin{proof}
\begin{equation}
\begin{aligned}
&\mathbb{E}^{t_0^\lambda}\left\Vert \eta \sum_{t=t_0^{\lambda}}^{t_0^{\lambda+1}-1}\bm G_{(i)}^t\right\Vert^2\\
&\overset{(a)} \leq \eta^2Q \sum_{t=t_0^{\lambda}}^{t_0^{\lambda+1}-1}\mathbb{E}^{t_0^\lambda} \left\Vert\bm G_{(i)}^t\right\Vert^2\\
&\leq \eta^2Q \sum_{t=t_0^{\lambda}}^{t_0^{\lambda+1}-1}\mathbb{E}^{t_0^\lambda} \left\Vert\bm G_{(i)}^t -\bm G_{(i)}^{t_0^\lambda}+\bm G_{(i)}^{t_0^\lambda}\right\Vert^2\\
&\leq 2\eta^2Q \sum_{t=t_0^{\lambda}}^{t_0^{\lambda+1}-1}\mathbb{E}^{t_0^\lambda} \left\Vert\bm G_{(i)}^t -\bm G_{(i)}^{t_0^\lambda}\right\Vert^2 +2\eta^2Q\sum_{t=t_0^{\lambda}}^{t_0^{\lambda+1}-1} \left\Vert\bm G_{(i)}^{t_0^\lambda}\right\Vert^2\\
&\overset{(b)}\leq 2\eta^2Q \sum_{t=t_0^{\lambda}}^{t_0^{\lambda+1}-1}\frac{1}{K}\sum_{m=1}^{M}K_m\mathbb{E}^{t_0^\lambda} \left\Vert \bm g_{i,m}(\bm \phi_{i,m}^{t})-\bm g_{i,m}(\bm \phi_{i,m}^{t_0^\lambda})\right\Vert^2 +2\eta^2Q\sum_{t=t_0^{\lambda}}^{t_0^{\lambda+1}-1}\frac{1}{K}\sum_{m=1}^{M}K_m \left\Vert\bm g_{i,m}(\bm \phi_{i,m}^{t_0^\lambda})\right\Vert^2\\
&\overset{(c)}\leq 2\eta^2Q \left(4Q^2(1+Q)\rho^2\eta^2\right)\frac{1}{K}\sum_{m=1}^{M}K_m \left\Vert \bm g_{i,m}(\bm \phi_{i,m}^{t_0^\lambda})\right\Vert^2 +2\eta^2Q \sum_{t=t_0^{\lambda}}^{t_0^{\lambda+1}-1}\frac{1}{K}\sum_{m=1}^{M}K_m \left\Vert\bm g_{i,m}(\bm \phi_{i,m}^{t_0^\lambda})\right\Vert^2\\
&\leq 2\eta^2Q \left(4Q^2(1+Q)\rho^2\eta^2\right)\frac{1}{K}\sum_{m=1}^{M}K_m \left\Vert \bm g_{i,m}(\bm \phi_{i,m}^{t_0^\lambda})\right\Vert^2 +2\eta^2Q^2\frac{1}{K}\sum_{m=1}^{M}K_m \left\Vert\bm g_{i,m}(\bm \phi_{i,m}^{t_0^\lambda})\right\Vert^2\\
&\leq  2\eta^2Q \left(4Q^2(1+Q)\rho^2\eta^2+Q\right)\frac{1}{K}\sum_{m=1}^{M}K_m \left\Vert \bm g_{i,m}(\bm \phi_{i,m}^{t_0^\lambda})\right\Vert^2,
\label{EqLemma5Eq1}	
\end{aligned}
\end{equation}
where inequality in $(a)$ is since $\left\Vert \sum_{i=1}^Na_i\right\Vert ^2\leq N\sum_{i=1}^N\left\Vert a_i\right\Vert ^2$,
inequality in $(b)$ follows~\eqref{GlobalGradient}, inequality in $(c)$ is because of Lemma~\ref{Lemma3}.
\end{proof}

\begin{lemma}\label{Lemma6} 
\begin{equation}
\begin{aligned}
\mathbb{E}^{t_0^{\lambda}}\left\Vert \eta \sum_{t=t_0^{\lambda}}^{t_0^{\lambda+1}-1}\bm g_{(2)}^t\right\Vert^2 \leq 2\eta^2Q \left(4Q^2(1+Q)\rho^2\eta^2+Q\right)\frac{1}{K}\sum_{m=1}^{M}K_m\frac{1}{|\mathcal{A}^t_m|}\sum_{n\in \mathcal{A}^t_m}\left\Vert \bm g_{2,m,n}(\bm \phi_{i,m}^{t_0^{\lambda}})\right\Vert^2.
\label{EqLemma6}	
\end{aligned}
\end{equation}
\end{lemma}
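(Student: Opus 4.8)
The plan is to mirror the proof of Lemma~\ref{Lemma5} step by step, since the device-side component $\bm G_{(2)}^t$ has exactly the same structure as $\bm G_{(i)}^t$ except for an additional inner average over the selected device subset $\mathcal{A}_m^t$. First I would apply the Cauchy--Schwarz-type inequality $\left\Vert\sum_{i=1}^N a_i\right\Vert^2 \leq N\sum_{i=1}^N\left\Vert a_i\right\Vert^2$ to the sum of $Q$ terms in $\eta\sum_{t=t_0^\lambda}^{t_0^{\lambda+1}-1}\bm G_{(2)}^t$, pulling out a factor $\eta^2 Q$ and leaving $\sum_t \mathbb{E}^{t_0^\lambda}\left\Vert\bm G_{(2)}^t\right\Vert^2$.

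Next I would add and subtract the anchor $\bm G_{(2)}^{t_0^\lambda}$ inside each norm and invoke $\left\Vert a+b\right\Vert^2\leq 2\left\Vert a\right\Vert^2 + 2\left\Vert b\right\Vert^2$, splitting the bound into a drift term $\sum_t\left\Vert\bm G_{(2)}^t - \bm G_{(2)}^{t_0^\lambda}\right\Vert^2$ and an anchor term $\sum_t\left\Vert\bm G_{(2)}^{t_0^\lambda}\right\Vert^2$. The key structural step is to substitute the definition of $\bm G_{(2)}^t$ from~\eqref{GlobalGradient}, namely the $K_m/K$-weighted group average of $\bm G_{(2,m)}^t = \frac{1}{|\mathcal{A}_m^t|}\sum_{n\in\mathcal{A}_m^t}\bm g_{2,m,n}(\bm\phi_{2,m,n}^t)$, and then apply Jensen's inequality (convexity of $\left\Vert\cdot\right\Vert^2$) twice---once to pass the norm through the weighted group average and once through the uniform device average---so that the drift term is dominated by $\frac{1}{K}\sum_{m=1}^M K_m \frac{1}{|\mathcal{A}_m^t|}\sum_{n\in\mathcal{A}_m^t}\left\Vert\bm g_{2,m,n}(\bm\phi_{2,m,n}^t)-\bm g_{2,m,n}(\bm\phi_{2,m,n}^{t_0^\lambda})\right\Vert^2$.

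Finally I would bound the drift term by Lemma~\ref{Lemma4}, which under $\eta\leq\frac{1}{2Q\rho}$ gives $\sum_t\left\Vert\bm g_{2,m,n}(\bm\phi_{2,m,n}^t)-\bm g_{2,m,n}(\bm\phi_{2,m,n}^{t_0^\lambda})\right\Vert^2\leq 4Q^2(1+Q)\rho^2\eta^2\left\Vert\bm g_{2,m,n}(\bm\phi_{2,m,n}^{t_0^\lambda})\right\Vert^2$, while the anchor term contributes a factor $Q$ after summing the $Q$ constant summands over $[t_0^\lambda, t_0^{\lambda+1}-1]$. Combining these yields the prefactor $4Q^2(1+Q)\rho^2\eta^2 + Q$, matching the stated bound. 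I expect no genuine obstacle here, since Lemma~\ref{Lemma4} already absorbs all of the recursion work; the only point requiring care is the ordering of the two Jensen applications and ensuring that the inner device average is handled by the same convexity argument as the outer group average, so that the device-selection structure $\frac{1}{|\mathcal{A}_m^t|}\sum_{n\in\mathcal{A}_m^t}$ survives intact on the right-hand side rather than being collapsed prematurely.
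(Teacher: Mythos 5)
Your proposal is correct and is essentially the paper's own proof: the paper proves Lemma~6 by simply invoking the same argument as Lemma~5, and your plan reproduces that argument with the one necessary modification (a second application of Jensen's inequality to pass through the inner average over $\mathcal{A}_m^t$), then closes via Lemma~4 in place of Lemma~3. No gaps.
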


\begin{proof}
Using the same idea of proofing Lemma~\ref{Lemma5}, we can prove Lemma~\ref{Lemma6}. Due to the page limitation, we omit the details.
\end{proof}

\subsection{Proof of Theorem~1}
We now prove Theorem~1.

\begin{proof}
Based on~\eqref{GlobalLossFunction} and~\eqref{EqAssumption1_1} in Assumption~1, we can obtain
\begin{equation}
\begin{aligned}
&\left\Vert\nabla F({\bm\theta_1})-\nabla F({\bm\theta_2})\right\Vert\\
&\overset{(a)}{=}\left\Vert\nabla \left(\frac{1}{K}\sum_{m=1}^{M}K_m {F_{m}({\bm\theta_1})}\right)-\nabla \left(\frac{1}{K}\sum_{m=1}^{M}K_m {F_{m}({\bm\theta_2})}\right)\right\Vert\\
&=\frac{1}{K}\sum_{m=1}^{M}K_m \left\Vert\nabla {F_{m}({\bm\theta_1})}-\nabla {F_{m}({\bm\theta_2})}\right\Vert\\
&\overset{(b)}{\leq}\rho \left\Vert{\bm\theta_1}-{\bm\theta_2}\right\Vert,
\label{Theorem1Proof1}
\end{aligned}
\end{equation}
where $(a)$ uses~\eqref{GlobalLossFunction}, and $(b)$ applies ~\eqref{EqAssumption1_1} in Assumption~1 and $K=\sum_{m=1}^M K_m$. The result shows that the global gradient follows \text{$\rho$-Lipschitz}. Based on this, during each global aggregation interval $[t_0, t_0+P-1]$, i.e., $[t_0, t_0^\Lambda-1]$, we have
\begin{equation}
\begin{aligned}
&F(\tilde{\bm\theta}^{t_0^\Lambda-1})-F(\tilde{\bm\theta}^{t_0})\\
&\leq \left\langle \nabla F(\tilde{\bm\theta}^{t_0}), \tilde{\bm\theta}^{t_0^\Lambda-1} -\tilde{\bm\theta}^{t_0}\right\rangle + \frac{\rho}{2}\left\Vert\tilde{\bm\theta}^{t_0^\Lambda-1} -\tilde{\bm\theta}^{t_0}\right\Vert^2\\
&\overset{(a)}\leq -\left\langle \nabla F(\tilde{\bm\theta}^{t_0}), \eta \sum_{t=t_0}^{t_0^\Lambda-1} \bm G^{t}\right\rangle + \frac{\rho}{2}\left\Vert\eta \sum_{t=t_0}^{t_0^\Lambda-1} \bm G^{t}\right\Vert^2\\
&= -\eta\sum_{t=t_0}^{t_0^\Lambda-1} \sum_{i=0}^2\left\langle \nabla F_{(i)}(\tilde{\bm\theta}^{t_0}), \bm G_{(i)}^{t}\right\rangle + \frac{\rho}{2}\left\Vert\eta \sum_{t=t_0}^{t_0^\Lambda-1} \bm G^{t}\right\Vert^2\\
&= -\eta\sum_{t=t_0}^{t_0^\Lambda-1} \sum_{i=0}^1\frac{1}{K}\sum_{m=1}^{M}K_m\left\langle \nabla F_{(i)}(\tilde{\bm\theta}^{t_0}), \bm g_{i,m}(\bm \phi_{i,m}^{t})\right\rangle\\
&\ \ -\eta\sum_{t=t_0}^{t_0^\Lambda-1}\frac{1}{K}\sum_{m=1}^{M}K_m\frac{1}{|\mathcal{A}^t_m|}\sum_{n\in \mathcal{A}^t_m}\left\langle \nabla F_{(2)}(\tilde{\bm\theta}^{t_0}), \bm g_{2,m,n}(\bm \phi_{2,m,n}^{t})\right\rangle + \frac{\rho}{2}\left\Vert\eta \sum_{t=t_0}^{t_0^\Lambda-1} \bm G^{t}\right\Vert^2\\
&= \eta\sum_{t=t_0}^{t_0^\Lambda-1} \sum_{i=0}^1\frac{1}{K}\sum_{m=1}^{M}K_m\left\langle -\nabla F_{(i)}(\tilde{\bm\theta}^{t_0}), \bm g_{i,m}(\bm \phi_{i,m}^{t})-\bm g_{i,m}(\bm \phi_{i,m}^{t_0})\right\rangle\\
&\ \ -\eta\sum_{t=t_0}^{t_0^\Lambda-1} \sum_{i=0}^1\frac{1}{K}\sum_{m=1}^{M}K_m\left\langle \nabla F_{(i)}(\tilde{\bm\theta}^{t_0}), \bm g_{i,m}(\bm \phi_{i,m}^{t_0})\right\rangle\\
&\ \ +\eta\sum_{t=t_0}^{t_0^\Lambda-1}\frac{1}{K}\sum_{m=1}^{M}K_m\frac{1}{|\mathcal{A}^t_m|}\sum_{n\in \mathcal{A}^t_m}\left\langle - \nabla F_{(2)}(\tilde{\bm\theta}^{t_0}), \bm g_{2,m,n}(\bm \phi_{2,m,n}^{t}) -\bm g_{2,m,n}(\bm \phi_{2,m,n}^{t_0})\right\rangle \\
&\ \ -\eta\sum_{t=t_0}^{t_0^\Lambda-1}\frac{1}{K}\sum_{m=1}^{M}K_m\frac{1}{|\mathcal{A}^t_m|}\sum_{n\in \mathcal{A}^t_m}\left\langle \nabla F_{(2)}(\tilde{\bm\theta}^{t_0}), \bm g_{2,m,n}(\bm \phi_{2,m,n}^{t_0})\right\rangle  + \frac{\rho}{2}\left\Vert\eta \sum_{t=t_0}^{t_0^\Lambda-1} \bm G^{t}\right\Vert^2\\
&\overset{(b)}\leq \frac{\eta}{2}\sum_{t=t_0}^{t_0^\Lambda-1} \sum_{i=0}^1\frac{1}{K}\sum_{m=1}^{M}K_m\left[ \left\Vert\nabla F_{(i)}(\tilde{\bm\theta}^{t_0})\right\Vert^2+ \left\Vert \bm g_{i,m}(\bm \phi_{i,m}^{t})-\bm g_{i,m}(\bm \phi_{i,m}^{t_0})\right\Vert^2\right]\\
&\ \ -\eta\sum_{t=t_0}^{t_0^\Lambda-1} \sum_{i=0}^1\frac{1}{K}\sum_{m=1}^{M}K_m\left\langle \nabla F_{(i)}(\tilde{\bm\theta}^{t_0}), \bm g_{i,m}(\bm \phi_{i,m}^{t_0})\right\rangle\\
&\ \ +\frac{\eta}{2}\sum_{t=t_0}^{t_0^\Lambda-1} \frac{1}{K}\sum_{m=1}^{M}K_m\frac{1}{|\mathcal{A}^t_m|}\sum_{n\in \mathcal{A}^t_m}\left[ \left\Vert\nabla F_{(2)}(\tilde{\bm\theta}^{t_0})\right\Vert^2+ \left\Vert \bm g_{2,m,n}(\bm \phi_{2,m,n}^{t})-\bm g_{2,m,n}(\bm \phi_{2,m,n}^{t_0})\right\Vert^2\right]\\
&\ \ -\eta\sum_{t=t_0}^{t_0^\Lambda-1}\frac{1}{K}\sum_{m=1}^{M}K_m\frac{1}{|\mathcal{A}^t_m|}\sum_{n\in \mathcal{A}^t_m}\left\langle \nabla F_{(2)}(\tilde{\bm\theta}^{t_0}), \bm g_{2,m,n}(\bm \phi_{2,m,n}^{t_0})\right\rangle + \frac{\rho}{2}\left\Vert\eta \sum_{t=t_0}^{t_0^\Lambda-1} \bm G^{t}\right\Vert^2,
\label{Theorem1Proof2}
\end{aligned}
\end{equation}
where inequality $(a)$ comes from~\eqref{Appendix_GlobalUpdate} and $(b)$ applies $A\cdot B \leq \frac{A^2+B^2}{2}$. Taking conditional expectation at $t_0$ on~\eqref{Theorem1Proof2}, we obtain
\begin{align}
&\mathbb{E}^{t_0}[F(\tilde{\bm\theta}^{t_0^\Lambda-1})]-F(\tilde{\bm\theta}^{t_0})\notag\\
&\leq \frac{\eta}{2}\sum_{t=t_0}^{t_0^\Lambda-1} \sum_{i=0}^1\frac{1}{K}\sum_{m=1}^{M}K_m\left[ \left\Vert\nabla F_{(i)}(\tilde{\bm\theta}^{t_0})\right\Vert^2+ \mathbb{E}^{t_0}\left\Vert \bm g_{i,m}(\bm \phi_{i,m}^{t})-\bm g_{i,m}(\bm \phi_{i,m}^{t_0})\right\Vert^2\right]\notag\\
&\ \ -\eta\sum_{t=t_0}^{t_0^\Lambda-1} \sum_{i=0}^1\frac{1}{K}\sum_{m=1}^{M}K_m\left\langle \nabla F_{(i)}(\tilde{\bm\theta}^{t_0}), \bm g_{i,m}(\bm \phi_{i,m}^{t_0})\right\rangle\notag\\
&\ \ +\frac{\eta}{2}\sum_{t=t_0}^{t_0^\Lambda-1} \frac{1}{K}\sum_{m=1}^{M}K_m\frac{1}{|\mathcal{A}^t_m|}\sum_{n\in \mathcal{A}^t_m}\left[ \left\Vert\nabla F_{(2)}(\tilde{\bm\theta}^{t_0})\right\Vert^2+ \mathbb{E}^{t_0}\left\Vert \bm g_{2,m,n}(\bm \phi_{2,m,n}^{t})-\bm g_{2,m,n}(\bm \phi_{2,m,n}^{t_0})\right\Vert^2\right]\notag\\
&\ \ -\eta\sum_{t=t_0}^{t_0^\Lambda-1}\frac{1}{K}\sum_{m=1}^{M}K_m\frac{1}{|\mathcal{A}^t_m|}\sum_{n\in \mathcal{A}^t_m}\left\langle \nabla F_{(2)}(\tilde{\bm\theta}^{t_0}), \bm g_{2,m,n}(\bm \phi_{2,m,n}^{t_0})\right\rangle + \frac{\rho}{2}\mathbb{E}^{t_0}\left\Vert\eta \sum_{t=t_0}^{t_0^\Lambda-1} \bm G^{t}\right\Vert^2\notag\\
&\overset{(a)}\leq \frac{\eta}{2}\sum_{t=t_0}^{t_0^\Lambda-1} \sum_{i=0}^1\frac{1}{K}\sum_{m=1}^{M}K_m\left\Vert\nabla F_{(i)}(\tilde{\bm\theta}^{t_0})\right\Vert^2+ \frac{\eta}{2}\sum_{t=t_0}^{t_0^\Lambda-1} \sum_{i=0}^1\frac{1}{K}\sum_{m=1}^{M}K_m\mathbb{E}^{t_0}\left\Vert \bm g_{i,m}(\bm \phi_{i,m}^{t})-\bm g_{i,m}(\bm \phi_{i,m}^{t_0})\right\Vert^2\notag\\
&\ \ -\eta\sum_{t=t_0}^{t_0^\Lambda-1} \sum_{i=0}^1\frac{1}{K}\sum_{m=1}^{M}K_m\left\Vert\nabla F_{(i)}(\tilde{\bm\theta}^{t_0})\right\Vert^2\notag\\
&\ \ +\frac{\eta}{2}\sum_{t=t_0}^{t_0^\Lambda-1} \frac{1}{K}\sum_{m=1}^{M}K_m\frac{1}{|\mathcal{A}^t_m|}\sum_{n\in \mathcal{A}^t_m} \left\Vert\nabla F_{(2)}(\tilde{\bm\theta}^{t_0})\right\Vert^2+ \frac{\eta}{2}\sum_{t=t_0}^{t_0^\Lambda-1} \frac{1}{K}\sum_{m=1}^{M}K_m\frac{1}{|\mathcal{A}^t_m|}\sum_{n\in \mathcal{A}^t_m}\!\mathbb{E}^{t_0}\left\Vert \bm g_{2,m,n}(\bm \phi_{2,m,n}^{t})-\bm g_{2,m,n}(\bm \phi_{2,m,n}^{t_0})\right\Vert^2\notag\\
&\ \ -\eta\sum_{t=t_0}^{t_0^\Lambda-1}\frac{1}{K}\sum_{m=1}^{M}K_m\frac{1}{|\mathcal{A}^t_m|}\sum_{n\in \mathcal{A}^t_m}\left\Vert\nabla F_{(2)}(\tilde{\bm\theta}^{t_0})\right\Vert^2 + \frac{\rho}{2}\mathbb{E}^{t_0}\left\Vert\eta \sum_{t=t_0}^{t_0^\Lambda-1} \bm G^{t}\right\Vert^2\notag\\
&\leq -\frac{\eta}{2}\sum_{t=t_0}^{t_0^\Lambda-1} \sum_{i=0}^1\frac{1}{K}\sum_{m=1}^{M}K_m\left\Vert\nabla F_{(i)}(\tilde{\bm\theta}^{t_0})\right\Vert^2+ \frac{\eta}{2}\sum_{t=t_0}^{t_0^\Lambda-1} \sum_{i=0}^1\frac{1}{K}\sum_{m=1}^{M}K_m\mathbb{E}^{t_0}\left\Vert \bm g_{i,m}(\bm \phi_{i,m}^{t})-\bm g_{i,m}(\bm \phi_{i,m}^{t_0})\right\Vert^2\notag\\
&\ \ -\frac{\eta}{2}\sum_{t=t_0}^{t_0^\Lambda-1} \frac{1}{K}\sum_{m=1}^{M}K_m\frac{1}{|\mathcal{A}^t_m|}\sum_{n\in \mathcal{A}^t_m} \left\Vert\nabla F_{(2)}(\tilde{\bm\theta}^{t_0})\right\Vert^2+ \frac{\eta}{2}\sum_{t=t_0}^{t_0^\Lambda-1} \frac{1}{K}\sum_{m=1}^{M}K_m\frac{1}{|\mathcal{A}^t_m|}\!\sum_{n\in \mathcal{A}^t_m}\!\mathbb{E}^{t_0}\!\left\Vert \bm g_{2,m,n}(\bm \phi_{2,m,n}^{t})\!-\!\bm g_{2,m,n}(\bm \phi_{2,m,n}^{t_0})\right\Vert^2\notag\\
&\ \  + \frac{\rho}{2}\mathbb{E}^{t_0}\left\Vert\eta \sum_{t=t_0}^{t_0^\Lambda-1} \bm G^{t}\right\Vert^2\notag\\
&\overset{(b)}\leq -\frac{\eta}{2}\sum_{\lambda=0}^{\Lambda-1}\sum_{t=t_0^{\lambda}}^{t_0^{\lambda+1}-1} \sum_{i=0}^1\frac{1}{K}\sum_{m=1}^{M}K_m\left\Vert\nabla F_{(i)}(\tilde{\bm\theta}^{t_0})\right\Vert^2+ \frac{\eta}{2}\sum_{\lambda=0}^{\Lambda-1}\sum_{t=t_0^{\lambda}}^{t_0^{\lambda+1}-1}\sum_{i=0}^1\frac{1}{K}\sum_{m=1}^{M}K_m\mathbb{E}^{t_0}\left\Vert \bm g_{i,m}(\bm \phi_{i,m}^{t})-\bm g_{i,m}(\bm \phi_{i,m}^{t_0})\right\Vert^2\notag\\
&\ \ -\frac{\eta}{2}\sum_{\lambda=0}^{\Lambda-1}\sum_{t=t_0^{\lambda}}^{t_0^{\lambda+1}-1} \frac{1}{K}\sum_{m=1}^{M}K_m\frac{1}{|\mathcal{A}^t_m|}\sum_{n\in \mathcal{A}^t_m} \left\Vert\nabla F_{(2)}(\tilde{\bm\theta}^{t_0})\right\Vert^2\notag\\
&\ \ + \frac{\eta}{2}\sum_{\lambda=0}^{\Lambda-1}\sum_{t=t_0^{\lambda}}^{t_0^{\lambda+1}-1} \frac{1}{K}\sum_{m=1}^{M}K_m\frac{1}{|\mathcal{A}^t_m|}\!\sum_{n\in \mathcal{A}^t_m}\!\mathbb{E}^{t_0}\!\left\Vert \bm g_{2,m,n}(\bm \phi_{2,m,n}^{t})\!-\!\bm g_{2,m,n}(\bm \phi_{2,m,n}^{t_0})\right\Vert^2 + \frac{\rho}{2}\mathbb{E}^{t_0}\left\Vert\eta \sum_{\lambda=0}^{\Lambda-1}\sum_{t=t_0^{\lambda}}^{t_0^{\lambda+1}-1} \bm G^{t}\right\Vert^2\notag\\
&\leq -\frac{\eta}{2}\sum_{\lambda=0}^{\Lambda-1}\sum_{t=t_0^{\lambda}}^{t_0^{\lambda+1}-1} \sum_{i=0}^1\frac{1}{K}\sum_{m=1}^{M}K_m\left\Vert\nabla F_{(i)}(\tilde{\bm\theta}^{t_0})\right\Vert^2+ \frac{\eta}{2}\sum_{\lambda=0}^{\Lambda-1}\sum_{t=t_0^{\lambda}}^{t_0^{\lambda+1}-1}\sum_{i=0}^1\frac{1}{K}\sum_{m=1}^{M}K_m\mathbb{E}^{t_0}\left\Vert \bm g_{i,m}(\bm \phi_{i,m}^{t})-\bm g_{i,m}(\bm \phi_{i,m}^{t_0})\right\Vert^2\notag\\
&\ \ -\frac{\eta}{2}\sum_{\lambda=0}^{\Lambda-1}\sum_{t=t_0^{\lambda}}^{t_0^{\lambda+1}-1} \frac{1}{K}\sum_{m=1}^{M}K_m\frac{1}{|\mathcal{A}^t_m|}\sum_{n\in \mathcal{A}^t_m} \left\Vert\nabla F_{(2)}(\tilde{\bm\theta}^{t_0})\right\Vert^2 \notag\\
&\ \ +\frac{\eta}{2}\sum_{\lambda=0}^{\Lambda-1}\sum_{t=t_0^{\lambda}}^{t_0^{\lambda+1}-1} \frac{1}{K}\sum_{m=1}^{M}K_m\frac{1}{|\mathcal{A}^t_m|}\!\sum_{n\in \mathcal{A}^t_m}\!\mathbb{E}^{t_0}\!\left\Vert \bm g_{2,m,n}(\bm \phi_{2,m,n}^{t})\!-\!\bm g_{2,m,n}(\bm \phi_{2,m,n}^{t_0})\right\Vert^2 + \frac{\rho\Lambda}{2}\sum_{\lambda=0}^{\Lambda-1}\sum_{i=0}^2\mathbb{E}^{t_0}\left\Vert\eta \sum_{t=t_0^{\lambda}}^{t_0^{\lambda+1}-1} \bm G_{(i)}^{t}\right\Vert^2,
\label{Theorem1Proof3}
\end{align}
where $(a)$ is because of~\eqref{EqAssumption2_1} in Assumption~2 and $(b)$ applies $\sum_{t=t_0}^{t_0^\Lambda-1}=\sum_{\lambda=0}^{\Lambda-1}\sum_{t=t_0^{\lambda}}^{t_0^{\lambda+1}-1}$.

Applying Lemmas~\ref{Lemma1}--\ref{Lemma6} into~\eqref{Theorem1Proof3}, we have 
\begin{align}
&\mathbb{E}^{t_0}[F(\tilde{\bm\theta}^{t_0^\Lambda-1})]-F(\tilde{\bm\theta}^{t_0})\notag\\
&\overset{(a)}\leq -\frac{\eta}{2}\sum_{\lambda=0}^{\Lambda-1}\sum_{t=t_0^{\lambda}}^{t_0^{\lambda+1}-1} \sum_{i=0}^1\frac{1}{K}\sum_{m=1}^{M}K_m\left\Vert\nabla F_{(i)}(\tilde{\bm\theta}^{t_0})\right\Vert^2+ 2Q^2\left(1+Q\right)\rho^2\eta^3 \sum_{\lambda=0}^{\Lambda-1}\sum_{i=0}^1\frac{1}{K}\sum_{m=1}^{M}K_m\left\Vert \bm g_{i,m}(\bm \phi_{i,m}^{t_0})\right\Vert^2\notag\\
&\ \ -\frac{\eta}{2}\sum_{\lambda=0}^{\Lambda-1}\sum_{t=t_0^{\lambda}}^{t_0^{\lambda+1}-1} \frac{1}{K}\sum_{m=1}^{M}K_m\frac{1}{|\mathcal{A}^t_m|}\sum_{n\in \mathcal{A}^t_m} \left\Vert\nabla F_{(2)}(\tilde{\bm\theta}^{t_0})\right\Vert^2+ 2Q^2(1+Q)\rho^2\eta^3 \sum_{\lambda=0}^{\Lambda-1}\frac{1}{K}\sum_{m=1}^{M}K_m\frac{1}{|\mathcal{A}^t_m|}\!\sum_{n\in \mathcal{A}^t_m}\!\!\left\Vert \bm g_{2,m,n}(\bm \phi_{2,m,n}^{t_0})\right\Vert^2\notag\\
&\ \ +\rho\Lambda\eta^2 Q\left(4Q^2\left(1+Q\right)\rho^2\eta^2+Q\right)\sum_{\lambda=0}^{\Lambda-1}\frac{1}{K}\sum_{m=1}^{M}K_m \left\Vert \bm g_{i,m}(\bm \phi_{i,m}^{t_0})\right\Vert^2\notag\\
&\ \ +\rho\Lambda\eta^2Q\left(4Q^2\left(1+Q\right)\rho^2\eta^2+Q\right)\sum_{\lambda=0}^{\Lambda-1}\frac{1}{K}\sum_{m=1}^{M}K_m\frac{1}{|\mathcal{A}^t_m|}\sum_{n\in \mathcal{A}^t_m} \left\Vert \bm g_{2,m,n}(\bm \phi_{i,m}^{t_0})\right\Vert^2\notag\\
&\leq -\frac{\eta}{2}\sum_{\lambda=0}^{\Lambda-1}\sum_{t=t_0^{\lambda}}^{t_0^{\lambda+1}-1} \sum_{i=0}^1\frac{1}{K}\sum_{m=1}^{M}K_m\left\Vert\nabla F_{(i)}(\tilde{\bm\theta}^{t_0})\right\Vert^2 -\frac{\eta}{2}\sum_{\lambda=0}^{\Lambda-1}\sum_{t=t_0^{\lambda}}^{t_0^{\lambda+1}-1} \frac{1}{K}\sum_{m=1}^{M}K_m\frac{1}{|\mathcal{A}^t_m|}\sum_{n\in \mathcal{A}^t_m} \left\Vert\nabla F_{(2)}(\tilde{\bm\theta}^{t_0})\right\Vert^2\notag\\
&\ \ + \left(\rho\Lambda\eta^2Q^2+2Q^2\left(2\rho\Lambda\eta^2Q+\eta\right)\left(1+Q\right)\rho^2\eta^2\right) \sum_{\lambda=0}^{\Lambda-1}\sum_{i=0}^1\frac{1}{K}\sum_{m=1}^{M}K_m\left\Vert \bm g_{i,m}(\bm \phi_{i,m}^{t_0})\right\Vert^2\notag\\
&\ \ + \left(\rho\Lambda\eta^2Q^2+2Q^2\left(2\rho\Lambda\eta^2Q+\eta\right)\left(1+Q\right)\rho^2\eta^2\right)\sum_{\lambda=0}^{\Lambda-1} \frac{1}{K}\sum_{m=1}^{M}K_m\frac{1}{|\mathcal{A}^t_m|}\!\sum_{n\in \mathcal{A}^t_m}\!\!\left\Vert \bm g_{2,m,n}(\bm \phi_{2,m,n}^{t_0})\right\Vert^2\notag\\
&\overset{(b)}\leq -\frac{\eta}{2}\sum_{\lambda=0}^{\Lambda-1}\sum_{t=t_0^{\lambda}}^{t_0^{\lambda+1}-1} \sum_{i=0}^1\frac{1}{K}\sum_{m=1}^{M}K_m\left\Vert\nabla F_{(i)}(\tilde{\bm\theta}^{t_0})\right\Vert^2 -\frac{\eta}{2}\sum_{\lambda=0}^{\Lambda-1}\sum_{t=t_0^{\lambda}}^{t_0^{\lambda+1}-1} \frac{1}{K}\sum_{m=1}^{M}K_m\frac{1}{|\mathcal{A}^t_m|}\sum_{n\in \mathcal{A}^t_m} \left\Vert\nabla F_{(2)}(\tilde{\bm\theta}^{t_0})\right\Vert^2\notag\\
&\ \ + \left(\rho\Lambda\eta^2Q^2+2Q^2\left(2\rho\Lambda\eta^2Q+\eta\right)\left(1+Q\right)\rho^2\eta^2\right) \sum_{\lambda=0}^{\Lambda-1}\sum_{i=0}^1\frac{1}{K}\sum_{m=1}^{M}K_m\left(\delta^2+\left\Vert \nabla_{(i)}F({\tilde{\bm\theta}^{t_0}})\right\Vert^2\right)\notag\\
&\ \ + \left(\rho\Lambda\eta^2Q^2+2Q^2\left(2\rho\Lambda\eta^2Q+\eta\right)\left(1+Q\right)\rho^2\eta^2\right)\sum_{\lambda=0}^{\Lambda-1} \frac{1}{K}\sum_{m=1}^{M}K_m\frac{1}{|\mathcal{A}^t_m|}\!\sum_{n\in \mathcal{A}^t_m}\left(\delta^2+\left\Vert \nabla_{(2)}F({\tilde{\bm\theta}^{t_0}})\right\Vert^2\right)\notag\\
&\overset{(c)}\leq -\frac{\eta P}{2} \sum_{i=0}^1\frac{1}{K}\sum_{m=1}^{M}K_m\left\Vert\nabla F_{(i)}(\tilde{\bm\theta}^{t_0})\right\Vert^2 -\frac{\eta P}{2} \frac{1}{K}\sum_{m=1}^{M}K_m\frac{1}{|\mathcal{A}^t_m|}\sum_{n\in \mathcal{A}^t_m} \left\Vert\nabla F_{(2)}(\tilde{\bm\theta}^{t_0})\right\Vert^2\notag\\
&\ \ + \frac{\eta P}{2} \left(2\rho\Lambda\eta Q+4Q\left(2\rho\Lambda\eta Q+1\right)\left(1+Q\right)\rho^2\eta^2\right)\sum_{i=0}^1\frac{1}{K}\sum_{m=1}^{M}K_m\left(\delta^2+\left\Vert \nabla_{(i)}F({\tilde{\bm\theta}^{t_0}})\right\Vert^2\right)\notag\\
&\ \ + \frac{\eta P}{2} \left(2\rho\Lambda\eta Q+4Q\left(2\rho\Lambda\eta Q+1\right)\left(1+Q\right)\rho^2\eta^2\right)\frac{1}{K}\sum_{m=1}^{M}K_m\frac{1}{|\mathcal{A}^t_m|}\!\sum_{n\in \mathcal{A}^t_m}\left(\delta^2+\left\Vert \nabla_{(2)}F({\tilde{\bm\theta}^{t_0}})\right\Vert^2\right)\notag\\
&\leq -\frac{\eta P}{2}\left(1- 2\rho\Lambda\eta Q-4Q\left(2\rho\Lambda\eta Q+1\right)\left(1+Q\right)\rho^2\eta^2\right)\left\Vert\nabla F(\tilde{\bm\theta}^{t_0^\Lambda-1})\right\Vert^2 + \frac{3\eta P}{2}\left(2\rho\Lambda\eta Q+4Q\left(2\rho\Lambda\eta Q+1\right)\left(1+Q\right)\rho^2\eta^2\right) \delta^2\notag\\
&\leq -\frac{\eta P}{2}\left(1-2P\rho\eta - 8Q^2\rho^2\eta^2 - 16Q^2P\rho^3\eta^3\right) \left\Vert\nabla F(\tilde{\bm\theta}^{t_0^\Lambda-1})\right\Vert^2+ \frac{3\eta P}{2}\left(2P\rho\eta + 8Q^2\rho^2\eta^2 + 16Q^2P\rho^3\eta^2\eta\right) \delta^2\notag\\
&\overset{(d)}\leq -\frac{\eta P}{2}\left(1-2P\rho\eta - 8Q^2\rho^2\eta^2 - 16Q^2P\rho^3\eta^3\right) \left\Vert\nabla F(\tilde{\bm\theta}^{t_0^\Lambda-1})\right\Vert^2+ \frac{3\eta P}{2}\left(2P\rho\eta + 8Q^2\rho^2\eta^2 + 8Q^2\rho^2\eta^2\right) \delta^2\notag\\
&\leq -\frac{\eta P}{2}\left(1-2P\rho\eta - 8Q^2\rho^2\eta^2 - 16Q^2P\rho^3\eta^3\right) \left\Vert\nabla F(\tilde{\bm\theta}^{t_0^\Lambda-1})\right\Vert^2+ \frac{3\eta P}{2}\left(2P\rho\eta + 16Q^2\rho^2\eta^2\right) \delta^2.
\label{Theorem1Proof4}
\end{align}
Here $(a)$ follows Lemmas~\ref{Lemma3}-\ref{Lemma6}, $(b)$ applies Lemmas~\ref{Lemma1}-\ref{Lemma2} and $\bm \phi_{0,m}^{t} = \bm \phi_{1,m}^{t}=\bm\phi_{2,m,n}^{t} = \tilde{\bm\theta}^{t}$ when $t=t_0$, $(c)$ is because $\sum_{\lambda=0}^{\Lambda-1}\sum_{t=t_0^{\lambda}}^{t_0^{\lambda+1}-1} 1 = \Lambda Q = P$, and $(d)$ is on the setting $\eta\leq \frac{1}{2P\rho}$. 

Choosing $\eta\leq \frac{1}{8P\rho}$, we can further bound the R.H.S. expression of~\eqref{Theorem1Proof4} as follow
\begin{equation}
\begin{aligned}
&\mathbb{E}^{t_0}[F(\tilde{\bm\theta}^{t_0^\Lambda-1})]-F(\tilde{\bm\theta}^{t_0})\\
&\leq -\frac{\eta P}{2}\left(1-\frac{1}{4}-\frac{1}{8\Lambda^2}-\frac{1}{32\Lambda^2}\right) \left\Vert\nabla F(\tilde{\bm\theta}^{t_0})\right\Vert^2+ \frac{3\eta P}{2}\left(2P\rho\eta + 16Q^2\rho^2\eta^2\right) \delta^2\\
&\overset{(a)}\leq -\frac{\eta P}{4}\left\Vert\nabla F(\tilde{\bm\theta}^{t_0})\right\Vert^2+ \frac{3\eta P}{2}\left(2P\rho\eta + 16Q^2\rho^2\eta^2\right) \delta^2,\\
&\leq -\frac{\eta P}{4}\left\Vert\nabla F(\tilde{\bm\theta}^{t_0})\right\Vert^2+ 3P^2\rho\eta^2\delta^2 + 24Q^2P\rho^2\eta^3\delta^2,
\label{Theorem1Proof5}
\end{aligned}
\end{equation}
where $(a)$ comes from $\Lambda \geq 1$.

Rearranging~\eqref{Theorem1Proof5}, we have

\begin{equation}
\begin{aligned}
&\left\Vert\nabla F(\tilde{\bm\theta}^{t_0})\right\Vert^2 \leq \frac{4\left(F(\tilde{\bm\theta}^{t_0})-\mathbb{E}^{t_0}[F(\tilde{\bm\theta}^{t_0^\Lambda-1})]\right)}{\eta P} + 12P\rho\eta\delta^2 + 96Q^2\rho^2\eta^2 \delta^2.
\label{Theorem1Proof6}
\end{aligned}
\end{equation}

Taking the total expectation and averaging all global aggregation intervals, i.e., $t_0 = 0, P, 2P, \cdots, (R-1)P$ and $T=RP$, we obtain
\begin{equation}
\begin{aligned}
&\mathbb{E}\left[\frac{1}{R}\sum_{r=0}^{R-1} \left\Vert\nabla F(\tilde{\bm\theta}^{rP})\right\Vert^2\right]\\
&\leq \mathbb{E}\left[\frac{1}{R}\sum_{r=0}^{R-1}\frac{4\left(F(\tilde{\bm\theta}^{rP})-\mathbb{E}^{t_0}[F(\tilde{\bm\theta}^{(r+1)P-1})]\right)}{\eta P}\right]+ \mathbb{E}\left[\frac{1}{R}\sum_{r=0}^{R-1}\left(12P\rho\eta\delta^2 + 96Q^2\rho^2\eta^2 \delta^2\right)\right]\\
&\leq \frac{4\left(F(\tilde{\bm\theta}^{0})-\mathbb{E}[F(\tilde{\bm\theta}^{T})]\right)}{\eta T} + 12P\rho\eta\delta^2 + 96Q^2\rho^2\eta^2 \delta^2.
\label{Theorem1Proof7}
\end{aligned}
\end{equation}

\end{proof}

\section{}
\label{Appendix_B}
The CNN model used in the proposed HSGD algorithm and all baselines is shown in Fig.~\ref{CNN_model_1}.

\begin{figure*}[h]
		\centering
		\includegraphics[width=1\linewidth]{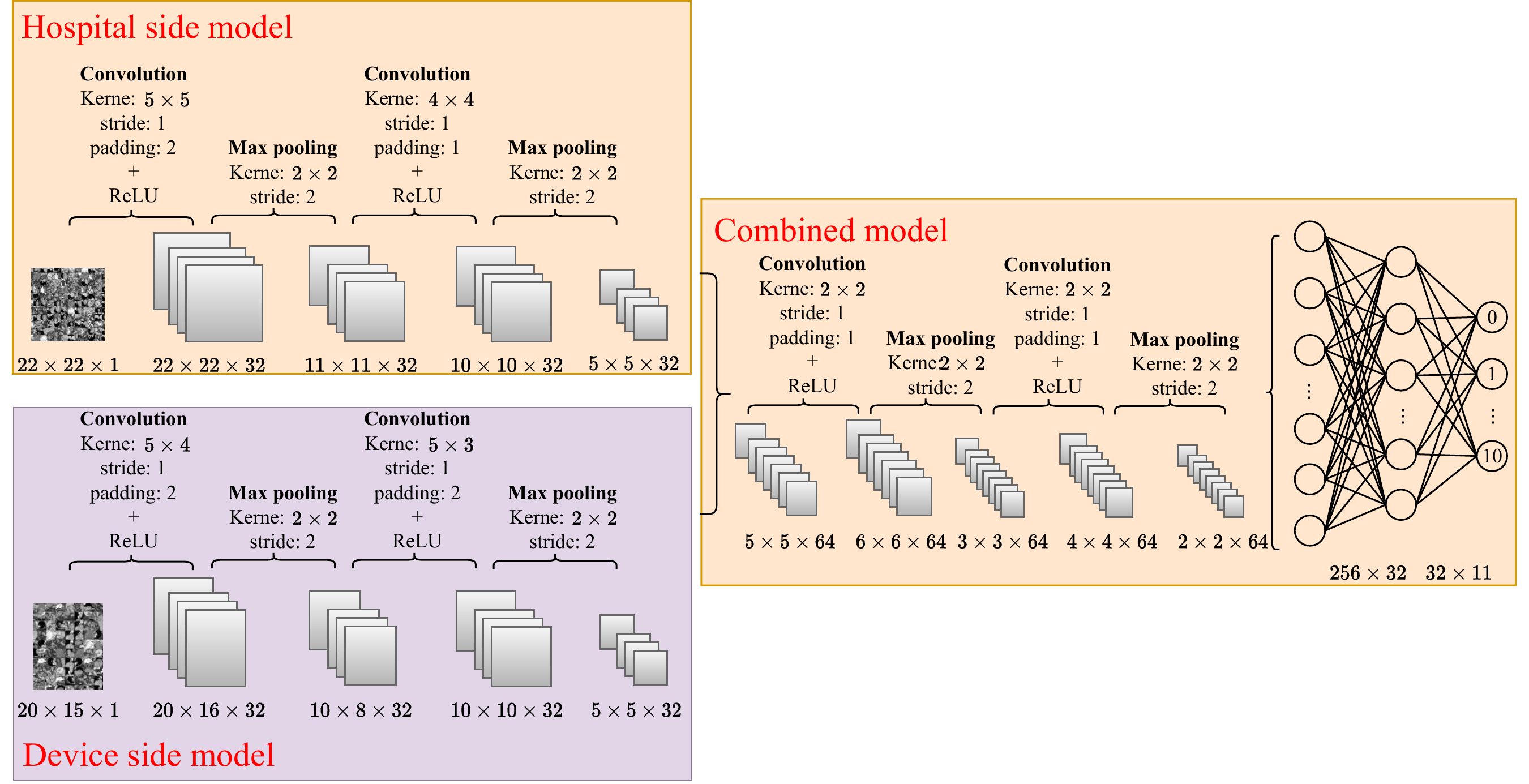} 
		\captionsetup{font={scriptsize}}
		\caption{The CNN model used in the proposed HSGD algorithm and baselines. The entire model consists of the hospital side model, the device side model, and the combined model.}
		\label{CNN_model_1}
\end{figure*}

\end{document}